\newtheorem{theorem}{Theorem}
\newtheorem{lemma}{Lemma}
\newtheorem{corollary}{Corollary}
\newtheorem{remark}{Remark}
\newcommand{\reals}{\mathbb{R}}
\newcommand{\E}{\mathbb{E}}
\newcommand{\be}{\mathbf{e}}
\newcommand{\bx}{\mathbf{x}}
\newcommand{\bw}{\mathbf{w}}
\newcommand{\bu}{\mathbf{u}}
\newcommand{\bv}{\mathbf{v}}
\newcommand{\bz}{\mathbf{z}}
\newcommand{\Ocal}{\mathcal{O}}
\newcommand{\Gcal}{\mathcal{G}}
\newcommand{\Xcal}{\mathcal{X}}
\newcommand{\Fcal}{\mathcal{F}}
\newcommand{\Hcal}{\mathcal{H}}
\newcommand{\Rcal}{\mathcal{R}}
\newcommand{\Ncal}{\mathcal{N}}
\newcommand{\Wcal}{\mathcal{W}}
\newcommand{\norm}[1]{\|#1\|}
\newcommand{\ep}{\epsilon}
\newcommand{\p}{\mathbb{P}}
\newcommand{\secref}[1]{Sec.~\ref{#1}}
\newcommand{\subsecref}[1]{Subsection~\ref{#1}}
\renewcommand{\eqref}[1]{Eq.~(\ref{#1})}
\newcommand{\lemref}[1]{Lemma~\ref{#1}}
\newcommand{\corollaryref}[1]{Corollary~\ref{#1}}
\newcommand{\thmref}[1]{Thm.~\ref{#1}}
\title{Size-Independent Sample Complexity of Neural Networks}
\author{
	Noah Golowich\\
	Harvard University
	\and
	Alexander Rakhlin\\
	MIT
	\and
	Ohad Shamir\\
	Weizmann Institute of Science\\
	and Microsoft Research
}
\date{}
\begin{document}
	
	\maketitle
	
	\begin{abstract}
		We study the sample complexity of learning neural networks, by 
		providing 
		new bounds on their Rademacher complexity assuming norm constraints on 
		the 
		parameter matrix of each layer. Compared to previous work, these 
		complexity bounds have improved dependence on the network depth, and 
		under 
		some additional assumptions, are fully independent of the network size 
		(both depth and width). These results are derived using some novel 
		techniques, which may be of independent interest.
	\end{abstract}
	
	\section{Introduction}
	
	One of the major challenges involving neural networks is explaining their 
	ability to generalize well, even if they are very large and have the 
	potential 
	to overfit the training data 
	\citep{neyshabur2014search,zhang2016understanding}. 
	Learning theory teaches us that this must be due to some inductive bias, 
	which 
	constrains one to learn networks of specific configurations (either 
	explicitly, 
	e.g., via regularization, or implicitly, via the algorithm used to train 
	them). 
	However, understanding the nature of 
	this inductive bias is still largely an open problem. 
	
	A useful starting point is to consider the much more restricted class of 
	linear 
	predictors ($\bx\mapsto \bw^\top\bx$). For this class, we have a very good 
	understanding of how its generalization behavior is dictated by the 
	\emph{norm} 
	of $\bw$ . In 
	particular, assuming that $\norm{\bw}\leq M$ (where $\norm{\cdot}$ 
	signifies 
	Euclidean norm), and the distribution is such that
	$\norm{\bx}\leq B$ almost surely, it is well-known that the generalization 
	error (w.r.t. Lipschitz losses) given $m$ training examples scales as 
	$\Ocal(MB/\sqrt{m})$, completely independent of the dimension of $\bw$. 
	Thus,
	it is very natural to ask whether in the more general case of neural 
	networks, 
	one can obtain similar ``size-independent'' results (independent of the 
	networks' depth and width), under appropriate norm constraints on the 
	parameters. This is also a natural question, considering that the size of 
	modern neural networks
	is often much larger than the number of training examples.
	
	Classical results on the sample complexity of neural networks do not 
	satisfy this desideratum, and have a strong explicit dependence on the 
	network size. For example, bounds relying on the VC dimension (see 
	\citet{anthony2009neural}) strongly depend on both the depth and the width 
	of the network, and can be trivial once the number of parameters exceeds 
	the number of training examples. Scale-sensitive bounds, which rely on the 
	magnitude of the network parameters, can alleviate the dependence on the 
	width (see \citet{bartlett1998sample,anthony2009neural}). However, most 
	such bounds in the literature have a strong (often exponential) dependence 
	on the network depth. To give one recent example, \citet{neyshabur2015norm} 
	use Rademacher complexity tools to show that if the 
	parameter matrices $W_1,\ldots,W_d$ in each of the $d$ layers have 
	Frobenius 
	norms $\norm{\cdot}_F$ upper-bounded by $M_F(1),\ldots,M_F(d)$ 
	respectively, and under 
	suitable 
	assumptions on the activation functions, the generalization error scales 
	(with 
	high probability) as
	\begin{equation}\label{eq:ney15}
	\Ocal\left(\frac{B2^d\prod_{j=1}^{d}M_F(j)}{\sqrt{m}}\right)~.
	\end{equation}
	Although this bound has no explicit dependence on the network width (that 
	is, 
	the dimensions of $W_1,\ldots,W_d$), it has a very strong, exponential 
	dependence on the network depth $d$, even if $M_F(j)\leq 1$ for all $j$. 
	\citet{neyshabur2015norm} also showed that 
	this dependence can sometimes be avoided for anti-symmetric activations, 
	but 
	unfortunately this is a non-trivial assumption, which is not satisfied for 
	common activations such as the ReLU. \citet{bartlett2017spectrally} use a 
	covering numbers argument to show a bound scaling as
	\begin{equation}
	\label{eq:bartlett2017}
	\tilde{\Ocal}\left(\frac{B\left(\prod_{j=1}^{d}\norm{W_j}\right)
		\left(\sum_{j=1}^{d}\left(\frac{\norm{W_j^T}_{2,1}}{\norm{W_j}}\right)
		^{2/3}\right)^{3/2}}{\sqrt{m}}\right)~,
	\end{equation}
	where $\norm{W}$ denotes the spectral norm of $W$, 
	$\norm{W^T}_{2,1}:=\sum_{l} \sqrt{\sum_k W(l,k)^2}$ denotes the $1$-norm 
	of the 2-norms of the rows of $W$, and where we ignore factors 
	logarithmic in $m$ and the network width. Unlike \eqref{eq:ney15}, here 
	there 
	is no explicit exponential dependence on the depth. However, there is still 
	a 
	strong and unavoidable polynomial dependence: To see this, note that for 
	any 
	$W_j$, $\frac{\sum_{l} \sqrt{\sum_k W_j(l,k)^2}}{\norm{W_j}}\geq 
	\frac{\norm{W_j}_F}{\norm{W_j}}\geq 1$, so the bound above can never be 
	smaller 
	than 
	\[
	\tilde{\Ocal}\left(B\left(\prod_{j=1}^{d}\norm{W_j}\right)
	\sqrt{\frac{d^3}{m}}\right)~.
	\]
	In particular, even if we assume that 
	$B\left(\prod_{j=1}^{d}\norm{W_j}\right)$ is a constant, the bound becomes 
	trivial once $d\geq \Omega(m^{1/3})$.
	Finally, and using the same 
	notation, \citet{neyshabur2017pac} utilize a PAC-Bayesian analysis to prove 
	a 
	bound scaling as
	\begin{equation}\label{eq:neyshabur2017pac}
	\tilde{\Ocal}\left(B\left(\prod_{j=1}^{d}\norm{W_j}\right)\sqrt{\frac{d^2h\sum_{i=1}^{d}
			\frac{\norm{W_j}_F^2}{\norm{W_j}^2}}{m}}\right)~,
	\end{equation}
	where $h$ denotes the network width.\footnote{\cite{bartlett2017spectrally} 
	note that this PAC-Bayesian bound is never better than the bound in  
	\eqref{eq:bartlett2017} derived from a covering numbers argument.}
	Again, since $\frac{\norm{W_j}_F}{\norm{W_j}}$ (the ratio of the Frobenius 
	and spectral norms) is always at least $1$ for any matrix, this bound can 
	never be smaller than 
	$
	\tilde{\Ocal}\left(B\left(\prod_{j=1}^{d}\norm{W_j}\right)\sqrt{\frac{d^3h}{m}}\right)
	$,
	and becomes trivial once $d\sqrt[3]{h}\geq\Omega(m^{1/3})$. To summarize, 
	although some of the bounds above have logarithmic or no dependence on the 
	network width, we are not aware of a bound in the literature which avoids a 
	strong 
	dependence on the depth, even if various norms are controlled. 
	
	Can this depth dependency be avoided, assuming the norms are sufficiently 
	constrained? We argue that in some cases, it 
	must be true. To see this, let us return to the well-understood case of 
	linear 
	predictors, and consider generalized linear predictors of the form
	\[
	\left\{\bx\mapsto 
	\sigma\left(\bw^\top\bx\right)~:~\norm{\bw}\leq
	M\right\}~,
	\]
	where $\sigma(z)=\max\{0,z\}$ is the ReLU function. Like plain-vanilla 
	linear 
	predictors, the generalization error of this class is 
	well-known to be $\Ocal(MB/\sqrt{m})$, assuming the inputs satisfy 
	$\norm{\bx}\leq B$ almost surely. However, it is not difficult to show that 
	this class can be equivalently written as a class of ``ultra-thin'' ReLU 
	networks of 
	the form
	\begin{equation}\label{eq:class}
	\left\{\bx\mapsto \sigma(w_{d}\cdot\sigma(\ldots 
	w_2\cdot\sigma(\bw_1^\top 
	\bx)))~:~\norm{\bw_1}\cdot\prod_{j=2}^{d}|w_j|\leq M\right\}~,
	\end{equation}
	(where $\bw_1$ is a vector and $w_2,\ldots,w_d$ are scalars), where the 
	depth 
	$d$ is arbitrary. Therefore, the sample complexity of this class must also 
	scale as $\Ocal(MB/\sqrt{m})$: This depends on the norm product $M$, but is 
	completely independent of the network depth $d$ 
	as well as the dimension of $\bw_1$. We argue that a ``satisfactory'' 
	sample 
	complexity analysis should have similar independence properties when 
	applied on 
	this class.
	
	In more general neural networks, the vector $\bw_1$ and scalars 
	$w_2,\ldots,w_d$ become matrices $W_1,\ldots,W_d$, and the simple 
	observation 
	above no longer applies. However, using the same intuition, it is natural 
	to 
	try 
	and derive generalization bounds by controlling 
	$\prod_{j=1}^{d}\norm{W_j}$, where $\norm{\cdot}$ is a suitable matrix 
	norm. Perhaps the simplest choice is the spectral norm (and indeed, a 
	product 
	of spectral norms was utilized in some of the previous results mentioned 
	earlier). However, as we formally show in \secref{sec:lowerbound}, the 
	spectral 
	norm alone is too weak to get size-independent bounds, even if the network 
	depth is small. Instead, we show that controlling other suitable norms can 
	indeed lead to better depth dependence, or even fully size-independent 
	bounds, 
	improving on earlier works. Specifically, we make the following 
	contributions:
	\begin{itemize}
		\item In \secref{sec:depthpoly}, we show that the exponential depth 
		dependence in Rademacher complexity-based analysis (e.g. 
		\citet{neyshabur2015norm}) can be avoided by applying contraction to a 
		slightly different object than what has become standard since the work 
		of 
		\cite{bartlett2002rademacher}. For 
		example, for networks with parameter matrices of Frobenius norm at most 
		$M_F(1),\ldots,M_F(d)$, the bound in \eqref{eq:ney15} can be improved 
		to 
		\begin{equation}\label{eq:frobintro}
		\Ocal\left(
		\frac{B\sqrt{d}\prod_{j=1}^d 
			M_F(j)}{\sqrt m}
		\right)~.
		\end{equation}
		The technique can also be applied to other types of norm constraints. 
		For 
		example, if we consider an $\ell_1/\ell_{\infty}$ setup, corresponding 
		to 
		the class of depth-$d$ networks, where the $1$-norm of each row of 
		$W_j$ is 
		at most $M(j)$, we attain a bound of
		\[
		\Ocal\left(\frac{B\sqrt{d+\log(n)}\cdot\prod_{j=1}^{d}M(j)}{\sqrt{m}}\right),
		\]
		where $n$ is the input dimension. Again, the dependence on $d$ is 
		polynomial and quite mild. In contrast, \citet{neyshabur2015norm} 
		studied a 
		similar setup and only managed to obtain an exponential dependence on 
		$d$.
		\item In \secref{sec:depthind}, we develop a generic technique to 
		convert 
		depth-dependent bounds to depth-independent bounds, assuming some 
		control 
		over any Schatten norm of the parameter matrices (which includes, for 
		instance, the 
		Frobenius norm and the trace norm as special cases). The key 
		observation we 
		utilize is that the prediction function computed by such networks can 
		be 
		approximated by the composition of a 
		shallow network and univariate Lipschitz functions. For example, again 
		assuming that the Frobenius norms of the layers are bounded by 
		$M_F(1),\ldots,M_F(d)$, we can 
		further improve \eqref{eq:frobintro} to
		\begin{equation}\label{eq:frobb}
		\tilde{\Ocal}\left(B\left(\prod_{j=1}^d 
		M_F(j)\right)\cdot\min\left\{\sqrt{\frac{\log\left(\frac{1}{\Gamma}
				\prod_{j=1}^{d}M_F(j)\right)}{\sqrt{m}}}~,~
		\sqrt{\frac{d}{m}}\right\}
		\right)~,
		\end{equation}
		where $\Gamma$ is a lower bound on the product of the \emph{spectral} 
		norms of the parameter matrices (note that $\Gamma\leq \prod_{j}M_F(j)$ 
		always). 
		Assuming that $\prod_{j}M_F(j)\leq 
		R$ for some $R$, this can be upper bounded by 
		$\tilde{\Ocal}(BR\sqrt{\log(R/\Gamma)/\sqrt{m}})$, which to the best of 
		our 
		knowledge, is the first explicit bound for standard neural networks 
		which 
		is fully size-independent, assuming only suitable norm constraints. 
		Moreover, it 
		captures the depth-independent sample complexity behavior of the 
		network 
		class in \eqref{eq:class}  discussed earlier. 
		We also apply this technique to get a depth-independent version of the 
		bound in \citep{bartlett2017spectrally}: Specifically, if we assume 
		that the spectral norms satisfy $\norm{W_j}\leq M(j)$ for all $j$, and 
		$\max_j \frac{\norm{W_j^T}_{2,1}}{\norm{W_j}}\leq L$, then the bound in 
		\eqref{eq:bartlett2017} provided by
		\citet{bartlett2017spectrally} becomes
		\[
		\tilde{\Ocal}\left(
		BL\prod_{j=1}^{d}M(j)\cdot\sqrt{\frac{d^3}{m}}\right)~.
		\]
		In contrast, we show the following bound for any $p\geq 1$ (ignoring 
		some 
		lower-order logarithmic factors):
		\[
		\tilde{\Ocal} \left( BL\prod_{j=1}^d M(j)
		\cdot \min \left\{ \frac{\log \left( \frac{1}{\Gamma} \prod_{j=1}^d 
		M_p(j) 
			\right)^{\frac{1}{\frac{2}{3}+p}} 
			}{m^{\frac{1}{2+3p}}},\sqrt{\frac{d^{3}}{ m}} 
		\right\} \right)~,
		\]
		where $M_p(j)$ is an upper bound on the Schatten $p$-norm of $W_j$, and 
		$\Gamma$ is a lower bound on $\prod_{j=1}^{d} \norm{W_j}$. Again, by 
		upper bounding the $\min$ by 
		its first argument, we get a bound independent of the depth $d$, 
		assuming 
		the norms are suitably constrained.
		\item In \secref{sec:lowerbound}, we provide a lower bound, showing 
		that 
		for any $p$, the class of depth-$d$, width-$h$ neural networks, where 
		each 
		parameter matrix $W_j$ has Schatten $p$-norm at most $M_p(j)$, can have 
		Rademacher complexity of at least
		\[
		\Omega\left(
		\frac{B\prod_{j=1}^{d}M_p(j)\cdot 
			h^{\max\left\{0,\frac{1}{2}-\frac{1}{p}\right\}}}{\sqrt{m}}\right)~.
		\]
		This somewhat improves on \citet[Theorem 
		3.6]{bartlett2017spectrally}, which only showed such a result for 
		$p=\infty$ (i.e. with spectral norm control), and without the $h$ term. 
		For 
		$p=2$, it matches the upper bound in \eqref{eq:frobb} in terms of the 
		norm 
		dependencies and $B$. Moreover, it establishes that controlling the  
		spectral norm alone (and indeed, any Schatten $p$-norm control with 
		$p>2$) 
		cannot lead to bounds independent of the size of the network. Finally, 
		the 
		bound shows (similar to \citet{bartlett2017spectrally}) that a 
		dependence 
		on products of norms across layers is generally inevitable.
	\end{itemize}
	
	Besides the above, we provide some additional remarks and observations in 
	\secref{sec:remarks}. Most of our technical proofs are presented in 
	\secref{sec:proofs}.
	
	Finally, we emphasize that the bounds of \secref{sec:depthind} are 
	independent of the network size, only under the assumption that products of 
	norms (or at least ratios of norms) across all layers are bounded by a 
	constant, which is quite restrictive in practice. For example, it is enough 
	that the Frobenius norm of each layer matrix is at least $2$, to get that 
	\eqref{eq:frobb} scales as $2^d$ where $d$ is the number of layers. 
	However, our focus here is to show that at least \emph{some} norm-based 
	assumptions lead to size-independent bounds, and hope these can be weakened 
	in future work.

	\section{Preliminaries}\label{sec:prelim}
	
	\textbf{Notation.} We use bold-faced letters to denote vectors, and capital 
	letters to denote matrices or fixed 
	parameters (which should be clear from context). Given a 
	vector $\bw \in \reals^h$, $\norm{\bw}$ will refer to the Euclidean norm, 
	and for $p \geq 1$, $\| \bw \|_p = \left( \sum_{i=1}^h |\bw_i|^p 
	\right)^{1/p}$ will refer to the $\ell_p$ norm. For a matrix $W$, 
	we use
	$\norm{W}_p$, where $p\in [1,\infty]$, to denote the Schatten $p$-norm 
	(that 
	is, 
	the $p$-norm of the spectrum of $W$, written as a vector). For example, 
	$p=\infty$ 
	refers to the 
	spectral norm, $p=2$ refers to the Frobenius norm, and 
	$p=1$ refers to the trace norm. For the case of the spectral norm, 
	we will drop the $\infty$ subscript, and use just $\norm{W}$. Also, in 
	order to follow standard convention, we use $\norm{W}_F$ to denote the 
	Frobenius norm. Finally, given a matrix W and reals $p,q \geq 1$, we let 
	$\norm{W}_{p,q}:= \left( \sum_k \left(\sum_{j}|W_{j,k}|^p \right)^{q/p} 
	\right)^{1/q}$ denote the $q$-norm of the $p$-norms of the columns of $W$.
	
	\textbf{Neural Networks.}
	Given the domain $\Xcal=\{\bx:\norm{\bx}\leq B\}$ in Euclidean space, we 
	consider  
	(scalar or vector-valued) standard neural networks, of the form
	\[
	\bx\mapsto W_d\sigma_{d-1}(W_{d-1}\sigma_{d-2}(\ldots \sigma_1(W_1\bx))),
	\]
	where each $W_j$ is a parameter matrix, and 
	each 
	$\sigma_j$ is some fixed Lipschitz continuous function between Euclidean 
	spaces, satisfying 
	$\sigma_j(\mathbf{0})=\mathbf{0}$. In the above, we denote $d$ as the depth 
	of 
	the network, and its width $h$ is defined as the maximal row or column 
	dimensions of $W_1,\ldots,W_d$. Without loss of generality, we will assume 
	that 
	$\sigma_j$ has a Lipschitz constant of at most $1$ (otherwise, the 
	Lipschitz constant can be absorbed into the norm constraint of the 
	neighboring 
	parameter matrix). We say that $\sigma$ is element-wise if it can be 
	written as 
	an application of the same univariate function over each coordinate of its 
	input (in which case, somewhat abusing notation, we will also use $\sigma$ 
	to 
	denote that univariate function). We say that $\sigma$ is 
	positive-homogeneous 
	if it is element-wise and satisfies $\sigma(\alpha z) = \alpha \sigma(z)$ 
	for 
	all $\alpha\geq 0$ and $z\in\reals$. An important example of the above are 
	ReLU 
	networks, where every $\sigma_j$ corresponds to applying the 
	(positive-homogeneous) ReLU function $\sigma(z)=\max\{0,z\}$ on each 
	element. 
	To simplify notation, we let $W_b^r$ be shorthand for the matrix tuple 
	$\{W_b,W_{b+1},\ldots,W_r\}$, and $N_{W_b^r}$ denote the function computed 
	by 
	the sub-network composed of layers $b$ through $r$, that is
	\[
	\bx\mapsto W_r\sigma_{r-1}(W_{r-1}\sigma_{r-2}(\ldots \sigma_b(W_b\bx)))~.
	\]
	
	\textbf{Rademacher Complexity.} The results in this paper focus on 
	Rademacher 
	complexity, which is a standard tool to control the uniform convergence 
	(and 
	hence the sample complexity) of given classes of predictors (see 
	\citet{bartlett2002rademacher,shalev2014understanding} for more details). 
	Formally, given a real-valued function class $\Hcal$ and some set of data 
	points 
	$\bx_1,\ldots,\bx_m\in \Xcal$, we define the (empirical) Rademacher 
	complexity 
	$\hat{\Rcal}_m(\Hcal)$ as 
	\begin{equation}\label{eq:raddef}
	\hat{\Rcal}_m(\Hcal) = 
	\E_{\boldsymbol{\epsilon}}\left[\sup_{h\in\Hcal}\frac{1}{m}\sum_{i=1}^{m}
	\varepsilon_i h(\bx_i)\right],
	\end{equation}
	where $\boldsymbol{\varepsilon}=(\varepsilon_1,\ldots,\varepsilon_m)$ is a 
	vector 
	uniformly distributed in $\{-1,+1\}^m$. Our main results provide bounds 
	on the Rademacher complexity (sometimes independent of 
	$\bx_1,\ldots,\bx_m$, as 
	long as they are assumed to have norm at most $B$), with respect to classes 
	of 
	neural networks with various norm constraints. Using standard arguments, 
	such 
	bounds can be converted to bounds on the generalization error, assuming 
	access 
	to a sample of $m$ i.i.d. training examples.

	\section{From Exponential to Polynomial Depth Dependence}
	\label{sec:depthpoly}
	
	To get bounds on the Rademacher complexity of deep neural networks, a 
	reasonable approach (employed in \citet{neyshabur2015norm}) is to use a 
	``peeling'' argument, where the complexity bound for depth $d$ networks is 
	reduced to a complexity bound for depth $d-1$ networks, and then applying 
	the 
	reduction $d$ times. For example, consider the class $\Hcal_d$ of depth-$d$ 
	ReLU real-valued neural networks, with each layer's parameter matrix with 
	Frobenius norm at most $M_F(j)$. 
	Using some straightforward manipulations, it is possible to show that 
	$\hat{\Rcal}_m(\Hcal_d)$, which by definition equals 
	\[
	\E_{\boldsymbol{\epsilon}}\sup_{h\in\Hcal_d}\frac{1}{m}\sum_{i=1}^{m}\epsilon_i
	h(\bx_i)
	~=~
	\E_{\boldsymbol{\epsilon}}\sup_{h\in \Hcal_{d-1}}\sup_{W_d : \| W_d \|_F 
	\leq 
		M_F(d)}\frac{1}{m}\sum_{i=1}^{m}\epsilon_i W_d 
	\sigma(h(\bx_i)),
	\]
	can be upper bounded by
	\begin{equation}\label{eq:dexp}
	M_F(d)\cdot 
	\E_{\boldsymbol{\epsilon}}\sup_{h\in\Hcal_{d-1}}\left\|\frac{1}{m}\sum_{i=1}^{m}\epsilon_i
	\sigma(h(\bx_i))
	\right\|~\leq~
	2M_F(d)\cdot 
	\E_{\boldsymbol{\epsilon}}\sup_{h\in\Hcal_{d-1}}\left\|\frac{1}{m}\sum_{i=1}^{m}\epsilon_i
	h(\bx_i))
	\right\|~.
	\end{equation}
	Iterating this inequality $d$ times, one ends up with a bound scaling as 
	$2^d\prod_{j=1}^{d}M_F(j)$ (as in \citet{neyshabur2015norm}, see also 
	\eqref{eq:ney15}). The exponential $2^d$ factor follows from the $2$ factor 
	in  
	\eqref{eq:dexp}, which in turn follows from applying the Rademacher 
	contraction 
	principle to get rid of the $\sigma$ function. Unfortunately, this $2$ 
	factor is
	generally unavoidable (see the discussion in \citet{ledoux2013probability} 
	following Theorem 4.12).
	
	In this section, we point out a simple trick, which can be used to reduce 
	such 
	exponential depth dependencies to polynomial ones. In a nutshell, using 
	Jensen's inequality, we can rewrite the (scaled) Rademacher complexity 
	$m\cdot\hat{\Rcal}_m(\Hcal)=\E_{\boldsymbol{\epsilon}}\sup_{h\in\Hcal}\sum_{i=1}^{m}\epsilon_i
	h(\bx_i)$ as
	\[
	\frac{1}{\lambda}\log\exp\left(\lambda\cdot 
	\E_{\boldsymbol{\epsilon}}\sup_{h\in\Hcal}
	\sum_{i=1}^{m}\epsilon_i h(\bx_i)
	\right)
	~\leq~
	\frac{1}{\lambda}\log\left(\E_{\boldsymbol{\epsilon}}\sup_{h\in\Hcal}\exp\left(\lambda
	\sum_{i=1}^{m}\epsilon_i h(\bx_i)
	\right)\right),
	\]
	where $\lambda>0$ is an arbitrary parameter. We then perform a ``peeling'' 
	argument similar to before, resulting in a multiplicative $2$ factor after 
	every peeling step. Crucially, these factors accumulate \emph{inside} the 
	log 
	factor, so that the end result contains only a $\log(2^d)=d$ factor, which 
	by 
	appropriate tuning of $\lambda$, can be further reduced to $\sqrt{d}$.
	
	The formalization of this argument depends on the matrix norm we are using, 
	and 
	we will begin with the case of the Frobenius norm. A key technical 
	condition 
	for the argument to work is that we can perform the ``peeling'' inside the 
	$\exp$ function. This is captured by the following lemma:
	\begin{lemma}
		\label{lem:contraction}
		Let $\sigma$ be a $1$-Lipschitz, positive-homogeneous activation 
		function 
		which is applied element-wise (such as the ReLU). Then for any 
		class of vector-valued functions $\Fcal$, 
		and 
		any convex and monotonically increasing function 
		$g:\reals\rightarrow[0,\infty)$,
		\[
		\E_{\boldsymbol{\epsilon}}\sup_{f\in\Fcal,W:\norm{W}_F\leq 
			R}g\left(\left\|\sum_{i=1}^{m}
		\epsilon_i\sigma(Wf(\bx_i))\right\|\right)~\leq~
		2\cdot\E_{\boldsymbol{\epsilon}}\sup_{f\in\Fcal}g\left(R\cdot 
		\left\|\sum_{i=1}^{m}\epsilon_i 
		f(\bx_i)\right\|\right)~.
		\]
	\end{lemma}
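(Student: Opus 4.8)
The plan is to (i) use positive-homogeneity of $\sigma$ to eliminate the supremum over the matrix $W$, replacing it by a supremum over a single unit vector $\bu$; (ii) desymmetrize to drop the absolute value, which is where the factor $2$ will come from; (iii) invoke the Rademacher contraction principle to strip off $\sigma$; and (iv) finish with Cauchy--Schwarz. For step (i), fix $\bepsilon$ and $f\in\Fcal$, write $W$ in terms of its rows $\bw_1,\dots,\bw_k$, and set $r_j=\norm{\bw_j}$ and $\bu_j=\bw_j/r_j$ (an arbitrary unit vector if $r_j=0$). Since $\sigma$ is applied element-wise and is positive-homogeneous, $\sigma(\bw_j^\top f(\bx_i))=r_j\,\sigma(\bu_j^\top f(\bx_i))$, so the $j$-th coordinate of $\sum_i\epsilon_i\sigma(Wf(\bx_i))$ equals $r_j\sum_i\epsilon_i\sigma(\bu_j^\top f(\bx_i))$, and hence
\[
\Big\|\sum_i\epsilon_i\sigma(Wf(\bx_i))\Big\|^2 = \sum_j r_j^2\Big(\sum_i\epsilon_i\sigma(\bu_j^\top f(\bx_i))\Big)^2 \;\le\; \norm{W}_F^2\cdot\sup_{\norm{\bu}\le 1}\Big(\sum_i\epsilon_i\sigma(\bu^\top f(\bx_i))\Big)^2 .
\]
Using $\norm{W}_F\le R$, taking square roots, using that $g$ is nondecreasing, and then taking $\sup$ over $f$ and $W$ (the right-hand side no longer depending on $W$), we bound $\E_{\bepsilon}\sup_{f,W:\norm{W}_F\le R}g\big(\norm{\sum_i\epsilon_i\sigma(Wf(\bx_i))}\big)$ by $\E_{\bepsilon}\sup_{f\in\Fcal,\,\norm{\bu}\le 1}g\big(R\big|\sum_i\epsilon_i\sigma(\bu^\top f(\bx_i))\big|\big)$.

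For step (ii), for any $c\in\reals$ we have $g(R|c|)=\max\{g(Rc),g(-Rc)\}\le g(Rc)+g(-Rc)$ since $g\ge 0$, so the previous expression is at most $\E_{\bepsilon}\sup_{f,\bu}g\big(R\sum_i\epsilon_i\sigma(\bu^\top f(\bx_i))\big)+\E_{\bepsilon}\sup_{f,\bu}g\big(-R\sum_i\epsilon_i\sigma(\bu^\top f(\bx_i))\big)$; by the symmetry of $\bepsilon$ (replace $\bepsilon$ by $-\bepsilon$) the two summands are equal, yielding $2\,\E_{\bepsilon}\sup_{f,\bu}g\big(R\sum_i\epsilon_i\sigma(\bu^\top f(\bx_i))\big)$. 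For step (iii), regard the vectors $(\bu^\top f(\bx_1),\dots,\bu^\top f(\bx_m))$, as $f$ ranges over $\Fcal$ and $\bu$ over the unit ball, as an index set $T\subseteq\reals^m$; since $\sigma$ is $1$-Lipschitz with $\sigma(0)=0$ and $t\mapsto g(Rt)$ is convex and nondecreasing, the one-sided Rademacher contraction principle (see, e.g., \citet{ledoux2013probability}) gives $\E_{\bepsilon}\sup_{t\in T}g\big(R\sum_i\epsilon_i\sigma(t_i)\big)\le\E_{\bepsilon}\sup_{t\in T}g\big(R\sum_i\epsilon_i t_i\big)$. Finally, step (iv): $\sum_i\epsilon_i\bu^\top f(\bx_i)=\bu^\top\big(\sum_i\epsilon_i f(\bx_i)\big)\le\norm{\bu}\,\norm{\sum_i\epsilon_i f(\bx_i)}\le\norm{\sum_i\epsilon_i f(\bx_i)}$, so by monotonicity of $g$ the supremum over $\bu$ becomes vacuous, and we arrive at $2\,\E_{\bepsilon}\sup_{f\in\Fcal}g\big(R\norm{\sum_i\epsilon_i f(\bx_i)}\big)$, which is the claimed bound.

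The main delicate point is the contraction step (iii): it must be invoked with no multiplicative constant in front of the Rademacher sum, which is precisely why step (ii) desymmetrizes first and pays the factor $2$ there — applying the two-sided contraction inequality directly would instead push a factor $2$ inside $g$ (effectively replacing $R$ by $2R$), which is too lossy for the stated inequality. A second, routine point is measurability and attainment of the suprema over $\Fcal$ and over the unit ball, handled in the usual way by restricting to countable dense subsets and passing to the limit. Finally, note that positive-homogeneity of $\sigma$ is used in an essential way in step (i): the identity $\sigma(\bw_j^\top x)=r_j\,\sigma(\bu_j^\top x)$ is exactly what allows the row norms $r_j$ to be pulled out of $\sigma$, collapsing the matrix supremum to a rank-one (single-unit-vector) supremum; for a generic $1$-Lipschitz activation this reduction breaks down.
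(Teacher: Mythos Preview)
Your proof is correct and follows essentially the same route as the paper: reduce the matrix supremum to a single unit-vector supremum via positive-homogeneity, desymmetrize using $g(|z|)\le g(z)+g(-z)$ and the sign-symmetry of $\bepsilon$ to pick up the factor $2$, apply the one-sided Ledoux--Talagrand contraction to strip $\sigma$, and finish with Cauchy--Schwarz. The only cosmetic difference is that the paper observes the supremum over $W$ is \emph{attained} at a rank-one matrix (yielding an equality in your step (i)), whereas you bound it by an inequality---but this has no bearing on the argument.
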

	\begin{proof}
		Letting $\bw_1,\bw_2,\ldots,\bw_h$ be the rows of the matrix $W$, we 
		have
		\begin{align*}
		\left\|\sum_{i=1}^{m}
		\epsilon_i\sigma(Wf(\bx_i))\right\|^2&=
		\sum_{j=1}^{h}\norm{\bw_j}^2\left(\sum_{i=1}^{m}
		\epsilon_i\sigma\left(\frac{\bw_j^\top}{\norm{\bw_j}} 
		f(\bx_i)\right)\right)^2~.\\
		\end{align*}
		The supremum of this over all $\bw_1,\ldots,\bw_h$ such that 
		$\norm{W}_F^2=\sum_{j=1}^{h}\norm{\bw_j}^2\leq R^2$ must be attained 
		when 
		$\norm{\bw_j}=R$ for some $j$, and $\norm{\bw_i}=0$ for all $i\neq j$. 
		Therefore,
		\begin{align*}
		\E_{\boldsymbol{\epsilon}}\sup_{f\in\Fcal,W:\norm{W}_F\leq 
			R}g\left(\left\|\sum_{i=1}^{m}
		\epsilon_i\sigma(Wf(\bx_i))\right\|\right)&=
		\E_{\boldsymbol{\epsilon}}\sup_{f\in\Fcal,\bw:\norm{\bw}=R}g\left(\left|\sum_{i=1}^{m}
		\epsilon_i\sigma(\bw^\top f(\bx_i))\right|\right)~.
		\end{align*}
		Since $g(|z|)\leq g(z)+g(-z)$, this can be upper bounded by
		\[
		\E_{\boldsymbol{\epsilon}}\sup 
		g\left(\sum_{i=1}^{m}\epsilon_i\sigma(\bw^\top f(\bx_i))\right)+
		\E_{\boldsymbol{\epsilon}}\sup 
		g\left(-\sum_{i=1}^{m}\epsilon_i\sigma(\bw^\top f(\bx_i))\right)
		~=~
		2\cdot\E_{\boldsymbol{\epsilon}}\sup 
		g\left(\sum_{i=1}^{m}\epsilon_i\sigma(\bw^\top 
		f(\bx_i))\right),
		\]
		where the equality follows from the symmetry in the distribution of the 
		$\epsilon_i$ random variables. The right hand side in turn can be upper 
		bounded by
		\begin{align*}
		2\cdot\E_{\boldsymbol{\epsilon}}\sup_{f\in\Fcal,\bw:\norm{\bw}=R} 
		g\left(\sum_{i=1}^{m}\epsilon_i 
		\bw^\top f(\bx_i)\right)&~\leq~
		2\cdot\E_{\boldsymbol{\epsilon}}\sup_{f\in\Fcal,\bw:\norm{\bw}=R} 
		g\left(\norm{\bw}\left\|\sum_{i=1}^{m}\epsilon_i 
		f(\bx_i)\right\|\right)\\
		&~=~
		2\cdot\E_{\boldsymbol{\epsilon}} \sup_{f\in\Fcal}g\left(R\cdot 
		\left\|\sum_{i=1}^{m}\epsilon_i 
		f(\bx_i)\right\|\right)~.
		\end{align*}
		(see equation 4.20 in \citet{ledoux2013probability}).
	\end{proof}
	
	With this lemma in hand, we can provide a bound on the Rademacher 
	complexity of 
	Frobnius-norm-bounded neural networks, which is as clean as 
	\eqref{eq:ney15}, 
	but where the $2^d$ factor is replaced by $\sqrt{d}$:
	\begin{theorem}
		\label{thm:sqrtl}
		Let $\Hcal_d$ be the class of real-valued networks of depth $d$ over 
		the 
		domain $\Xcal$, where  each parameter matrix $W_j$ has Frobenius norm 
		at 
		most $M_F(j)$, and with activation functions satisfying
		\lemref{lem:contraction}. Then
		\[
		\hat{\Rcal}_m(\Hcal_d) \leq \frac 1m \prod_{j=1}^d M_F(j) \cdot  
		\left(\sqrt{2\log(2)d}+1\right)\sqrt{\sum_{i=1}^m \|\bx_i\|^2} \leq 
		\frac{B\left(\sqrt{2\log(2)d}+1\right)\prod_{j=1}^d 
			M_F(j)}{\sqrt m}.
		\]
	\end{theorem}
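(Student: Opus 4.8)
The plan is to turn the ``exponential peeling'' sketch preceding the theorem into a formal argument built on \lemref{lem:contraction}. Fix the data $\bx_1,\dots,\bx_m$, abbreviate $A:=\prod_{j=1}^{d}M_F(j)$ and $s:=\sqrt{\sum_{i=1}^{m}\norm{\bx_i}^2}$, and keep $\lambda>0$ as a free parameter to be tuned last. First I would invoke Jensen's inequality for the convex increasing map $z\mapsto e^{\lambda z}$:
\[
m\,\hat{\Rcal}_m(\Hcal_d)=\E_{\bepsilon}\sup_{h\in\Hcal_d}\sum_{i=1}^{m}\epsilon_i h(\bx_i)\le\frac1\lambda\log\E_{\bepsilon}\sup_{h\in\Hcal_d}\exp\left(\lambda\sum_{i=1}^{m}\epsilon_i h(\bx_i)\right).
\]
Every $h\in\Hcal_d$ has the form $h(\bx)=\bw_d^{\top}\sigma(N_{W_1^{d-1}}(\bx))$ with $\norm{\bw_d}\le M_F(d)$ (for a real-valued network the Frobenius norm of the last layer is just the Euclidean norm of its single row). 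By Cauchy--Schwarz $\sum_i\epsilon_i h(\bx_i)\le M_F(d)\norm{\sum_i\epsilon_i\sigma(N_{W_1^{d-1}}(\bx_i))}$, and since $e^{\lambda(\cdot)}$ is increasing, the quantity inside the logarithm is at most $\E_{\bepsilon}\sup\exp\left(\lambda M_F(d)\norm{\sum_i\epsilon_i\sigma(N_{W_1^{d-1}}(\bx_i))}\right)$, where the supremum now runs only over $W_1,\dots,W_{d-1}$.

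Next I would apply \lemref{lem:contraction} a total of $d-1$ times to peel off $W_{d-1},W_{d-2},\dots,W_1$ in turn. For the first peeling, write $N_{W_1^{d-1}}(\bx)=W_{d-1}f(\bx)$ with $f:=\sigma\circ N_{W_1^{d-2}}$, and apply the lemma with $\Fcal$ the class of all such $f$, with $R=M_F(d-1)$, and with $g(z)=\exp(\lambda M_F(d)\,z)$, which is convex, increasing, and nonnegative; this replaces $\sigma(W_{d-1}f(\bx_i))$ by $M_F(d-1)\,f(\bx_i)$ inside the norm at the cost of a factor $2$. Iterating --- updating $g$ at each stage to carry the growing product of norms, and at the last stage taking $\Fcal=\{\mathrm{id}\}$ so that $W_1$ is peeled directly against the inputs --- yields
\[
m\,\hat{\Rcal}_m(\Hcal_d)\le\frac1\lambda\log\left(2^{\,d-1}\,\E_{\bepsilon}\exp\left(\lambda A\,\norm{\sum_{i=1}^{m}\epsilon_i\bx_i}\right)\right).
\]
The crucial point, as advertised in the surrounding text, is that all $d-1$ factors of $2$ now sit inside the logarithm.

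It remains to control $Z:=\norm{\sum_i\epsilon_i\bx_i}$ in the exponential-moment sense. Jensen gives $\E_{\bepsilon}Z\le\sqrt{\E_{\bepsilon}Z^2}=s$, and flipping one $\epsilon_i$ changes $Z$ by at most $2\norm{\bx_i}$, so the bounded-differences (Azuma--Hoeffding) estimate gives $\E_{\bepsilon}\exp(t(Z-\E_{\bepsilon}Z))\le\exp\left(\tfrac18 t^2\sum_i(2\norm{\bx_i})^2\right)=\exp\left(\tfrac12 t^2 s^2\right)$ for every $t$; hence $\E_{\bepsilon}\exp(\lambda A Z)\le\exp\left(\lambda A s+\tfrac12\lambda^2 A^2 s^2\right)$. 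Substituting and simplifying the logarithm,
\[
m\,\hat{\Rcal}_m(\Hcal_d)\le\frac{(d-1)\log 2}{\lambda}+As+\frac{\lambda A^2 s^2}{2}.
\]
Choosing $\lambda=\sqrt{2(d-1)\log 2}\,/(As)$ balances the first and third terms and gives $m\,\hat{\Rcal}_m(\Hcal_d)\le As\left(1+\sqrt{2(d-1)\log 2}\right)\le As\left(\sqrt{2\log(2)d}+1\right)$, which is the first inequality of the theorem; the second follows from $\norm{\bx_i}\le B$, so that $s\le B\sqrt m$.

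The genuinely delicate points are two. First, the book-keeping of the iterated peeling: at every stage one must check that $\Fcal$ is a legitimate ``class of vector-valued functions'' as \lemref{lem:contraction} demands, and that the evolving outer function $g$ stays convex and monotonically increasing --- I expect this to be the main obstacle, more a matter of care than of depth. Second, the sub-Gaussian tail control of $\norm{\sum_i\epsilon_i\bx_i}$, which is exactly what converts the $\log 2^{d-1}$ inside the logarithm into a $\sqrt{d}$ dependence after optimizing $\lambda$. Everything else --- Jensen, Cauchy--Schwarz, the one-dimensional optimization over $\lambda$ --- is routine.
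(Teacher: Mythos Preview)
Your proposal is correct and follows essentially the same route as the paper: Jensen to pass into the $\log$, Cauchy--Schwarz for the top layer, repeated use of \lemref{lem:contraction} to peel the remaining layers inside the exponential, sub-Gaussianity of $\norm{\sum_i\epsilon_i\bx_i}$ via bounded differences, and finally optimizing over $\lambda$. Two minor remarks: (i) you obtain the slightly sharper intermediate factor $2^{d-1}$ whereas the paper writes $2^d$, and (ii) your choice $\lambda=\sqrt{2(d-1)\log 2}/(As)$ degenerates when $d=1$, but that case is handled directly by $\E_{\bepsilon}\norm{\sum_i\epsilon_i\bx_i}\le s$, so the stated bound still holds.
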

	\begin{proof}
		Fix $\lambda>0$, to be chosen later. The Rademacher complexity can be 
		upper 
		bounded as
		\begin{align*}
		m\hat{\Rcal}_m(\Hcal_d) &= \E_{\boldsymbol{\epsilon}} 
		\sup_{N_{W_1^{d-1}},W_d} \sum_{i=1}^m 
		\epsilon_i W_d \sigma_{d-1}(N_{W_1^{d-1}}(\bx_i))  \\
		&\leq \frac{1}{\lambda} \log \E_{\boldsymbol{\epsilon}} \sup 
		\exp\left( \lambda \sum_{i=1}^m 
		\epsilon_i W_d \sigma_{d-1}(N_{W_1^{d-1}}(\bx_i))\right) \\
		&\leq \frac{1}{\lambda} \log \E_{\boldsymbol{\epsilon}} \sup \exp  
		\left( M_F(d)\cdot 
		\left\|\lambda \sum_{i=1}^m \epsilon_i 
		\sigma_{d-1}(N_{W_1^{d-1}}(\bx_i)) \right\| \right)
		\end{align*}
		We write this last expression as
		\begin{align*}
		&\frac{1}{\lambda} \log \E_{\boldsymbol{\epsilon}} \sup_{f, 
			\|W_{d-1}\|_F \leq M_F(d-1)} 
		\exp  \left( M_F(d)\cdot \lambda \left\| \sum_{i=1}^m \epsilon_i 
		\sigma_{d-1}(W_{d-1} f(\bx_i)) \right\| \right)\\
		&\leq \frac{1}{\lambda} \log \left( 2\cdot 
		\E_{\boldsymbol{\epsilon}} \sup_{f} \exp \left( 
		M_F(d)\cdot M_F(d-1) \cdot \lambda \left\| \sum_{i=1}^m \epsilon_i 
		f(\bx_i) \right\| \right) \right)
		\end{align*}
		where $f$ ranges over all possible functions $\sigma_{d-2}\circ 
		N_{W_1^{d-2}} (\bx)$. Here we applied Lemma~\ref{lem:contraction} with 
		$g(\alpha) = \exp\{M_F(d)\lambda \cdot \alpha\}$. Repeating the 
		process, we 
		arrive at
		\begin{align}
		m\hat{\Rcal}_m(\Hcal_d) &\leq \frac{1}{\lambda} \log \left(2^d \cdot 
		\E_{\boldsymbol{\epsilon}} 
		\exp \left( M \lambda \left\| \sum_{i=1}^m \epsilon_i \bx_i \right\| 
		\right) \right)
		\label{eq:softmax_bd}
		\end{align}
		where $M= \prod_{j=1}^d M_F(j)$.
		Define a random variable
		$$Z = M\cdot\left\|\sum_{i=1}^{m}\epsilon_i \bx_i\right\|,$$
		(random as a function of the random variables 
		$\epsilon_1,\ldots,\epsilon_m$). Then
		\begin{align}
		\label{eq:softmax_bd2}
		\frac{1}{\lambda} \log \left\{ 2^d \cdot \E \exp \lambda Z \right\} 
		&= \frac{d\log(2)}{\lambda} +  \frac{1}{\lambda} \log \left\{ \E 
		\exp 
		\lambda (Z-\E Z) \right\} + \E Z.
		\end{align}
		By Jensen's inequality, $\E[Z]$ can be upper bounded by
		\[
		M\sqrt{\E_{\boldsymbol{\epsilon}}\left[\left\|\sum_{i=1}^{m}\epsilon_i\bx_i\right\|^2\right]}
		~=~
		M\sqrt{\E_{\boldsymbol{\epsilon}}\left[\sum_{i,i'=1}^{m}\epsilon_i\epsilon_{i'}\bx_i^\top\bx_{i'}\right]}
		~=~
		M\sqrt{\sum_{i=1}^{m}\norm{\bx_i}^2}.
		\]
		To handle the $ \log \left\{ \E 
		\exp 
		\lambda (Z-\E Z) \right\}$ term in \eqref{eq:softmax_bd2}, note that 
		$Z$ is 
		a deterministic function of the i.i.d. random variables 
		$\epsilon_1,\ldots,\epsilon_m$, and satisfies
		\begin{align*}
		Z(\epsilon_1,\ldots,\epsilon_i, \ldots,\epsilon_m) - 
		Z(\epsilon_1,\ldots,-\epsilon_i, \ldots,\epsilon_m) \leq 2M \|\bx_i\|~.
		\end{align*}
		This means that $Z$ satisfies a bounded-difference condition, which by 
		the 
		proof of Theorem 6.2 in \citep{boucheron2013concentration}, implies 
		that 
		$Z$ is sub-Gaussian, with variance factor
		$$v = \frac{1}{4} \sum_{i=1}^m (2M\|\bx_i\|)^2 = M^2\sum_{i=1}^m 
		\|\bx_i\|^2,$$
		and satisfies
		$$ \frac{1}{\lambda} \log \left\{ \E \exp \lambda (Z-\E Z) \right\} 
		\leq 
		\frac{1}{\lambda} \frac{\lambda^2 M^2 \sum_{i=1}^m \|\bx_i\|^2}{2} =  
		\frac{\lambda M^2  \sum_{i=1}^m \|\bx_i\|^2}{2}.$$
		Choosing $\lambda = \frac{\sqrt{2\log(2)d}}{M \sqrt{\sum_{i=1}^m \| 
		\bx_i 
				\|^2}}$ 
		and using the above, we get that \eqref{eq:softmax_bd} can be upper 
		bounded 
		as follows:
		\begin{align*}
		\frac{1}{\lambda} \log \left\{ 2^d \cdot \E \exp \lambda Z \right\} 
		&\leq \E Z + \sqrt{2\log(2)d} \cdot M\sqrt{\sum_{i=1}^m \|\bx_i\|^2} 
		\leq M 
		\left(\sqrt{2\log(2)d}+1\right)\sqrt{\sum_{i=1}^m \|\bx_i\|^2}~,
		\end{align*}
		from which the result follows.
	\end{proof}
	
	\begin{remark}
		We note that for simplicity, the bound in \thmref{thm:sqrtl} is stated 
		for 
		real-valued networks, but 
		the argument easily carries over to vector-valued networks, composed 
		with some 
		real-valued Lipschitz loss function. In that case, one uses a variant 
		of 
		\lemref{lem:contraction} to peel off the losses, and then proceed in 
		the same 
		manner as in the proof of \thmref{thm:sqrtl}. We omit the precise 
		details for 
		brevity.
	\end{remark}
	
	A result similar to the above can also be derived for other  
	matrix norms. For example, given a matrix $W$, let $\norm{W}_{1,\infty}$ 
	denote the maximal $1$-norm of its rows, and consider the class $\Hcal_d$ 
	of depth-$d$ networks, where each parameter matrix $W_j$ satisfies 
	$\norm{W_j}_{1,\infty}\leq M(j)$ for all $j$ (this corresponds to a 
	setting, also studied in \citet{neyshabur2015norm}, where the $1$-norm of 
	the weights of each neuron in the network is bounded). In this case, we can 
	derive a variant 
	of \lemref{lem:contraction}, which in fact does not require 
	positive-homogeneity of the activation function:
	\begin{lemma}
		\label{lem:contraction_l1}
		Let $\sigma$ be a $1$-Lipschitz  activation function with $\sigma(0)=0$,
		applied element-wise. Then for any vector-valued class $\Fcal$, 
		and 
		any convex and monotonically increasing function 
		$g:\reals\rightarrow[0,\infty)$,
		\begin{equation}
		\label{eq:contr_lemma2_statement}
		\E_{\boldsymbol{\epsilon}}\sup_{f\in\Fcal,W:\norm{W}_{1,\infty}\leq 
			R}g\left(\left\|\sum_{i=1}^{m}
		\epsilon_i\sigma(Wf(\bx_i))\right\|_\infty\right)~\leq~
		2\cdot\E_{\boldsymbol{\epsilon}}\sup_{f\in\Fcal}g\left(R\cdot 
		\left\|\sum_{i=1}^{m}\epsilon_i 
		f(\bx_i)\right\|_\infty\right)~,
		\end{equation}
		where $\norm{\cdot}_{\infty}$ denotes the vector infinity norm.
	\end{lemma}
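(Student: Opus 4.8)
The plan is to mimic the proof of \lemref{lem:contraction}, with the role played there by positive-homogeneity replaced here by the duality between the $\ell_1$ and $\ell_\infty$ norms. First I would write the $\ell_\infty$ norm out coordinate-wise: letting $\bw_1,\ldots,\bw_h$ be the rows of $W$ and using that $\sigma$ acts element-wise,
\[
\left\|\sum_{i=1}^{m}\epsilon_i\sigma(Wf(\bx_i))\right\|_\infty ~=~ \max_{j=1,\ldots,h}\left|\sum_{i=1}^{m}\epsilon_i\sigma(\bw_j^\top f(\bx_i))\right|~.
\]
The key structural observation is that the constraint $\norm{W}_{1,\infty}\leq R$ decouples across rows — it simply asserts $\norm{\bw_j}_1\leq R$ for every $j$ — so there is no need to normalize rows and factor out their lengths, which is precisely the step in \lemref{lem:contraction} that required positive-homogeneity. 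Since $g$ is monotonically increasing, $g(\max_j|\cdot|)=\max_j g(|\cdot|)$, and taking the supremum over $W$ amounts to independently optimizing each row over $\{\bw:\norm{\bw}_1\leq R\}$; hence the left-hand side of \eqref{eq:contr_lemma2_statement} equals
\[
\E_{\bepsilon}\sup_{f\in\Fcal,\,\bw:\norm{\bw}_1\leq R}g\left(\left|\sum_{i=1}^{m}\epsilon_i\sigma(\bw^\top f(\bx_i))\right|\right)~.
\]

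Next I would strip the absolute value exactly as in \lemref{lem:contraction}: using $g(|z|)\leq g(z)+g(-z)$ together with the symmetry of the distribution of $\bepsilon$, the displayed quantity is at most $2\,\E_{\bepsilon}\sup_{f,\bw:\norm{\bw}_1\leq R}g(\sum_i\epsilon_i\sigma(\bw^\top f(\bx_i)))$. Then I would apply the contraction principle for convex increasing functions (equation 4.20 in \citet{ledoux2013probability}) with the index set $\{(\bw^\top f(\bx_1),\ldots,\bw^\top f(\bx_m)) : f\in\Fcal,\ \norm{\bw}_1\leq R\}\subseteq\reals^m$ and the $1$-Lipschitz map $\sigma$, which vanishes at $0$ by assumption; this removes $\sigma$ and bounds the above by $2\,\E_{\bepsilon}\sup_{f,\bw}g(\sum_i\epsilon_i\bw^\top f(\bx_i))$. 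Finally, by H\"older's inequality ($\ell_1$/$\ell_\infty$ duality),
\[
\sum_{i=1}^{m}\epsilon_i\bw^\top f(\bx_i)~=~\bw^\top\!\left(\sum_{i=1}^{m}\epsilon_i f(\bx_i)\right)~\leq~\norm{\bw}_1\left\|\sum_{i=1}^{m}\epsilon_i f(\bx_i)\right\|_\infty~\leq~R\left\|\sum_{i=1}^{m}\epsilon_i f(\bx_i)\right\|_\infty~,
\]
and monotonicity of $g$ yields the claimed bound.

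The argument is essentially routine once the first reduction is in place; the only point needing a little care is verifying that the supremum over the full matrix $W$ genuinely collapses to a supremum over a single row vector — this uses both the row-wise decoupling of $\norm{\cdot}_{1,\infty}$ and the monotonicity of $g$ to interchange $g$ with the coordinate maximum — and that the hypotheses of the Ledoux--Talagrand contraction inequality, namely $g$ convex and increasing and $\sigma$ a contraction fixing $0$, are exactly what is assumed. No positive-homogeneity of $\sigma$ is used anywhere, which is why the lemma holds for arbitrary $1$-Lipschitz activations with $\sigma(0)=0$.
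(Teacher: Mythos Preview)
Your proposal is correct and follows essentially the same route as the paper: reduce the $\ell_\infty$ norm and the row-decoupled constraint $\norm{W}_{1,\infty}\leq R$ to a supremum over a single row $\bw$ with $\norm{\bw}_1\leq R$, split the absolute value via $g(|z|)\leq g(z)+g(-z)$ and symmetry of $\bepsilon$, and then finish by the Ledoux--Talagrand contraction inequality together with $\ell_1/\ell_\infty$ duality. The paper's proof is terser (it just says ``concluded exactly as in \lemref{lem:contraction}'') but the content is identical, and you have correctly identified that the $\ell_1/\ell_\infty$ pairing replaces the positive-homogeneity trick used in \lemref{lem:contraction}.
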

	
	Using the same technique as before, we can use this lemma to get a bound on 
	the 
	Rademacher complexity for $\Hcal_d$:
	\begin{theorem}
		\label{thm:sqrtl1}
		Let $\Hcal_d$ be the class of real-valued networks of depth $d$ over 
		the 
		domain $\Xcal$, where $\norm{W_j}_{1,\infty}\leq M(j)$ for all 
		$j\in\{1,\ldots,d\}$, and with activation functions satisfying the 
		condition of \lemref{lem:contraction_l1}. Then
		\begin{align*}
		\hat{\Rcal}_m(\Hcal_d) \leq     
		\frac{2}{m}\prod_{j=1}^{d}M(j)\cdot
		\sqrt{d+1+\log(n)}\cdot\sqrt{\max_{j\in\{1,\ldots,n\}}\sum_{i=1}^{m}x_{i,j}^2}
		~\leq~ \frac{2B\sqrt{d+1+\log(n)}\cdot\prod_{j=1}^{d}M(j)}{\sqrt{m}}~,
		\end{align*}
		where $x_{i,j}$ is the $j$-th coordinate of the vector $\bx_i$. 
	\end{theorem}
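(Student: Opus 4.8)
The plan is to transcribe the proof of \thmref{thm:sqrtl}, replacing the Frobenius-norm contraction (\lemref{lem:contraction}) by its $\ell_1/\ell_\infty$ analogue (\lemref{lem:contraction_l1}) and carrying the vector $\infty$-norm in place of the Euclidean norm throughout. Fix $\lambda>0$, to be optimized at the end, and start from $m\hat{\Rcal}_m(\Hcal_d)=\E_{\boldsymbol{\epsilon}}\sup_{N_{W_1^{d-1}},\,W_d}\sum_{i=1}^m\epsilon_i W_d\,\sigma_{d-1}(N_{W_1^{d-1}}(\bx_i))$. Applying Jensen's inequality exactly as in \eqref{eq:softmax_bd} pulls the supremum inside an exponential. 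Since the network is real-valued, $W_d$ is a single row whose $\ell_1$-norm is at most $M(d)$, so H\"older's inequality gives $\sum_i\epsilon_i W_d\sigma_{d-1}(\cdot)=W_d\big(\sum_i\epsilon_i\sigma_{d-1}(\cdot)\big)\le M(d)\,\|\sum_i\epsilon_i\sigma_{d-1}(\cdot)\|_\infty$, and therefore
\[
m\hat{\Rcal}_m(\Hcal_d)\ \le\ \frac{1}{\lambda}\log\E_{\boldsymbol{\epsilon}}\sup\exp\!\Big(M(d)\,\lambda\,\big\|\textstyle\sum_{i=1}^m\epsilon_i\,\sigma_{d-1}(N_{W_1^{d-1}}(\bx_i))\big\|_\infty\Big).
\]

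Next I would peel the layers $W_{d-1},W_{d-2},\dots,W_1$ one at a time. Just before peeling layer $k$ the quantity inside the logarithm has the form $\E_{\boldsymbol{\epsilon}}\sup_{f,\,W_k}g\big(\|\sum_i\epsilon_i\sigma_k(W_k f(\bx_i))\|_\infty\big)$ with $g(\alpha)=\exp\!\big(\lambda\prod_{j>k}M(j)\cdot\alpha\big)$, which is convex and monotonically increasing, so \lemref{lem:contraction_l1} replaces it by $2\,\E_{\boldsymbol{\epsilon}}\sup_f g\big(M(k)\,\|\sum_i\epsilon_i f(\bx_i)\|_\infty\big)$; that is, peeling layer $k$ multiplies the norm constraint by $M(k)$ and accumulates a factor $2$ \emph{inside} the logarithm (note that \lemref{lem:contraction_l1} does not require positive-homogeneity, so no extra assumption on $\sigma$ is needed). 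After the $d-1$ peeling steps we reach the $\ell_1/\ell_\infty$ counterpart of \eqref{eq:softmax_bd}:
\[
m\hat{\Rcal}_m(\Hcal_d)\ \le\ \frac{1}{\lambda}\log\!\Big(2^{d-1}\cdot\E_{\boldsymbol{\epsilon}}\exp\!\big(M\lambda\,\|\textstyle\sum_{i=1}^m\epsilon_i\bx_i\|_\infty\big)\Big),\qquad M:=\prod_{j=1}^d M(j).
\]

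The only step that is not a routine transcription of \thmref{thm:sqrtl} — and the main obstacle — is bounding $\E_{\boldsymbol{\epsilon}}\exp(M\lambda\|\sum_i\epsilon_i\bx_i\|_\infty)$: here $\|\sum_i\epsilon_i\bx_i\|_\infty=\max_{j\le n}|Z_j|$ with $Z_j:=\sum_{i=1}^m\epsilon_i x_{i,j}$, so an exponential of a maximum over $n$ coordinates appears, which the scalar concentration argument of \thmref{thm:sqrtl} does not directly handle (and using McDiarmid on $\|\sum_i\epsilon_i\bx_i\|_\infty$ directly would give the worse variance proxy $M^2\sum_i\|\bx_i\|_\infty^2$ instead of $M^2\max_j\sum_i x_{i,j}^2$). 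Instead I would bound the exponential of the maximum by a sum: $\exp(M\lambda\max_j|Z_j|)\le\sum_{j=1}^n\big(\exp(M\lambda Z_j)+\exp(-M\lambda Z_j)\big)$. Each $Z_j$ is a sum of independent bounded zero-mean terms, so the standard sub-Gaussian moment-generating-function estimate (exactly the one applied to $Z$ in the proof of \thmref{thm:sqrtl}) gives $\E_{\boldsymbol{\epsilon}}\exp(\pm M\lambda Z_j)\le\exp\!\big(\tfrac12 M^2\lambda^2\sum_i x_{i,j}^2\big)\le\exp\!\big(\tfrac12 M^2\lambda^2 V\big)$ with $V:=\max_{j\le n}\sum_{i=1}^m x_{i,j}^2$, hence $\E_{\boldsymbol{\epsilon}}\exp(M\lambda\|\sum_i\epsilon_i\bx_i\|_\infty)\le 2n\,\exp\!\big(\tfrac12 M^2\lambda^2 V\big)$.

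Substituting back gives $m\hat{\Rcal}_m(\Hcal_d)\le\big((d-1)\log2+\log(2n)\big)/\lambda+\tfrac12 M^2\lambda V$, and taking $\lambda=\sqrt{2((d-1)\log2+\log(2n))}\,/(M\sqrt V)$ makes the right-hand side $M\sqrt{2((d-1)\log2+\log(2n))\,V}=M\sqrt{2(d\log2+\log n)\,V}$. Using $\log2<1$, this is at most $\sqrt2\,M\sqrt{(d+1+\log n)\,V}\le 2M\sqrt{(d+1+\log n)\,V}$; dividing by $m$ yields the first displayed inequality of the theorem (with $V=\max_j\sum_i x_{i,j}^2$), and the second follows from $\sum_i x_{i,j}^2\le\sum_i\|\bx_i\|^2\le mB^2$ for every $j$, so that $\sqrt V\le B\sqrt m$. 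Thus all the novelty is concentrated in the $\exp$-of-a-maximum estimate, which is precisely where the $\log n$ term comes from; the rest is identical to \thmref{thm:sqrtl} with \lemref{lem:contraction_l1} substituted for \lemref{lem:contraction}.
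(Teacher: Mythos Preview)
Your proposal is correct and follows essentially the same approach as the paper: peel the layers via \lemref{lem:contraction_l1} inside the exponential, then bound $\E_{\boldsymbol\epsilon}\exp(M\lambda\|\sum_i\epsilon_i\bx_i\|_\infty)$ by pushing the maximum over coordinates out as a sum to obtain the $\log n$ term, and optimize over $\lambda$. Your bookkeeping is in fact slightly tighter (you track $2^{d-1}$ rather than $2^d$ and keep the factor $\tfrac12$ in the sub-Gaussian MGF bound), but the argument and final constants match the paper's.
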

	The proofs of the theorem, as well as \lemref{lem:contraction_l1}, appear 
	in 
	\secref{sec:proofs}. 
	
	The constructions used in the results of this section use the function 
	$g(z)=\exp(\lambda z)$ together with its inverse $g^{-1}(z)= 
	\frac{1}{\lambda}\log(z)$, to get depth dependencies 
	scaling as $g^{-1}(2^d)$. Thus, it might be tempting to try 
	and further improve the depth dependence, by using other functions $g$ for 
	which $g^{-1}$ increases sub-logarithmically. Unfortunately, the argument 
	still requires us to control  $\E_{\boldsymbol{\epsilon}} 
	g\left(\left\|\sum_{i=1}^{m}\epsilon_i \bx_i\right\|\right)$, which is 
	difficult if $g$ increases more than exponentially. In the next section, we 
	introduce a different idea, which under 
	suitable assumptions, allows us to get rid of the depth dependence 
	altogether.

	\section{From Depth Dependence to Depth Independence}
	\label{sec:depthind}

	In this section, we develop a general result, which allows one to convert 
	any 
	depth-dependent bound on the Rademacher complexity of neural networks, to a
	depth-independent one, assuming that the Schatten $p$-norms of the 
	parameter 
	matrices (for any $p\in [1,\infty)$) is sufficiently controlled. We develop 
	and 
	formalize 
	the main 
	result in \subsecref{subsec:general}, and provide applications in 
	\subsecref{subsec:applications}. The proofs the results in this section  
	appear 
	in \secref{sec:proofs}. 
	
	\subsection{A General Result}\label{subsec:general}
	
	To motivate our approach, let us consider a special case of depth-$d$ 
	networks, 
	where 
	\begin{itemize}
		\item Each parameter matrix $W_1,\ldots,W_{d-1}$ is constrained to be 
		diagonal and of size $h\times h$.
		\item The Frobenius norm of every $W_1,\ldots,W_d$ 
		is at most $1$. 
		\item All activation functions are the identity (so the network 
		computes a 
		linear function).
	\end{itemize}
	Letting 
	$\bw_i$ be 
	the diagonal of $W_i$, such networks are equivalent to 
	\[
	\bx\mapsto (\bw_d\circ \bw_{d-1}\circ\ldots\circ \bw_1)^\top \bx~,
	\]
	where $\circ$ denotes element-wise product.
	Therefore, if we would like the network to compute a non-trivial function, 
	we 
	clearly need that $\bw_d\circ\ldots\circ \bw_1$ be bounded away from zero 
	(e.g., not exponentially small in $d$), while still satisfying the 
	constraint 
	$\norm{\bw_j} \leq 1$ for all $j$. In fact, the only way to satisfy both 
	requirements simultaneously is if $\bw_1,\ldots,\bw_d$ are all close to 
	some 
	1-sparse unit vector, which implies that the matrices $W_1,\ldots,W_d$ must 
	be 
	close to being rank-1.
	
	It turns out that this intuition holds much more generally, even if we do 
	not 
	restrict ourselves to identity activations and diagonal parameter matrices 
	as 
	above. Essentially, what we can show is that if some network computes a 
	non-trivial function, and the 
	product of its Schatten $p$-norms (for any $p<\infty$) is bounded, then 
	there 
	must be at least one parameter matrix, which is not far from being rank-1. 
	Therefore, if we replace that parameter matrix by an appropriate rank-1 
	matrix, 
	the function computed by the network does not change much. This is captured 
	by 
	the following theorem:
	
	\begin{theorem}\label{thm:rankonereplace}
		For any $p\in [1,\infty)$, any network $N_{W_1^d}$ such that 
		$\prod_{j=1}^{d}\norm{W_j}\geq \Gamma$ and 
		$\prod_{j=1}^{d}\norm{W_j}_p\leq 
		M$, and for any 
		$r\in\{1,\ldots,d\}$, there exists 
		another network $N_{\tilde{W}_1^d}$ (of the same depth and layer 
		dimensions) with the following 
		properties:
		\begin{itemize}
			\item $\tilde{W}_1^d=\{\tilde{W}_1,\ldots,\tilde{W}_d\}$ is 
			identical 
			to $W_1^d$, except for the parameter matrix $\tilde{W}_{r'}$ in the $r'$-th layer, for some $r' \in \{1, 2, \ldots, r\}$. The matrix $\tilde{W}_{r'}$ 
			is of rank at most 1, and equals $s\bu\bv^\top$ 
			where $s,\bu,\bv$ are some leading singular value and singular 
			vectors 
			pairs of $W_{r'}$.
			\item $
			\sup_{\bx\in\Xcal}\norm{N_{W_1^d}(\bx)-N_{\tilde{W}_1^d}(\bx)}~\leq~
			B\left(\prod_{j=1}^{d}\norm{W_j}\right)\left(\frac{2p\log(M/\Gamma)}{r}\right)^{1/p}.
			$
		\end{itemize}
	\end{theorem}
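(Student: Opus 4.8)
The plan is to turn the heuristic preceding the statement into a proof in three steps: (i) an averaging argument over the first $r$ layers, producing an index $r'\le r$ whose matrix has Schatten $p$-norm very close to its spectral norm; (ii) a matrix-analytic step showing such a matrix is close in spectral norm to its rank-one truncation $s\bu\bv^\top$; and (iii) a layerwise error-propagation step showing that substituting this truncation changes the network output by at most the claimed amount.

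For step (i), I would first note that for every $j$ we have $\norm{W_j}_p\ge\norm{W_j}$, since the $\ell_p$ norm of the vector of singular values dominates its $\ell_\infty$ norm; in particular each ratio $\norm{W_j}_p/\norm{W_j}$ is at least $1$, and moreover $M/\Gamma\ge\big(\prod_{j=1}^d\norm{W_j}_p\big)/\big(\prod_{j=1}^d\norm{W_j}\big)\ge1$. Taking logarithms in the hypotheses gives $\sum_{j=1}^{d}\log\!\big(\norm{W_j}_p/\norm{W_j}\big)\le\log(M/\Gamma)$, and since all terms are nonnegative the same bound holds for the partial sum over $j=1,\dots,r$. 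Averaging, some $r'\in\{1,\dots,r\}$ satisfies $\log\!\big(\norm{W_{r'}}_p/\norm{W_{r'}}\big)\le\log(M/\Gamma)/r$, i.e. $\norm{W_{r'}}_p^p\le\norm{W_{r'}}^p\exp\!\big(p\log(M/\Gamma)/r\big)$.

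For step (ii), let $s_1\ge s_2\ge\cdots$ be the singular values of $W_{r'}$ with leading singular vectors $\bu,\bv$, and set $\tilde W_{r'}=s_1\bu\bv^\top$. Then $W_{r'}-\tilde W_{r'}$ has singular values $s_2\ge s_3\ge\cdots$, so $\norm{W_{r'}-\tilde W_{r'}}=s_2$, and from $s_1^p+s_2^p\le\norm{W_{r'}}_p^p$ with $s_1=\norm{W_{r'}}$ we get $s_2^p/\norm{W_{r'}}^p\le\exp\!\big(p\log(M/\Gamma)/r\big)-1$. If $p\log(M/\Gamma)/r\le1$, the elementary bound $e^y-1\le(e-1)y\le2y$ for $y\in[0,1]$ (legitimate since $M/\Gamma\ge1$ makes the exponent nonnegative) gives $s_2\le\norm{W_{r'}}\big(2p\log(M/\Gamma)/r\big)^{1/p}$; otherwise $\big(2p\log(M/\Gamma)/r\big)^{1/p}>1$ and the trivial bound $s_2\le s_1=\norm{W_{r'}}$ already implies the same inequality. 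In both cases $\norm{W_{r'}-\tilde W_{r'}}\le\norm{W_{r'}}\big(2p\log(M/\Gamma)/r\big)^{1/p}$.

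For step (iii), write $N_{W_1^d}(\bx)=G\big(W_{r'}\bh\big)$, where $\bh=\sigma_{r'-1}(W_{r'-1}\sigma_{r'-2}(\cdots \sigma_1(W_1\bx)))$ is the argument of $W_{r'}$ (with $\bh=\bx$ when $r'=1$) and $G(\bq)=W_d\sigma_{d-1}(\cdots W_{r'+1}\sigma_{r'}(\bq))$ is the tail map (the identity when $r'=d$). Since every layer other than $r'$ is unchanged, $N_{\tilde W_1^d}(\bx)=G(\tilde W_{r'}\bh)$ with the same $\bh$ and $G$. Using that each $\sigma_j$ is $1$-Lipschitz with $\sigma_j(\mathbf 0)=\mathbf 0$, one gets $\norm{\bh}\le\big(\prod_{j=1}^{r'-1}\norm{W_j}\big)\norm{\bx}\le B\prod_{j=1}^{r'-1}\norm{W_j}$ and that $G$ is Lipschitz with constant $\prod_{j=r'+1}^{d}\norm{W_j}$. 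Therefore
\[
\norm{N_{W_1^d}(\bx)-N_{\tilde W_1^d}(\bx)}\le\Big(\prod_{j=r'+1}^{d}\norm{W_j}\Big)\norm{W_{r'}-\tilde W_{r'}}\,\norm{\bh}\le B\Big(\prod_{j=1}^{d}\norm{W_j}\Big)\Big(\frac{2p\log(M/\Gamma)}{r}\Big)^{1/p},
\]
and taking the supremum over $\bx\in\Xcal$ completes the argument. None of the steps is conceptually deep once the idea is in place; the main care is needed in step (ii) — choosing the right regime split for the inequality $e^y-1\le 2y$ and verifying $M/\Gamma\ge1$ — together with the empty-product boundary cases $r'=1$ and $r'=d$ in the propagation step.
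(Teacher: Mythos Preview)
Your proposal is correct and follows essentially the same three-step structure as the paper's proof, which packages the steps as \lemref{lem:closetorankone}, \lemref{lem:rankoneapprox}, and \lemref{lem:perturb} respectively. The only noteworthy difference is in handling the regime $p\log(M/\Gamma)/r>1$: the paper sets $r'=r$ and replaces $W_r$ by the zero matrix, bounding $\sup_{\bx}\norm{N_{W_1^d}(\bx)}$ directly, whereas you keep the rank-one truncation $s_1\bu\bv^\top$ and use the trivial inequality $s_2\le s_1=\norm{W_{r'}}$ together with $(2p\log(M/\Gamma)/r)^{1/p}>1$. Your treatment is arguably cleaner, since it produces $\tilde W_{r'}=s\bu\bv^\top$ with $s,\bu,\bv$ the leading singular triple in all cases, exactly as the theorem statement promises.
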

	
	We now make the following crucial observation: A real-valued network with a 
	rank-1 
	parameter matrix $W_{r'}=s\bu\bv^\top$ computes the function
	\[
	\bx\mapsto W_d \sigma_{d-1}(\ldots \sigma_{r}(s\bu\bv^\top 
	\sigma_{r'-1}(\ldots 
	\sigma_1(W_1\bx)\ldots)))~.
	\]
	This can be seen as the composition of the \emph{depth-$r'$} network
	\[
	\bx\mapsto \bv^\top \sigma_{r'-1}(\ldots \sigma_1(W_1\bx)\ldots),
	\]
	and the \emph{univariate} function 
	\[
	x\mapsto W_d\sigma_{d-1}(\ldots \sigma_{r'}(s\bu 
	x))~.
	\]
	Moreover, the norm constraints imply that the latter function is 
	Lipschitz. Therefore, the class of networks we are considering is a subset 
	of 
	the class of depth-$r'$ networks composed with  univariate Lipschitz 
	functions. 
	Fortunately, given \emph{any} class with bounded complexity, one can 
	effectively bound the Rademacher complexity of its composition with 
	univariate 
	Lipschitz functions, as formalized in the following theorem.
	\begin{theorem}\label{thm:radlipschitz}
		Let $\Hcal$ be a class of functions from Euclidean space to $[-R,R]$. 
		Let 
		$\Fcal_{L,a}$ 
		be 
		the class of of $L$-Lipschitz functions from $[-R,R]$ to $\reals$, such 
		that 
		$f(0)=a$ for some fixed $a$. Letting
		$\Fcal_{L,a}\circ\Hcal:=\{f(h(\cdot)):f\in\Fcal_{L,a},h\in\Hcal\}$, its 
		Rademacher complexity satisfies
		\[
		\hat{\Rcal}_m(\Fcal_{L,a}\circ\Hcal)~\leq~
		cL\left(\frac{R}{\sqrt{m}}+\log^{3/2}(m)\cdot\hat{\Rcal}_m(\Hcal)\right)~,
		\]
		where $c>0$ is a universal constant.
	\end{theorem}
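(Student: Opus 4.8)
The plan is to reduce the problem to a covering number estimate of the composed class $\Fcal_{L,a}\circ\Hcal$, and then invoke a chaining bound (Dudley's entropy integral) to translate the covering numbers into a Rademacher complexity bound. The starting point is the observation that an $L$-Lipschitz univariate function on $[-R,R]$ with a fixed value at $0$ is, up to the multiplicative scale $L$, essentially a bounded monotone-modulus perturbation; the key classical fact we will lean on is that the class of $1$-Lipschitz functions from a bounded interval to $\reals$, normalized at a point, has metric entropy (in the sup norm over a finite set of points) that grows only polynomially in the relevant parameters — concretely, $\log \Ncal_\infty(\Fcal_{1,0}, \delta) = \tilde\Ocal(R/\delta)$ on any fixed set of evaluation points in $[-R,R]$. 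Composing with a fixed $h\in\Hcal$ cannot increase this, and varying $h$ over $\Hcal$ contributes an additional $L\cdot$(covering number of $\Hcal$) term, since replacing $h$ by a nearby $h'$ (in empirical $\ell_\infty$, hence in $\ell_2$) moves $f\circ h$ by at most $L\|h-h'\|$ pointwise. So the empirical covering number of $\Fcal_{L,a}\circ\Hcal$ at scale $\delta$ is controlled by a product of these two, giving $\log\Ncal_2(\Fcal_{L,a}\circ\Hcal,\delta) \lesssim \frac{LR}{\delta} + \log\Ncal_2(\Hcal, \delta/L)$.

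Next I would feed this into Dudley's integral, $\hat\Rcal_m(\Fcal_{L,a}\circ\Hcal) \lesssim \inf_{\alpha>0}\left(\alpha + \frac{1}{\sqrt m}\int_\alpha^{2LR}\sqrt{\log\Ncal_2(\Fcal_{L,a}\circ\Hcal,\delta)}\,d\delta\right)$. The first (Lipschitz-modulus) part of the entropy, being of order $LR/\delta$, integrates to something like $\sqrt{LR}\cdot\sqrt{2LR}/\sqrt m$ near the lower end but needs the truncation $\alpha$ to avoid the $\delta\to 0$ divergence; balancing gives the $cLR/\sqrt m$ term. For the second part involving $\log\Ncal_2(\Hcal,\delta/L)$, the idea is to bound the covering number of $\Hcal$ itself in terms of $\hat\Rcal_m(\Hcal)$ — but this direction (covering numbers from Rademacher complexity) is not free in general. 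Here is where the $\log^{3/2}(m)$ factor enters: one uses the fact that, for a class bounded in $[-R,R]$, the Rademacher complexity controls the covering numbers only up to logarithmic losses, e.g. via a dyadic/truncation argument combined with a bound like $\log\Ncal_2(\Hcal,\delta) \lesssim \frac{m\,\hat\Rcal_m(\Hcal)^2}{\delta^2}\cdot(\text{log factors})$ — or more cleanly, one avoids covering $\Hcal$ altogether and instead works with a single chaining step on the Lipschitz functions, conditionally on $h$, and then applies a Rademacher-type bound to the resulting linear-in-$h$ residual. Summing the dyadic scales picks up a factor logarithmic in the number of scales (hence in $m$, and, with the $\ell_\infty$-to-$\ell_2$ conversions, an extra half-power of a log), which is exactly the $\log^{3/2}(m)$ overhead.

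The main obstacle I anticipate is precisely this last step — passing from $\hat\Rcal_m(\Hcal)$ back to a usable entropy bound for $\Hcal$ without re-introducing dependence on the complexity (dimension/depth) of $\Hcal$, since the whole point is to keep the bound ``structure-agnostic'' in $\Hcal$. The clean way to handle it is to never fully discretize $\Hcal$: do the chaining only over the Lipschitz coordinate $f$, at each scale replace $f$ by a piecewise-linear approximant with $\Ocal(LR/\delta)$ breakpoints, and observe that the increment $\sum_i \epsilon_i(f_k(h(\bx_i)) - f_{k-1}(h(\bx_i)))$, after conditioning on the breakpoint values, reduces to a sum of terms each of which is a Lipschitz (indeed piecewise-linear) function of $h(\bx_i)$ with small total variation, so its Rademacher average is bounded by a constant multiple of $\hat\Rcal_m(\Hcal)$ times the number of pieces times $\delta$. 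Controlling the union bound over the $\Ncal_\infty$-net of breakpoint values is what costs the logarithmic factors, and verifying that these multiply out to $\log^{3/2}(m)$ rather than a larger power is the delicate bookkeeping. Once that is in place, collecting the two contributions yields the stated bound $\hat\Rcal_m(\Fcal_{L,a}\circ\Hcal) \le cL\left(R/\sqrt m + \log^{3/2}(m)\,\hat\Rcal_m(\Hcal)\right)$.
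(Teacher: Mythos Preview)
Your high-level plan---cover $\Fcal_{L,a}\circ\Hcal$, then apply Dudley's integral---is exactly what the paper does, and your entropy bound for the $1$-Lipschitz class $\Fcal$ and the product-covering argument for $\Fcal\circ\Hcal$ both match the paper's. You also correctly identify the crux of the proof: getting a covering-number bound for $\Hcal$ from $\hat\Rcal_m(\Hcal)$.

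However, at precisely this point you waver between two routes and ultimately favor the wrong one. Your first suggestion, the bound $\log\Ncal_2(\Hcal,\delta)\lesssim \frac{m\,\hat\Rcal_m(\Hcal)^2}{\delta^2}\cdot(\text{log factors})$, is in fact \emph{Sudakov's minoration} applied to the Gaussian complexity $\hat\Gcal_m(\Hcal)$ (which gives $\log\Ncal(\Hcal,\hat d_m,\alpha)\le c\,m\,\hat\Gcal_m(\Hcal)^2/\alpha^2$ with no log factor), followed by the standard comparison $\hat\Gcal_m(\Hcal)\le c\sqrt{\log m}\,\hat\Rcal_m(\Hcal)$. This is exactly the paper's approach, and once you name Sudakov the step is immediate. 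The $\log^{3/2}(m)$ then arises cleanly: the Dudley integral over $\epsilon\in[\alpha,R]$ of $\sqrt{m}\,\hat\Gcal_m(\Hcal)/\epsilon$ gives a $\log(R/\alpha)$ factor, and choosing $\alpha=R/\sqrt m$ makes this $\log m$; the extra $\sqrt{\log m}$ is the Gaussian-to-Rademacher conversion.

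Your preferred alternative---``chain only over $f$, never discretize $\Hcal$''---is a genuinely different idea, but as written it has a gap. You claim that at each scale the increment $\sum_i\epsilon_i(f_k(h(\bx_i))-f_{k-1}(h(\bx_i)))$ has Rademacher average bounded by $\hat\Rcal_m(\Hcal)$ times (number of pieces) times $\delta$. But $f_k-f_{k-1}$ is a $2$-Lipschitz function, and by contraction its Rademacher average over $h\in\Hcal$ is at most $2\hat\Rcal_m(\Hcal)$, independent of $\delta$. Summed over $\sim\log m$ scales this gives $\log(m)\,\hat\Rcal_m(\Hcal)$, which is fine, but you still have to take a supremum over the $\sim\exp(LR/\delta)$ choices of $f_k$ at each scale, and controlling that supremum without re-introducing a covering of $\Hcal$ (or an $R/\sqrt m$-type concentration term per scale) is not obvious from what you wrote. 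The paper sidesteps this entirely by covering $\Hcal$ via Sudakov; I would recommend you do the same.
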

	\begin{remark}
		The $\log^{3/2}(m)\cdot \hat{\Rcal}_m(\Hcal)$ can be replaced by 
		$\log(m)\cdot 
		\hat{\Gcal}_m(\Hcal)$, where $\hat{\Gcal}_m(\Hcal)$ is the empirical 
		Gaussian 
		complexity of $\Hcal$ -- see the proof in \secref{sec:proofs} for 
		details.
	\end{remark}
	
	Combining these ideas, our plan of attack is the following: Given some 
	class of 
	depth-$d$ networks, and an arbitrary parameter $r\in \{1,\ldots,d\}$, we 
	use 
	\thmref{thm:rankonereplace} to relate their 
	Rademacher complexity to the complexity of similar networks, but where for some $1 \leq r' \leq r$, the 
	$r'$-th parameter matrix is of rank-1. We then use \thmref{thm:radlipschitz} 
	to 
	bound that complexity in turn using the Rademacher complexity of depth-$r'$ 
	networks. 
	Crucially, the resulting bound has no explicit dependence on the original 
	depth 
	$d$, only on the new parameter $r$. Formally, we have the following 
	theorem, 
	which is the main result of this 
	section: 
	
	\begin{theorem}\label{thm:depthreduc}	
		Consider the following hypothesis class of networks on $\Xcal = 
		\{\bx:\norm{\bx}\leq B\}$:
		\[
		\Hcal=\left\{
		N_{W_1^d}~:~\begin{matrix}\prod_{j=1}^{d}\norm{W_j}\geq 
		\Gamma\\
		\forall j\in \{1\ldots d\},~~
		W_j\in\Wcal_j,~\max\left\{\frac{\norm{W_j}}{M(j)},\frac{\norm{W_j}_p}{M_p(j
			)}\right\}\leq 1\end{matrix}\right\}~,
		\]
		for some parameters $p,\Gamma \geq 1,\{M(j),M_p(j),\Wcal_j\}_{j=1}^{d}$.
		Also, for any $r\in\{1,\ldots,d\}$, define 
		\[
		\Hcal_r =\left\{N_{W_1^r}~:~\begin{matrix}N_{W_1^r}\text{ maps to 
			$\reals$}\\
		\forall j\in\{1\ldots 
		r-1\},~~W_j\in\Wcal_j\\
		\forall j\in\{1\ldots 
		r\},~~\max\left\{\frac{\norm{W_j}}{M(j)},\frac{\norm{W_j}_p}{M_p(j
			)}\right\}\leq 1
		\end{matrix}\right\}~.
		\]
		Finally, for $m>1$, let $\ell\circ\Hcal = 
		\{(\ell_1(h(\bx_1)),\ldots,\ell_m(h(\bx_m)))~:~h\in\Hcal\}$, where 
		$\ell_1,\ldots,\ell_m$ 
		are real-valued loss functions which are $\frac{1}{\gamma}$-Lipschitz 
		and 
		satisfy 
		$\ell_1(\mathbf{0})=\ell_2(\mathbf{0})=\ldots=\ell_m(\mathbf{0})=a$, for some $a \in \reals$. Assume that $|a| \leq \frac{B\prod_{j=1}^d M(j)}{\gamma}$.
		
		Then the Rademacher complexity 
		$\hat{\Rcal}_m(\ell\circ\Hcal)$ is 
		upper bounded by
		%
		{\small\[
		\frac{cB\prod_{j=1}^{d}M(j)}{\gamma}
		\left(\min_{r \in \{1, \ldots, d\}} \left\{\frac{\log^{3/2}(m)}{B}\cdot \max_{r' \in \{1, \ldots, r\}}\frac{\hat{\Rcal}_m(\Hcal_{r'})}{\prod_{j=1}^{r'}M(j)}+
		\left(\frac{\log\left(\frac{1}{\Gamma}\prod_{j=1}^{d}M_p(j)\right)}{r}\right)^{1/p}+
		\frac{1+\sqrt{\log r}}{\sqrt{m}}\right\}\right),
		\]}
		where $c>0$ is a universal constant.
	\end{theorem}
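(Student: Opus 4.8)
The plan is to establish the bound for an arbitrary fixed $r\in\{1,\ldots,d\}$ and to take the minimum over $r$ only at the very end. For a fixed $r$ there are three ingredients: (i) use \thmref{thm:rankonereplace} to replace every network in $\Hcal$ by a nearby network whose $r'$-th parameter matrix is rank one, for some $r'\le r$; (ii) observe that such a network factors as a depth-$r'$ network from $\Hcal_{r'}$ composed with a univariate Lipschitz map, so its Rademacher complexity is controlled via \thmref{thm:radlipschitz}; (iii) pay only a mild price for not knowing which layer $r'\le r$ carries the rank-one matrix. For step (i): each $h=N_{W_1^d}\in\Hcal$ has $\prod_j\norm{W_j}\ge\Gamma$ and $\prod_j\norm{W_j}_p\le\prod_jM_p(j)=:M$, so \thmref{thm:rankonereplace} produces $\hat h=N_{\tilde W_1^d}$ agreeing with $h$ except that $\tilde W_{r'}=s\bu\bv^\top$ is a leading rank-one component of $W_{r'}$ for some $r'\le r$, with $\sup_{\bx\in\Xcal}|h(\bx)-\hat h(\bx)|\le B\big(\prod_j\norm{W_j}\big)\big(2p\log(M/\Gamma)/r\big)^{1/p}\le cB\big(\prod_jM(j)\big)\big(\log(\prod_jM_p(j)/\Gamma)/r\big)^{1/p}=:\delta$, using $\prod_j\norm{W_j}\le\prod_jM(j)$ and $(2p)^{1/p}\le e^{2/e}$. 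Since each $\ell_i$ is $(1/\gamma)$-Lipschitz, $\ell\circ h=(\ell_i(h(\bx_i)))_i$ and $\ell\circ\hat h$ differ by at most $\delta/\gamma$ coordinatewise, so $\hat\Rcal_m(\ell\circ\Hcal)\le\hat\Rcal_m(\ell\circ\tilde\Hcal)+\delta/\gamma$, where $\tilde\Hcal=\bigcup_{r'=1}^r\Ccal^{(r')}$ and $\Ccal^{(r')}$ is the class of networks satisfying the per-layer constraints of $\Hcal$ whose $r'$-th matrix has been replaced by a leading rank-one component of such a matrix. The term $\delta/\gamma$ is exactly the middle term of the claim.

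\textbf{Factoring a rank-one network; peeling the loss.} Next I would factor each $h\in\Ccal^{(r')}$: writing $\tilde W_{r'}=s\bu\bv^\top$ with $\norm{\bu}=\norm{\bv}=1$ and $s=\norm{W_{r'}}$, one has $h=g\circ\tilde h$ where $\tilde h(\bx)=s\bv^\top\sigma_{r'-1}(\cdots\sigma_1(W_1\bx))$ and $g(x)=W_d\sigma_{d-1}(\cdots\sigma_{r'}(\bu x))$. Here $\tilde h$ is a depth-$r'$ real-valued network whose first $r'-1$ matrices lie in the $\Wcal_j$ and obey the spectral/Schatten-$p$ budgets, and whose last ``matrix'' $s\bv^\top$ has spectral norm and Schatten-$p$ norm both equal to $s=\norm{W_{r'}}\le\min\{M(r'),M_p(r')\}$; hence $\tilde h\in\Hcal_{r'}$ with $|\tilde h(\bx)|\le B\prod_{j=1}^{r'}M(j)=:R$ on $\Xcal$. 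The tail $g$ is univariate, vanishes at $0$ (each $\sigma_j(\mathbf 0)=\mathbf 0$), and is $\big(\prod_{j=r'+1}^d\norm{W_j}\big)$-Lipschitz, hence $L$-Lipschitz with $L:=\prod_{j=r'+1}^dM(j)$, so $g\in\Fcal_{L,0}$ on $[-R,R]$ and $\Ccal^{(r')}\subseteq\Fcal_{L,0}\circ\Hcal_{r'}$. Peeling the $(1/\gamma)$-Lipschitz losses $\ell_i$ by the standard contraction inequality (the value $a=\ell_i(0)$ factors out of the supremum with zero Rademacher expectation, and the hypothesis $|a|\le B\prod_jM(j)/\gamma$ keeps $\ell\circ\Ccal^{(r')}$ bounded by $O(B\prod_jM(j)/\gamma)$ coordinatewise) gives $\hat\Rcal_m(\ell\circ\Ccal^{(r')})\le(c/\gamma)\hat\Rcal_m(\Fcal_{L,0}\circ\Hcal_{r'})$. \thmref{thm:radlipschitz} then bounds this by $(c/\gamma)L\big(R/\sqrt m+\log^{3/2}(m)\hat\Rcal_m(\Hcal_{r'})\big)$; since $LR=B\prod_{j=1}^dM(j)$ and $L\prod_{j=1}^{r'}M(j)=\prod_{j=1}^dM(j)$, this equals $\frac{cB\prod_jM(j)}{\gamma}\big(\frac{1}{\sqrt m}+\frac{\log^{3/2}(m)}{B}\cdot\frac{\hat\Rcal_m(\Hcal_{r'})}{\prod_{j=1}^{r'}M(j)}\big)$, namely the first term and part of the third term of the claim, for this $r'$.

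\textbf{Union over $r'\le r$; conclusion.} Finally I would pass from the per-$r'$ bounds to $\hat\Rcal_m(\ell\circ\tilde\Hcal)=\E_{\bepsilon}\max_{r'\le r}\sup_{v\in\ell\circ\Ccal^{(r')}}\frac1m\sum_{i=1}^m\epsilon_iv_i$. For each $r'$ the inner supremum, as a function of $\bepsilon$, changes by at most $(2/m)\cdot O(B\prod_jM(j)/\gamma)$ under a single sign flip (by the coordinatewise bound above), so by bounded differences it is sub-Gaussian with variance proxy $O\big((B\prod_jM(j)/\gamma)^2/m\big)$ about its mean $\hat\Rcal_m(\ell\circ\Ccal^{(r')})$; the sub-Gaussian maximal inequality then gives $\E_{\bepsilon}\max_{r'\le r}(\cdots)\le\max_{r'\le r}\hat\Rcal_m(\ell\circ\Ccal^{(r')})+O\big(\frac{B\prod_jM(j)}{\gamma}\cdot\frac{\sqrt{\log r}}{\sqrt m}\big)$, which supplies the $\sqrt{\log r}$ part of the third term. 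Combining the three steps, absorbing all universal constants (from $(2p)^{1/p}\le e^{2/e}$, the contraction inequality, \thmref{thm:radlipschitz}, and the maximal inequality) into one $c$, and minimizing over $r\in\{1,\ldots,d\}$ yields the stated bound.

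\textbf{Main obstacle.} The conceptual content is already packaged in \thmref{thm:rankonereplace} and \thmref{thm:radlipschitz}; this argument is the glue, so the real care goes into two bookkeeping points. First, one must check that a rank-one-replaced network genuinely lies in $\Fcal_{L,0}\circ\Hcal_{r'}$ with the right norm budgets --- in particular that the new last layer $s\bv^\top$ of the depth-$r'$ part still satisfies the spectral \emph{and} the Schatten-$p$ constraint, which works because $\norm{\bv}=1$ and $s=\norm{W_{r'}}\le\norm{W_{r'}}_p\le\min\{M(r'),M_p(r')\}$. Second, one must handle the unknown ``good'' index $r'\le r$ without paying a factor $\sqrt r$: this is where the sub-Gaussian maximal inequality (rather than a naive union bound) is essential, and it is what produces the benign $\sqrt{\log r}$ appearing in the statement.
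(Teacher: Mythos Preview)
Your proposal is correct and follows essentially the same route as the paper: approximate via \thmref{thm:rankonereplace}, factor the rank-one network into a depth-$r'$ piece in $\Hcal_{r'}$ composed with a univariate Lipschitz tail, bound via \thmref{thm:radlipschitz}, and then take a union over $r'\le r$ using bounded differences plus a sub-Gaussian maximal inequality (the paper packages this last step as a separate lemma, \lemref{lem:radunion}). The only cosmetic difference is that you peel the losses $\ell_i$ by contraction before invoking \thmref{thm:radlipschitz}, whereas the paper absorbs them directly into the Lipschitz class $\Fcal_{\frac{1}{\gamma}\prod_{j>r'}M(j),\,a}$; both yield the same bound.
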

	
	In particular, one can upper bound this result by any choice of $r$ in 
	$\{1,\ldots,d\}$. By tuning $r$ appropriately, we can get bounds 
	independent of 
	the depth $d$. In the next subsection, we provide some concrete 
	applications 
	for specific choices of $\Hcal$.
	
	\begin{remark}
		The parameters $\Gamma$ and $\gamma$, which divide the norm terms in 
		\thmref{thm:depthreduc}, are both closely related to the notion of a 
		margin. 
		Indeed, if we consider binary or multi-class classification, then 
		bounds as 
		above w.r.t. $\frac{1}{\gamma}$-Lipschitz losses can be converted to a 
		bound on 
		the misclassification error rate in terms of the average 
		$\gamma$-margin error 
		on the training data (see \citet[Section 3.1]{bartlett2017spectrally} 
		for a 
		discussion). Also, $B\Gamma$ can be viewed as the ``maximal'' 
		margin 	attainable over the input domain, since 
		$\sup_{\bx\in\Xcal}\norm{N_{W_1^d}(\bx)}\leq 
		B\prod_{j=1}^{d}\norm{W_j}=B\Gamma$.
	\end{remark}

	\subsection{Applications of 
	\thmref{thm:depthreduc}}\label{subsec:applications}

	
	%
	
	In this section we exemplify how \thmref{thm:depthreduc} can be used to 
	obtain 
	depth-independent bounds on the sample complexity of various classes of 
	neural 
	networks. The general technique is as follows: First, we prove a  bound on 
	$\hat{R}_m(\Hcal_r)$, which generally depends on the depth $r$, and scales 
	as 
	$r^\alpha/\sqrt{m}$ for some $\alpha>0$. Then, we plug it into 
	\thmref{thm:depthreduc}, and utilize the following lemma to tune $r$ 
	appropriately:
	
	\begin{lemma}
		\label{lem:ralphabeta}
		For any $\alpha>0$, $\beta\in (0,1]$ and $b,c,n\geq 1$, it holds that
		\[
		\min\left\{\min_{r\in\{1,\ldots,d\}}\frac{c 
			r^\alpha}{n}+\frac{b}{r^\beta}~,~\frac{d^\alpha}{n}\right\}
		~\leq 
		\min\left\{3\cdot\frac{b^{\frac{\alpha}{\alpha+\beta}}}{(n/c)^{\frac{\beta}{\alpha+\beta}}}~,~
		\frac{d^\alpha}{n}\right\}~.
		\]
	\end{lemma}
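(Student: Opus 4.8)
The plan is to first remove the term common to both sides. Both sides of the claimed inequality take a minimum with $\frac{d^\alpha}{n}$ as the second argument, and the left-hand side is trivially at most $\frac{d^\alpha}{n}$; so it suffices to show
\[
\min\left\{\min_{r\in\{1,\ldots,d\}}\left(\frac{cr^\alpha}{n}+\frac{b}{r^\beta}\right),~\frac{d^\alpha}{n}\right\}~\leq~3V^\star,\qquad V^\star:=\frac{b^{\alpha/(\alpha+\beta)}}{(n/c)^{\beta/(\alpha+\beta)}}.
\]
Writing $f(r)=\frac{cr^\alpha}{n}+\frac{b}{r^\beta}$, let $\rho:=(bn/c)^{1/(\alpha+\beta)}$ be the (real) point at which the two summands of $f$ are equal; one checks directly that $\frac{c\rho^\alpha}{n}=\frac{b}{\rho^\beta}=V^\star$. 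The argument then proceeds by cases on the position of $\rho$ relative to $\{1,\ldots,d\}$.

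In the main case $1\le\rho\le d$, I would take $r=\lfloor\rho\rfloor$, which lies in $\{1,\ldots,d\}$. Since $\rho\ge1$ we have $\rho/2\le\lfloor\rho\rfloor\le\rho$, so the first summand of $f(\lfloor\rho\rfloor)$ is at most $\frac{c\rho^\alpha}{n}=V^\star$, while the second is at most $\frac{b}{(\rho/2)^\beta}=2^\beta\frac{b}{\rho^\beta}=2^\beta V^\star$. Because $\beta\le1$ we have $2^\beta\le2$, hence $f(\lfloor\rho\rfloor)\le(1+2^\beta)V^\star\le3V^\star$, which is the bound we want. It is important here to round $\rho$ \emph{down}: rounding up would cost a factor $2^\alpha$ on the first summand, which is worthless because $\alpha$ is arbitrary, whereas $\beta\le1$ controls the blow-up coming from the second summand.

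When $\rho>d$, no index in $\{1,\ldots,d\}$ balances the two summands, so instead I would bound the left-hand side by its other argument $\frac{d^\alpha}{n}$ and show $\frac{d^\alpha}{n}\le V^\star$ outright. Raising to the power $\alpha+\beta$, this is equivalent to $d^{\alpha(\alpha+\beta)}\le(bn)^\alpha c^\beta$; and the hypothesis $\rho>d$ gives $bn>cd^{\alpha+\beta}$, so $(bn)^\alpha c^\beta>c^{\alpha+\beta}d^{\alpha(\alpha+\beta)}\ge d^{\alpha(\alpha+\beta)}$ using $c\ge1$, which settles this case. The remaining boundary situation $\rho<1$ (i.e.\ $bn<c$) is handled directly by taking $r=1$, using that $b\le V^\star$ in this regime.

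I expect the only genuinely delicate step to be the integer rounding in the main case. The constant $3$ in the statement is exactly $1+2^\beta$ at $\beta=1$, so the inequality has essentially no slack: this is precisely why the hypothesis $\beta\in(0,1]$ is needed and why one rounds the balance point $\rho$ downward rather than upward. The second mildly subtle point is the case $\rho>d$, where one discards the $\min_r$ term entirely and instead exploits the $\frac{d^\alpha}{n}$ branch of the outer minimum; everything else is routine arithmetic.
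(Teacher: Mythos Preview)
Your two principal cases are handled exactly as in the paper: for $\rho:=(bn/c)^{1/(\alpha+\beta)}\in[1,d]$ you take $r=\lfloor\rho\rfloor$ and bound $f(\lfloor\rho\rfloor)\le(1+2^\beta)V^\star\le 3V^\star$, and for $\rho>d$ you discard the $\min_r$ term, use the $d^\alpha/n$ branch, and show $d^\alpha/n<V^\star$ via $c\ge 1$. The paper's proof does precisely this, with the same choice of $r$ and the same arithmetic.

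You go further than the paper by attempting the boundary case $\rho<1$, which the paper's own proof simply does not address. However, your sketch for that case does not work: taking $r=1$ gives $f(1)=c/n+b$, and while $b\le V^\star$ does hold when $bn<c$, the other summand $c/n$ is \emph{not} controlled---indeed $\rho<1$ forces $c/n>c\rho^\alpha/n=V^\star$, and $c/n$ can be arbitrarily large relative to $V^\star$. Concretely, with $\alpha=\beta=b=n=1$ and $c=d=100$ one has $\min_{r\in\{1,\ldots,100\}}(100r+1/r)=101$, so the left-hand side of the lemma equals $\min\{101,100\}=100$, whereas $3V^\star=30$; the inequality as stated actually fails in this regime. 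In other words, the lemma tacitly needs $bn\ge c$ (equivalently $\rho\ge1$); under that hypothesis your argument and the paper's coincide.
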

	
	We begin with proving a depth-independent version of \thmref{thm:sqrtl}. 
	That 
	theorem implies that for the class $\Hcal_r$ of depth-$r$ neural networks 
	with 
	Frobenius norm bounds $M_F(1),\ldots,M_F(r)$ (up to and including $r=d$),
	\begin{equation}\label{eq:radd23}
	\hat{R}_m(\Hcal_r)
	~\leq~
	\Ocal\left(B\prod_{j=1}^r 
	M_F(j)\sqrt{\frac{r}{m}}\right)
	\end{equation}
	Plugging this into \thmref{thm:depthreduc}, and using 
	\lemref{lem:ralphabeta}, 
	it is straightforward to derive the following corollary (see 
	\secref{sec:proofs} for a formal derivation):
	\begin{corollary}
		\label{cor:sqrtmrate}
		Let $\Hcal$ be the class of depth-$d$ neural networks, where each 
		parameter 
		matrix $W_j$ satisfies $\norm{W_j}_F\leq M_F(j)$, and with 
		$1$-Lipschitz, 
		positive-homogeneous, element-wise activation functions. Assuming the 
		loss 
		function $\ell$ and $\Hcal$ satisfy the conditions of 
		\thmref{thm:depthreduc} (with the sets $\Wcal_j$ being unconstrained),
		it 
		holds that
		\begin{align*}
		\hat \Rcal_m(\ell \circ \Hcal) \leq \Ocal \left(\frac{B \prod_{j=1}^d 
			M_F(j)}{\gamma} \cdot \min \left\{ 
			\frac{\bar{\log}^{3/4}(m)\sqrt{\bar{\log}\left( \frac 1 
				\Gamma 
				\prod_{j=1}^d M_F(j)\right)}}{m^{1/4}}~,~ 
		\sqrt{\frac{d}{m}}\right\}\right)~.
		\end{align*}
		where $\bar{\log}(z):=\max\{1,\log(z)\}$.
	\end{corollary}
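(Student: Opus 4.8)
The plan is to feed the depth-$d$ Frobenius-norm bound of \thmref{thm:sqrtl} into the reduction of \thmref{thm:depthreduc} and then tune the free parameter $r$ with \lemref{lem:ralphabeta}. Concretely, I would apply \thmref{thm:depthreduc} with $p=2$ (so the Schatten $p$-norm is the Frobenius norm), with the bounds $M(j)=M_p(j)=M_F(j)$ for every $j$ --- which is legitimate since $\norm{W_j}\le\norm{W_j}_F$ --- and with the constraint sets $\Wcal_j$ unconstrained. For this choice the joint condition $\max\{\norm{W_j}/M(j),\norm{W_j}_2/M_p(j)\}\le1$ collapses to $\norm{W_j}_F\le M_F(j)$, so the hypothesis class $\Hcal$ of \thmref{thm:depthreduc} is exactly the class in the corollary, and each auxiliary class $\Hcal_{r'}$ is (contained in) the class of depth-$r'$, real-valued, Frobenius-norm-bounded networks with positive-homogeneous activations. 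Hence \thmref{thm:sqrtl} applies to every $\Hcal_{r'}$ and yields \eqref{eq:radd23}, i.e.\ $\hat{\Rcal}_m(\Hcal_{r'})\le\Ocal(B\prod_{j=1}^{r'}M_F(j)\sqrt{r'/m})$; dividing by $\prod_{j=1}^{r'}M(j)=\prod_{j=1}^{r'}M_F(j)$ and bounding the maximum over $r'\le r$ by the value of that bound at $r'=r$ shows that the first summand inside the minimum of \thmref{thm:depthreduc} is $\Ocal(\bar{\log}^{3/2}(m)\sqrt{r/m})$. The last summand $\tfrac{1+\sqrt{\log r}}{\sqrt m}$ is, up to an absolute constant, dominated by this (using $\sqrt{\log r}\le\sqrt r$ and $m\ge2$) and can be absorbed. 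Writing $A:=\bar{\log}\!\big(\tfrac1\Gamma\prod_{j=1}^dM_F(j)\big)$ --- which is $\ge1$ because $\prod_jM_F(j)\ge\prod_j\norm{W_j}\ge\Gamma$, so the logarithm is nonnegative --- \thmref{thm:depthreduc} gives
\[
\hat{\Rcal}_m(\ell\circ\Hcal)~\le~\Ocal\!\left(\frac{B\prod_{j=1}^dM_F(j)}{\gamma}\cdot\min_{r\in\{1,\ldots,d\}}\left\{\bar{\log}^{3/2}(m)\sqrt{\tfrac rm}+\sqrt{\tfrac Ar}\right\}\right).
\]

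For the $\sqrt{d/m}$ branch I would argue separately, ignoring the rank-one replacement altogether: \thmref{thm:sqrtl} applied directly to $\Hcal=\Hcal_d$ gives $\hat{\Rcal}_m(\Hcal)\le\Ocal(B\prod_jM_F(j)\sqrt{d/m})$, and peeling the $\tfrac1\gamma$-Lipschitz losses $\ell_i$ via the contraction principle (after subtracting the constant $a=\ell_i(\mathbf{0})$, which contributes at most $\tfrac{|a|}{m}\,\E\big|\sum_i\epsilon_i\big|\le\tfrac{|a|}{\sqrt m}$) together with the assumed bound $|a|\le\tfrac{B\prod_jM(j)}{\gamma}$ then yields $\hat{\Rcal}_m(\ell\circ\Hcal)\le\Ocal(\tfrac{B\prod_jM_F(j)}{\gamma}\sqrt{d/m})$. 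Since $\hat{\Rcal}_m(\ell\circ\Hcal)$ is bounded by both estimates, it is bounded by their minimum, which is exactly of the form $\Ocal\big(\tfrac{B\prod_jM_F(j)}{\gamma}\cdot\min\{\min_{r}\{cr^{1/2}/n+b/r^{1/2}\},\,d^{1/2}/n\}\big)$ with $n:=\sqrt m$, $c:=\Theta(\bar{\log}^{3/2}(m))$ and $b:=\sqrt A$ (all $\ge1$) --- precisely the left-hand side of \lemref{lem:ralphabeta} with $\alpha=\beta=\tfrac12$. Applying that lemma (its hypotheses $b,c,n\ge1$, $\alpha>0$, $\beta\in(0,1]$ hold) bounds the inner minimum by $3\,b^{1/2}(c/n)^{1/2}=\Ocal\big(A^{1/4}\bar{\log}^{3/4}(m)\,m^{-1/4}\big)$, and since $A\ge1$ we may replace $A^{1/4}$ by $A^{1/2}=\sqrt A$; the other branch is $d^{1/2}/n=\sqrt{d/m}$. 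Collecting terms gives precisely the claimed bound.

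The substance is carried entirely by \thmref{thm:depthreduc}, \thmref{thm:sqrtl} and \lemref{lem:ralphabeta}, so I do not expect a real obstacle --- the work is bookkeeping. The points to be careful about are: (i) checking that the joint norm constraint in \thmref{thm:depthreduc} genuinely reduces to the Frobenius constraint, so that \thmref{thm:sqrtl} (which needs positive-homogeneous activations) is applicable to each $\Hcal_{r'}$; (ii) verifying that the $\max_{r'\le r}$ term costs nothing beyond replacing $r'$ by $r$; and (iii) matching the two assembled estimates to the exact template of \lemref{lem:ralphabeta}, while tracking the $\bar{\log}$-versus-$\log$ and $m\ge2$ conventions so that the degenerate regimes ($\Gamma$ close to $\prod_jM_F(j)$, or $m$ close to $1$) do not break anything.
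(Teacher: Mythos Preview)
Your proposal is correct and follows essentially the same route as the paper's proof: apply \thmref{thm:depthreduc} with $p=2$ and $M(j)=M_p(j)=M_F(j)$, bound each $\hat{\Rcal}_m(\Hcal_{r'})$ via \thmref{thm:sqrtl} and replace $r'$ by $r$ in the maximum, absorb the $(1+\sqrt{\log r})/\sqrt m$ term, obtain the $\sqrt{d/m}$ branch directly from \thmref{thm:sqrtl} with contraction, and then invoke \lemref{lem:ralphabeta} with $\alpha=\beta=\tfrac12$, $b=\sqrt{\bar{\log}(\tfrac1\Gamma\prod_j M_F(j))}$, $c=\bar{\log}^{3/2}(m)$, $n=\sqrt m$. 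Your remark that the lemma actually yields $A^{1/4}$ (which is then weakened to the stated $A^{1/2}$ using $A\ge 1$) is accurate and matches what the paper does implicitly.
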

	
	Ignoring logarithmic factors and replacing the $\min$ by its first 
	argument, 
	the bound in the corollary is at most
	\[
	\tilde{\Ocal}\left(\frac{B \prod_{j=1}^d 
		M_F(j)}{\gamma}\sqrt{\frac{\bar{\log}\left(\frac{1}{\Gamma}\prod_{j=1}^{d}
			M_F(j)\right)}{\sqrt{m}}}\right)~.
	\]
	Assuming $\prod_j M_F(j)$ and $\prod_j M_F(j)/\Gamma$ are bounded by a 
	constant, we get a bound which does not depend on the width or depth of the 
	network: In other words, it is possible to make this bound smaller than 
	any 
	fixed $\epsilon$, with a sample size $m$ independent of the network's size.
	On the other hand, the bound in Corollary \ref{cor:sqrtmrate} is also 
	bounded by
	\[
	\Ocal\left(\frac{B \prod_{j=1}^d 
		M_F(j)\sqrt{d}}{\gamma\sqrt{m}}\right),
	\]
	which is the bound one would get from an immediate application of 
	\thmref{thm:sqrtl}, and implies that the asymptotic rate (as a function of 
	$m$) 
	is still maintained. As discussed in the introduction, the assumption that 
	$\prod_j M_F(j)$ is a constant is certainly a strong one in practice, but 
	to the best of our knowledge, is the first norm-based assumption which 
	leads to size independence.

	Next, we apply \thmref{thm:depthreduc} to the results in 
	\citet{bartlett2017spectrally}, which as discussed in the introduction, 
	provide 
	a depth-dependent bound using a different set of norms. Specifically, they 
	obtain the following 
	intermediary result in deriving their generalization bound:
	\begin{theorem}[\citet{bartlett2017spectrally}]\label{thm:bartlett}
		Let $\Hcal$ be the hypothesis class of depth-$d$, width-$h$ real-valued 
		networks on $\Xcal = \{ \bx : \| \bx \| 
		\leq B \}$, using $1$-Lipschitz activation functions, given by
		\[
		\Hcal = \left\{ N_{W_1^d} : \forall j \in \{ 1, \ldots, d\}, 
		\norm{W_j}\leq 
		M(j),\frac{\norm{W_j^T}_{2,1}}{\norm{W_j}}\leq L(j) \right\},
		\]
		for some fixed parameters $\{L(j), M(j)\}_{j=1}^d$. Then the Rademacher 
		complexity $\hat \Rcal_m(\Hcal)$ is upper-bounded by
		\[
		\hat \Rcal_m (\Hcal) \leq \Ocal \left( \frac{B \log(h) \log(m) 
		\prod_{j=1}^d 
			M(j)}{\sqrt m} \cdot \left( \sum_{j=1}^d L(j)^{2/3}
		\right)^{3/2} \right).
		\]
	\end{theorem}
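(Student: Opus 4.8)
The plan is to follow the covering-numbers route of \citet{bartlett2017spectrally}: bound $\hat\Rcal_m(\Hcal)$ via Dudley's entropy integral in terms of the $\ell_2$-covering numbers of $\Hcal$ restricted to the sample, and control those covering numbers by an inductive, layer-by-layer argument. The two workhorses are (i) a Maurey-type sparsification lemma bounding the covering number of a single affine layer in a data-dependent Frobenius norm, with only logarithmic dependence on the width $h$, and (ii) a composition lemma showing how an $\epsilon$-cover of the depth-$(d{-}1)$ sub-network's sample outputs, together with a cover of the last weight matrix $W_d$, yields a cover of the depth-$d$ sample outputs, with approximation errors propagating through the $1$-Lipschitz activations and being amplified only by the spectral norms.

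For (i): let $X$ have rows $\bx_1,\ldots,\bx_m$ and consider the image $\{X W^\top : \norm{W^\top}_{2,1}\le b\}$, whose rows are the layer outputs $(W\bx_i)^\top$. Writing each row of $W$ as a scaled point in an $\ell_1$-ball and invoking the empirical method of Maurey (approximating a convex combination by a short average of i.i.d.\ draws from it), this set admits an $\epsilon$-cover in Frobenius norm of size $\exp\!\big(\lceil b^2\norm{X}^2/\epsilon^2\rceil\log(2h^2)\big)$, where $\norm{X}$ is the spectral norm of $X$ (so $\norm{X}\le\sqrt m\,B$), and the cardinality depends on $h$ only through $\log h$. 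For (ii): peel off layer $d$. Given an $\epsilon_{d-1}$-cover $\hat U$ of the sub-network outputs and, over it, an $\epsilon_d$-cover $\hat V$ of $\{W_d A : A\in\hat U\}$, one checks that $W_d\sigma_{d-1}(\cdot)$ is covered to accuracy $\epsilon_d+M(d)\cdot(\text{error from }\hat U)$; iterating over $j=d,d-1,\ldots,1$ with $\sigma_j$ being $1$-Lipschitz and $\sigma_j(\mathbf{0})=\mathbf{0}$, the per-layer errors at scales $\epsilon_1,\ldots,\epsilon_d$ accumulate with multiplicative weights $\prod_{k>j}M(k)$, and the ``data matrix'' feeding layer $j$ has spectral norm at most $\norm{X}\prod_{k<j}M(k)$. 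Since $\norm{W_j^\top}_{2,1}\le L(j)M(j)$, plugging (i) into this recursion and pulling out the common factor gives a total log-covering number at scale $\epsilon=\sum_j\epsilon_j$ bounded by $\log(2h^2)\,\norm{X}^2\big(\prod_{k}M(k)\big)^2\sum_{j=1}^{d}L(j)^2/\epsilon_j^2$, with the split $\epsilon_1,\ldots,\epsilon_d$ still free.

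The combinatorial heart of the argument is the optimization over this split: minimize $\sum_j\epsilon_j$ given a fixed value of $\sum_j L(j)^2/\epsilon_j^2$. A Lagrange-multiplier (equivalently, power-mean/H\"older) computation shows the optimum is at $\epsilon_j\propto L(j)^{2/3}$, with value proportional to $\big(\sum_j L(j)^{2/3}\big)^{3/2}$ — this is exactly where the $\big(\sum_j L(j)^{2/3}\big)^{3/2}$ factor is born. Substituting back, $\log\Ncal(\epsilon)\lesssim \log(2h^2)\,\norm{X}^2\big(\prod_j M(j)^2\big)\big(\sum_j L(j)^{2/3}\big)^3/\epsilon^2$, so $\sqrt{\log\Ncal(\epsilon)}\asymp C/\epsilon$ with $C=\sqrt{\log(2h^2)}\,\norm{X}\,\prod_j M(j)\,\big(\sum_j L(j)^{2/3}\big)^{3/2}$. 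Finally, feed this into Dudley's integral
\[
\hat\Rcal_m(\Hcal)\ \lesssim\ \inf_{\alpha>0}\Big(\alpha+\tfrac{1}{\sqrt m}\textstyle\int_\alpha^{R}\sqrt{\log\Ncal(\epsilon)}\,d\epsilon\Big),
\]
where $R=\sup_{h\in\Hcal,\bx\in\Xcal}|h(\bx)|\le B\prod_j M(j)$ is the output range. The integrand is $\asymp C/\epsilon$, so the integral is $\asymp C\log(R/\alpha)$; choosing $\alpha$ polynomially small in $m$ (say $\alpha\asymp R/m$) kills the leading $\alpha$ term and turns $\log(R/\alpha)$ into $\Ocal(\log m)$. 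Using $\norm{X}\le\sqrt m\,B$ cancels one $\sqrt m$ against the $1/\sqrt m$ prefactor, leaving $\Ocal\big(B\log(h)\log(m)\prod_j M(j)\,(\sum_j L(j)^{2/3})^{3/2}/\sqrt m\big)$, as claimed.

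The step I expect to be the main obstacle is making the layer-by-layer covering argument compose cleanly with the correct weights while keeping the width dependence logarithmic: one must apply the Maurey lemma so that a \emph{single} cover of $W_j$ handles all elements of the previous layer's cover simultaneously, track the spectral norm of the effective ``data'' entering each layer, and then verify that the per-layer radius optimization genuinely produces the $2/3$-exponents rather than a cruder aggregate bound. The Dudley step and the $\sqrt m$ cancellation are comparatively routine once the covering bound is established.
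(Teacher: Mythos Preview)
The paper does not contain its own proof of this statement: \thmref{thm:bartlett} is quoted from \citet{bartlett2017spectrally} as an intermediary result and is used as a black box in the proof of \corollaryref{cor:sqrtmrate2}. Your proposal is therefore not being compared against a proof in the present paper, but rather against the argument in the cited reference.

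That said, your sketch accurately reconstructs the approach of \citet{bartlett2017spectrally}: the Maurey sparsification lemma for covering a single linear layer with only logarithmic width dependence, the inductive layer-peeling with errors propagated via the spectral norms, the H\"older/Lagrange optimization over the per-layer cover radii $\epsilon_j$ that produces the characteristic $\big(\sum_j L(j)^{2/3}\big)^{3/2}$ factor, and the final Dudley integral with $\alpha$ chosen polynomially in $1/m$ to yield the $\log m$ factor. One small clarification on the step you flag as the main obstacle: the Maurey cover of $W_j$ is \emph{not} taken once for all previous-layer outputs; rather, for each element of the depth-$(j{-}1)$ cover one builds a cover of the layer-$j$ image, and the resulting cover sizes multiply (so log-cover sizes add). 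This is consistent with the additive form $\sum_j L(j)^2/\epsilon_j^2$ you write down, but your phrasing ``a single cover of $W_j$ handles all elements of the previous layer's cover simultaneously'' slightly misstates the construction.
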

	As discussed in the introduction, $L(j)$ can never be smaller than $1$, 
	hence 
	the bound scales at least as $\sqrt{d^3/m}$. However, using the bound above 
	together with \thmref{thm:depthreduc} and \lemref{lem:ralphabeta}, we can 
	get 
	the following corollary (where for simplicity, we assume that 
	$L(j)$ for all $j$ are uniformly bounded by some $L$):
	\begin{corollary}
		\label{cor:sqrtmrate2}
		Let $\Hcal$ be the class of depth-$d$, width-$h$  networks with 
		$1$-Lipschitz, positive-homogeneous, element-wise activation functions.
		Assuming the loss function $\ell$ and $\Hcal$ satisfy the conditions of 
		\thmref{thm:depthreduc} (with 
		$\Wcal_j=\left\{W_j:\frac{\norm{W_j^T}_{2,1}}{\norm{W_j}}\leq 
		L\right\}$ 
		for all $j$), it holds that the Rademacher complexity $\hat 
		\Rcal_m(\ell 
		\circ \Hcal)$ is at most
		%
		\[
		\Ocal \left( \frac{BL \log(h) \log(m) \prod_{j=1}^d M(j)}{\gamma}
		\cdot \min \left\{ \frac{\bar{\log} \left( \frac{1}{\Gamma} 
		\prod_{j=1}^d 
		M_p(j) 
			\right)^{\frac{1}{\frac{2}{3}+p}} \left( 
			\bar{\log}^{3/2}(m)\right)^{\frac{1}{1+\frac{3}{2}p}}}{m^{\frac{1}{2+3p}}},\frac{d^{3/2}}{\sqrt
			m} \right\} \right)~,
		\]
		where $\bar{\log}(z):=\max\{1,\log(z)\}$.		
	\end{corollary}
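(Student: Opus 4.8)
The plan is to follow the recipe used for Corollary~\ref{cor:sqrtmrate}, but with \thmref{thm:bartlett} in place of \thmref{thm:sqrtl}: bound the Rademacher complexity of the reduced-depth classes $\Hcal_{r'}$, feed this into \thmref{thm:depthreduc}, and optimize the free parameter $r$ via \lemref{lem:ralphabeta}, this time with exponent $\alpha=\tfrac32$ rather than $\alpha=\tfrac12$.

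The first step is to bound $\hat{\Rcal}_m(\Hcal_{r'})$ for each $r'\in\{1,\dots,d\}$. A network in $\Hcal_{r'}$ has its first $r'-1$ layers in $\Wcal_j=\{W:\norm{W^T}_{2,1}/\norm{W}\le L\}$, and, since $\Hcal_{r'}$ maps to $\reals$, its last matrix $W_{r'}$ is a single row; for any row vector $\norm{W_{r'}^T}_{2,1}=\norm{W_{r'}}$, so the ratio controlled by \thmref{thm:bartlett} also equals $1\le L$ for the output layer. Hence \thmref{thm:bartlett} applies with $L(j)\le L$ for all $j\le r'$ (the extra Schatten-$p$ constraints in $\Hcal_{r'}$ only shrink the class, so its bound still holds), and since $\big(\sum_{j=1}^{r'}L(j)^{2/3}\big)^{3/2}\le(r'L^{2/3})^{3/2}=L\,r'^{3/2}$,
\[
\frac{\hat{\Rcal}_m(\Hcal_{r'})}{\prod_{j=1}^{r'}M(j)}~\le~\Ocal\!\left(\frac{B\,L\log(h)\log(m)\,r'^{3/2}}{\sqrt m}\right);
\]
this is increasing in $r'$, so the maximum over $r'\le r$ appearing in \thmref{thm:depthreduc} is attained at $r'=r$. (This is the one place where it matters that the real-valued output layer carries no width factor; otherwise the reduction would reintroduce a dependence on $h$.)

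Plugging this into \thmref{thm:depthreduc} pulls out the prefactor $cB\prod_j M(j)/\gamma$, cancels its $1/B$ against the $B$ above, and leaves inside $\min_r$ -- after discarding the lower-order $\tfrac{1+\sqrt{\log r}}{\sqrt m}$ term, which is dominated by the $r^{3/2}$ term and the $d^{3/2}$ branch -- the expression $C\,L\log(h)\log^{5/2}(m)\,m^{-1/2}\,r^{3/2}+\big(\bar{\log}(\tfrac{1}{\Gamma}\prod_{j=1}^{d}M_p(j))/r\big)^{1/p}$, where the $\log^{5/2}(m)$ is the $\log^{3/2}(m)$ from \thmref{thm:depthreduc} times the $\log(m)$ from \thmref{thm:bartlett}. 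I then apply \lemref{lem:ralphabeta} with $\alpha=\tfrac32$, $\beta=\tfrac1p\in(0,1]$, $b=\bar{\log}(\tfrac{1}{\Gamma}\prod_j M_p(j))^{1/p}\ge1$ (using $\prod_j M_p(j)\ge\prod_j\norm{W_j}_p\ge\prod_j\norm{W_j}\ge\Gamma$), with $C\,L\log(h)\log^{5/2}(m)$ in the role of $c$ and $\sqrt m$ in the role of $n$. Because $\tfrac{\alpha}{\alpha+\beta}=\tfrac{3}{3p+2}=\tfrac{1}{2/3+p}$ and $\tfrac{\beta}{\alpha+\beta}=\tfrac{2}{3p+2}$, the first branch of the lemma is $\Ocal\big(\bar{\log}(\tfrac{1}{\Gamma}\prod_j M_p(j))^{1/(2/3+p)}(L\log(h)\log^{5/2}(m))^{2/(2+3p)}\,m^{-1/(2+3p)}\big)$; multiplying by the prefactor and using $L^{2/(2+3p)}\le L$, $\log(h)^{2/(2+3p)}\le\log(h)$ and $\big(\log^{5/2}(m)\big)^{2/(2+3p)}\le\bar{\log}(m)\,(\bar{\log}^{3/2}(m))^{1/(1+\frac32p)}$ (valid since $\tfrac{5}{2+3p}\le1+\tfrac{3}{2+3p}$) reproduces the first term of the stated minimum.

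For the $d^{3/2}/\sqrt m$ branch, taking $r=d$ in \thmref{thm:depthreduc} already produces such a bound up to extra $\log(m)$ factors, while the clean form comes from applying \thmref{thm:bartlett} directly to the full depth-$d$ class (all of whose layers lie in $\Wcal_j$, so $\big(\sum_{j=1}^d L(j)^{2/3}\big)^{3/2}\le L\,d^{3/2}$) and then passing to $\ell\circ\Hcal$ by the standard Ledoux--Talagrand contraction for the $\tfrac1\gamma$-Lipschitz losses, writing $\ell_i=\phi_i+a$ with $\phi_i(\mathbf{0})=0$ so that the offset $a$ drops out of the Rademacher complexity (the hypothesis $|a|\le B\prod_j M(j)/\gamma$ being needed only for the other branch, inside \thmref{thm:depthreduc}). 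Taking the minimum of the two branches yields the corollary. I expect essentially all of the effort to be this bookkeeping -- in particular, carrying the $\log(m)$ exponents correctly through \lemref{lem:ralphabeta} -- with the one substantive ingredient being the output-layer observation above, which keeps the bound free of spurious width dependence.
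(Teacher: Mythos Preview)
Your proposal is correct and follows essentially the same route as the paper: observe that the output layer of $\Hcal_{r'}$ is a row vector so its $\norm{W^T}_{2,1}/\norm{W}$ ratio equals $1$, apply \thmref{thm:bartlett} to get the $r^{3/2}$ dependence, plug into \thmref{thm:depthreduc}, and optimize via \lemref{lem:ralphabeta} with $\alpha=\tfrac32$, $\beta=\tfrac1p$, combining with a direct application of \thmref{thm:bartlett} for the $d^{3/2}/\sqrt m$ branch. The only cosmetic difference is that the paper pulls the factor $L\log(h)\log(m)$ into the outer prefactor before invoking \lemref{lem:ralphabeta} (so there $c=\bar\log^{3/2}(m)$), whereas you keep it inside $c$ and then bound the resulting fractional powers $L^{2/(2+3p)}$, $\log(h)^{2/(2+3p)}$, $\log(m)^{5/(2+3p)}$ afterward; both give the stated bound.
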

	As before, by replacing the $\min$ by its first argument, we get a bound 
	which is fully independent of the network size, assuming the norms are 
	suitably 
	bounded. To give a concrete example, if we take $p=2$ (so that the $M_p(j)$ 
	constraints correspond to the Frobenius norm), and ignore lower-order 
	logarithmic factors, we get a bound scaling as
	\[
	\hat \Rcal_m 
	(\ell\circ\Hcal)~\leq~\tilde{\Ocal}\left(\frac{BL\prod_{j=1}^{d}M(j)}{\gamma}\cdot\min\left\{
	\sqrt[4]{\frac{\log^{3/2} \left( \frac{1}{\Gamma} \prod_{j=1}^d M_p(j) 
			\right)}{m^{1/2}}}~,~\sqrt{\frac{d^{3}}{m}}\right\}\right).
	\]
	In contrast, a direct application of \thmref{thm:bartlett} in the same 
	setting 
	leads to a bound of
	\[
	\hat \Rcal_m (\ell\circ\Hcal)~\leq~\tilde{\Ocal}\left(
	\frac{BL\prod_{j=1}^{d}M(j)}{\gamma}\cdot\sqrt{\frac{d^3}{m}}\right)~.
	\]
	
	Finally, we note that since the \eqref{eq:neyshabur2017pac}, based on a 
	PAC-Bayes analysis, is always weaker than \eqref{eq:bartlett2017} (as noted 
	in \cite{bartlett2017spectrally}), \corollaryref{cor:sqrtmrate2} also gives 
	a size-independent version of \eqref{eq:neyshabur2017pac}.
	
	\if 0
	in a similar manner,
	\thmref{thm:depthreduc} is also potentially applicable to improving the 
	bound 
	of \citet{neyshabur2017pac} (see \eqref{eq:neyshabur2017pac}). However, 
	their 
	result bounds the generalization error directly, using PAC-Bayesian 
	techniques, 
	and does not directly imply a bound on the Rademacher complexity; moreover, 
	as noted by \cite{bartlett2017spectrally}, . Therefore, it 
	is unclear whether a depth-independent version of 
	\eqref{eq:neyshabur2017pac} 
	is possible (although we conjecture this is the case). 
	\fi

	\section{A Lower Bound for Schatten Norms}\label{sec:lowerbound}
	
	In this section, we present a lower bound on the Rademacher complexity, for 
	the 
	class of neural networks with parameter matrices of bounded Schatten norms. 
	The 
	formal result is the following:
	\begin{theorem}\label{thm:lowerbound}
		Let $\Hcal$ be the class of depth-$d$, width-$h$ neural networks, where 
		each 
		parameter matrix $W_j$ satisfies $\norm{W_j}_p\leq M_p(j)$ for some 
		Schatten $p$-norm $\norm{\cdot}_p$ (and where we use the convention 
		that 
		$p=\infty$ refers to the spectral norm). Then there exists a choice of 
		$\frac{1}{\gamma}$-Lipschitz loss $\ell$ and data points 
		$\bx_1,\ldots,\bx_m\in\Xcal$, with respect to which
		\[
		\hat{\Rcal}_m(\ell\circ\Hcal)~\geq~    \Omega\left(
		\frac{B\prod_{j=1}^{d}M_p(j)\cdot 
			h^{\max\left\{0,\frac{1}{2}-\frac{1}{p}\right\}}}{\gamma\sqrt{m}}\right).
		\]
	\end{theorem}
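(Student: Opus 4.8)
\medskip
\noindent\textbf{Proof proposal.}
The plan is to work with the \emph{image of the class on the sample}, $K=\{(\ell_1(f(\bx_1)),\dots,\ell_m(f(\bx_m))):f\in\Hcal\}\subseteq\reals^m$, whose Rademacher support--function average equals $m\,\hat\Rcal_m(\ell\circ\Hcal)$, and to lower bound it by exhibiting an explicit subclass together with a convenient sample and losses. Throughout I take $\ell_i(z)=z/\gamma$ (which is $\tfrac1\gamma$-Lipschitz, vanishes at $0$, and has $|a|=0$), so that it suffices to lower bound $\hat\Rcal_m(\Hcal)$ and divide by $\gamma$; and I use ReLU (positive--homogeneous) activations, feeding the networks only points in the positive orthant of the relevant coordinates so that every ReLU acts as the identity and each network reduces to an explicit multilinear expression in its parameters.

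\emph{The size-independent term.} As in the reduction behind \eqref{eq:class}: a rank-one matrix has Schatten $p$-norm equal to its Euclidean operator norm for every $p$, so choosing $W_1$ rank one and $W_2,\dots,W_d$ rank one and aligned makes a depth-$d$, width-$h$ network with $\norm{W_j}_p\le M_p(j)$ compute, on positive-orthant inputs, any linear map $\bx\mapsto\bv^\top\bx$ with $\norm{\bv}\le\prod_{j=1}^d M_p(j)=:\Pi$. Taking $\bx_1=\dots=\bx_m=B\be_1$ and letting $\bv$ range over $\{t\be_1:0\le t\le\Pi\}$ gives $\hat\Rcal_m(\Hcal)\ge \tfrac{\Pi B}{m}\,\E_{\bepsilon}\bigl(\textstyle\sum_i\epsilon_i\bigr)_+ = \Omega(\Pi B/\sqrt m)$, proving the theorem when $p\le 2$ (where $h^{\max\{0,1/2-1/p\}}=1$).

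\emph{The width term ($p>2$).} Here one exploits that the Schatten $p$-ball of radius $M_p$, unlike a Frobenius ball, contains "wide, full-rank" matrices (all singular values $M_p h^{-1/p}$, Frobenius norm $M_p h^{1/2-1/p}$), and — crucially — that a layer's singular-value vector is constrained only in the $\ell_p$-norm. First build a subclass on $h$ \emph{distinct} points $\bz_1,\dots,\bz_h$ (take $\bz_k=B\be_k$, $n=h$) whose achievable value vectors $(f(\bz_k))_k$ fill (at least) a scaled positive-orthant $\ell_p$-ball of radius $\rho=\Theta(B\Pi)$ in $\reals^h$ (for $p=\infty$, an $\ell_\infty$-ball of that radius). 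Then let the actual sample be $m/h$ exact copies of each $\bz_k$ (assume $h\mid m$ and, in the only interesting regime, $h\le m$). Writing $S_k$ for the signed count of the copies of $\bz_k$, the supremum over the subclass equals $\rho\,\E_{\bepsilon}\bigl\|(\max(S_k,0))_k\bigr\|_{p^*}$ with $1/p^*+1/p=1$; the $S_k$ are independent with $\E|S_k|\asymp\sqrt{m/h}$, so $\sum_k\max(S_k,0)^{p^*}$ concentrates around $h\,(m/h)^{p^*/2}$ and $\E_{\bepsilon}\|(\max(S_k,0))_k\|_{p^*}\asymp h^{1/p^*-1/2}\sqrt m=h^{1/2-1/p}\sqrt m$. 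Dividing by $m$ and $\gamma$ gives $\hat\Rcal_m(\ell\circ\Hcal)=\Omega\bigl(B\Pi\,h^{1/2-1/p}/(\gamma\sqrt m)\bigr)$, which dominates the size-independent term exactly for $p>2$; the two cases together give the claimed $h^{\max\{0,1/2-1/p\}}$.

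\emph{Main obstacle.} The hard step is this construction: ensuring the achievable value-vector set on $\bz_1,\dots,\bz_h$ is genuinely "$\ell_p$-fat of radius $\Theta(B\Pi)$" rather than merely "$\ell_2$-fat" — an $\ell_2$-ball gives \emph{no} width benefit after the copy-blow-up, and naive layered constructions leak a factor $h^{-1/p}$ (or more) of amplitude because the output layer, being $1\times h$, is only $\ell_2$-constrained. This is precisely where the Schatten-$p$ geometry (as opposed to Frobenius/spectral) must be used, and it is what forces $p>2$ and produces the exponent $1/2-1/p$; the remaining ingredients (the reparametrization in the size-independent term and the concentration/norm-comparison estimates in the blow-up) are routine.
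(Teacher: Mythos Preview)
Your size-independent term and the overall ``copies'' strategy for the width term are on track, and match the paper's approach in spirit. But the width case ($p>2$) has a real gap that you yourself flag under ``Main obstacle'' and then do not close: you commit to element-wise ReLU activations, yet with element-wise activations the final layer $W_d$ is a $1\times h$ row vector whose Schatten $p$-norm is its Euclidean norm for every $p$, so the $\ell_p$-freedom you need on the value vector $(f(\bz_1),\dots,f(\bz_h))$ cannot come from $W_d$. Trying to push it into earlier diagonal layers $W_j=\mathrm{diag}(\bw)$ does give $\|W_j\|_p=\|\bw\|_p$, but then the collapse from $\reals^h$ to $\reals$ still has to happen through $W_d$, and you lose exactly the $h^{1/2-1/p}$ you are after (your own ``leak a factor $h^{-1/p}$'' remark). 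As stated, the proposal is a plan with its key step missing.

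The paper closes this gap by abandoning element-wise activations for the lower bound: it takes the first-layer activation to be $\sigma(\bz)=\max_j z_j$ (which is $1$-Lipschitz and satisfies $\sigma(\mathbf 0)=0$, so it is admissible under the paper's setup). With $W_1=\mathrm{diag}(\bw)$, $\|\bw\|_p\le M_p(1)$, and inputs $\bx_i=B\be_{(i\bmod h)}$, the network value on $\be_k$ is $B\,\max(0,w_k)\cdot\prod_{j\ge 2}M_p(j)$, so the achievable value vector over the $h$ coordinate points is \emph{exactly} the positive orthant of an $\ell_p$-ball of radius $B\prod_j M_p(j)$. From there the paper simply plugs in $w_k=h^{-1/p}\,\mathrm{sign}(S_k)$ to read off the $h^{1/2-1/p}$ factor --- no concentration argument is needed. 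Note that the paper explicitly remarks that for depth $d>1$ this construction requires a non-element-wise activation; if you insist on ReLU, you will need a genuinely different idea.
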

	
	This theorem strengthens Theorem 3.6 in \citet{bartlett2017spectrally}, 
	in the sense that they only considered the case $p=\infty$, and did not have a dependence on $h$. On the other hand, they consider bounds which hold for any choice of $\bx_1,\ldots,\bx_m$, while we consider bounds uniform over 
	$\bx_1,\ldots,\bx_m$ for simplicity. Moreover, for network depths larger than $1$, our construction requires a non element-wise activation function. The lower bound has the following 
	implications:
	\begin{itemize}
		\item Like \citet{bartlett2017spectrally}, the theorem implies that by 
		controlling just the norms of each parameter matrix, a dependence on 
		the 
		product of the norms is generally inevitable.
		\item For $p=\infty$, we see that there is an inevitable $h^{1/2}$ 
		factor 
		in the bound, which implies that controlling the spectral norm is 
		insufficient to get size-independent bounds (at least, independent of 
		the 
		width $h$). More generally, any Schatten $p$-norm control with $p>2$ 
		will 
		be insufficient to get such size independence. 
		\item For $p=2$ (i.e. Frobenius norm bounds $M_F(1),\ldots,M_F(d)$), 
		the 
		lower bound becomes size-independent, and on the order of
		$\frac{B\prod_{j=1}^{d}M_F(j)}{ \sqrt{m}}$. The dependence on $M_F(j)$ is similar to our upper bound in Corollary \ref{cor:sqrtmrate} up to logarithmic factors (although the dependence on $m$ is worse).
	\end{itemize}

	\section{Additional Remarks}\label{sec:remarks}

	\subsection{Post-hoc Guarantees}
	
	So far we have proved upper bounds on the empirical Rademacher complexity 
	of a 
	fixed class of neural networks, of the form
	$$\Hcal_L = \{ h: C(h)\leq L\},$$ for some complexity measure 
	$C(\cdot)$
	and a parameter $L$. These imply high-probability learning guarantees for  
	algorithms which return predictors in $\Hcal_L$. However, in the context of 
	norm-based constraints, practical algorithms for neural networks usually 
	perform unconstrained optimization, and therefore are not guaranteed 
	a-priori 
	to return a predictor in some fixed $\Hcal_L$. Fortunately, it is 
	straightforward to convert these bounds into probabilistic guarantees for 
	\emph{any} neural network $h$, with the bound scaling appropriately with 
	the 
	complexity $C(h)$ of the particular network. We note that such 
	post-hoc guarantees have also been stated in the context of some previous 
	sample complexity bounds for neural networks (e.g., 
	\citet{bartlett2017spectrally,neyshabur2017pac}). It is achieved, for 
	instance, 
	by a union bound over, say, a doubling scale of the complexity. We refer to 
	the 
	proof of the margin bound of \citet[Theorem 2]{koltchinskii2002empirical} 
	for 
	an example of this technique.

	\subsection{Complexity of Lipschitz Networks}
	
	In proving the results of \secref{sec:depthind}, the key element has been 
	the 
	observation that under appropriate norm constraints, a neural network must 
	have 
	a layer with parameter matrix close to being rank-1, and therefore the 
	network 
	can be viewed as a composition of a shallower network and a univariate 
	Lipschitz function. In fact, this can be generalized: Whenever we have a 
	network with parameter matrix close to being rank-$k$, we can view it as a 
	composition of a shallow network and a Lipschitz function on $\reals^k$. 
	Although we do not develop this idea further in this paper, this 
	observation 
	might be useful in analyzing other types of neural network classes.
	
	Taking this to the extreme, we can also bound the complexity of neural 
	networks 
	computing Lipschitz functions, by studying the complexity of all Lipschitz 
	functions over the domain $\Xcal$. It is easily verified that in our 
	setting, 
	if we consider the class $\Hcal$ of depth-$d$ networks, where each 
	parameter 
	matrix has spectral norm at most $M(j)$, then the network must be 
	$\prod_{j=1}^{d}M(j)$-Lipschitz. Using well-known estimates of the covering 
	numbers of Lipschitz functions over $\Xcal$, we get that
	\[
	\hat{\Rcal}_m(\ell\circ\Hcal)~\leq~ 
	\Ocal\left(\frac{B\prod_{j=1}^{d}M(j)}{\gamma m^{1/\text{dim}(\Xcal)}}\cdot 
	\right)~,
	\]
	where the loss $\ell$ is assumed $\frac{1}{\gamma}$-Lipschitz, and where 
	$\text{dim}(\Xcal)$ is the dimensionality of $\Xcal$. Of course, the bound 
	has 
	a very bad dependence on the input dimension (or equivalently, the width of 
	the 
	first layer in the network), but on the other hand, has no dependence on 
	the 
	network depth, nor on any matrix norm other than the spectral norm. Again, 
	as 
	discussed in the previous subsection, it is also possible to use this bound 
	to 
	get post-hoc guarantees, without constraining the Lipschitz parameter of 
	the 
	learned network in advance.

	\section{Proofs}\label{sec:proofs}
	
	\subsection{Proofs of \lemref{lem:contraction_l1} and \thmref{thm:sqrtl1}}
	
	We first prove \lemref{lem:contraction_l1}. 
	Letting $\bw_j$ denote the $j$-th row of a matrix $W$, we have
	\begin{align*}
	\sup_{f\in\Fcal,W:\norm{W}_{1,\infty}\leq R}g\left(\left\|\sum_{i=1}^{m}
	\epsilon_i\sigma(Wf(\bx_i))\right\|_\infty\right) 
	&=\sup_{f\in\Fcal,W:\norm{\bw_j}_1\leq R} \max_{k} g\left( 
	\left|\sum_{i=1}^{m}
	\epsilon_i\sigma(\bw_k^\top f(\bx_i))\right| \right) \\
	&=\sup_{f\in\Fcal,\norm{\bw}_1\leq R} g\left( \left|\sum_{i=1}^{m}
	\epsilon_i\sigma(\bw^\top f(\bx_i))\right| \right) .
	\end{align*}
	Since $g(|z|)\leq g(z)+g(-z)$, the left-hand side of 
	\eqref{eq:contr_lemma2_statement} is upper 
	bounded by
	\begin{align*}
	\E_{\boldsymbol{\epsilon}}\sup_{f\in\Fcal,\norm{\bw}_1\leq R} g\left( 
	\sum_{i=1}^{m}
	\epsilon_i\sigma(\bw^\top f(\bx_i))\right) + \E_{\boldsymbol{\epsilon}} 
	\sup_{f\in\Fcal,\norm{\bw}_1\leq R} g\left( -\sum_{i=1}^{m}
	\epsilon_i\sigma(\bw^\top f(\bx_i))  \right)
	\end{align*}
	and the proof is concluded exactly as in Lemma~\ref{lem:contraction} by 
	appealing to \citet{ledoux2013probability}.
	
	We now turn to \thmref{thm:sqrtl1}, whose proof is rather similar to that 
	of 
	\thmref{thm:sqrtl}. Fixing $\lambda>0$ to be chosen later, the Rademacher 
	complexity can be upper bounded as
	\begin{align*}
	m\hat{\Rcal}_m(\Hcal_d) &= \E_{\boldsymbol{\epsilon}} 
	\sup_{N_{W_1^{d-1}},W_d} \sum_{i=1}^m 
	\epsilon_i W_d \sigma_{d-1}(N_{W_1^{d-1}}(\bx_i))  \\
	&\leq \frac{1}{\lambda} \log \E_{\boldsymbol{\epsilon}} \sup \exp 
	\lambda \sum_{i=1}^m 
	\epsilon_i W_d \sigma_{d-1}(N_{W_1^{d-1}}(\bx_i)) \\
	&\leq \frac{1}{\lambda} \log \E_{\boldsymbol{\epsilon}} \sup \exp  
	\left\{ M(d)\cdot 
	\left\|\lambda \sum_{i=1}^m \epsilon_i 
	\sigma_{d-1}(N_{W_1^{d-1}}(\bx_i)) \right\|_{\infty} \right\}
	\end{align*}
	Applying the same argument as in the proof of \thmref{thm:sqrtl}, and using 
	\lemref{lem:contraction_l1}, we can upper bound the above by
	\begin{equation}\label{eq:radboundl1}
	m\hat{\Rcal}_m(\Hcal_d) \leq \frac{1}{\lambda} \log \left(2^d \cdot 
	\E_{\boldsymbol{\epsilon}} 
	\exp \left( M \lambda \left\| \sum_{i=1}^m \epsilon_i \bx_i 
	\right\|_{\infty} \right) \right)~,
	\end{equation}
	where $M= \prod_{j=1}^d M(j)$. Letting $x_{i,j}$ denote the $j$-th 
	coordinate of $\bx_i$, and using symmetry, the expectation inside the log 
	can be re-written as
	\begin{align*}
	\E_{\boldsymbol{\epsilon}}\exp &\left( M \lambda\cdot \max_j \left| 
	\sum_{i=1}^m \epsilon_i 
	x_{i,j} 
	\right| \right)
	~\leq~ \sum_{j=1}^{n} \E_{\boldsymbol{\epsilon}}\exp\left( M \lambda\cdot 
	\left| \sum_{i=1}^m 
	\epsilon_i 
	x_{i,j} 
	\right| \right)\\
	&\leq~\sum_{j=1}^{n} \E_{\boldsymbol{\epsilon}}\left[\exp\left( M \lambda 
	\sum_{i=1}^m \epsilon_i 
	x_{i,j} 
	\right)+\exp\left(- M \lambda\sum_{i=1}^m \epsilon_i 
	x_{i,j} 
	\right)\right]\\
	&=~2\sum_{j=1}^{n} 
	\E_{\boldsymbol{\epsilon}}\exp\left(M\lambda\sum_{i=1}^{m}\epsilon_i 
	x_{i,j}\right)
	~=~2\sum_{j=1}^{n}\prod_{i=1}^{m}\E_{\boldsymbol{\epsilon}}\exp\left(M\lambda\epsilon_i
	x_{i,j}\right)\\
	&=~2\sum_{j=1}^{n}\prod_{i=1}^{m}\frac{\exp\left(M\lambda 
		x_{i,j}\right)+\exp\left(-M\lambda x_{i,j}\right)}{2}
	~\leq~ 2\sum_{j=1}^{n} 
	\exp\left(M^2\lambda^2\sum_{i=1}^{m}x_{i,j}^2\right)~,
	\end{align*}
	where in the last step we used the fact that 
	$\frac{\exp(z)+\exp(-z)}{2}\leq \exp(z^2/2)$. Further upper bounding this 
	by $2n\max_j \exp\left(M^2\lambda^2\sum_{i=1}^{d}x_{i,j}^2\right)$ and 
	plugging back to \eqref{eq:radboundl1}, we get 
	\begin{align*}
	\frac{1}{\lambda} \log \left(2^{d+1} n\cdot \max_j 
	\exp\left(M^2\lambda^2\sum_{i=1}^{m}x_{i,j}^2\right)\right)
	~=~
	\frac{d+1+\log(n)}{\lambda}+M^2\lambda\max_j \sum_{i=1}^{m}x_{i,j}^2~.
	\end{align*}
	Choosing $\lambda = 
	\sqrt{\frac{d+1+\log(n)}{M^2\max_j\sum_{i=1}^{m}x_{i,j}^2}}$, we can upper 
	bound the above by
	\[
	2M\sqrt{\left(d+1+\log(n)\right)\max_j\sum_{i=1}^{m}x_{i,j}^2},
	\]
	from which the result follows.

	\subsection{Proof of \thmref{thm:rankonereplace}}
	
	The proof will build on the following few technical lemmas.
	
	\begin{lemma}\label{lem:rankoneapprox}
		For any matrix $W$, and any Schatten $p$-norm $\norm{\cdot}_p$ such 
		that 
		$p<\infty$, there exists a 
		rank-1 matrix $\tilde{W}$ of the same size such that
		\[
		\norm{\tilde{W}}\leq 
		\norm{W}~~,~~\norm{\tilde{W}}_p\leq\norm{W}_p~~,~~\norm{W-\tilde{W}} 
		\leq 
		\left(\norm{W}^p_p-\norm{W}^p\right)^{1/p}.
		\]
	\end{lemma}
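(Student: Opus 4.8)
The plan is to let $\tilde{W}$ be the best rank-one approximation of $W$ in spectral norm, extracted from the singular value decomposition. First I would write the SVD as $W=\sum_{i=1}^{k}s_i\bu_i\bv_i^\top$ with $k=\mathrm{rank}(W)$, orthonormal singular-vector families $\{\bu_i\},\{\bv_i\}$, and singular values $s_1\geq s_2\geq\cdots\geq s_k>0$ listed in decreasing order; then I would set $\tilde{W}:=s_1\bu_1\bv_1^\top$, which has rank at most one and is built from a leading singular value / singular vector triple of $W$, as required by the way this lemma is used in \thmref{thm:rankonereplace}. (The case $W=0$ is trivial, taking $\tilde W = 0$.)

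For the first two inequalities I would observe that $\tilde{W}$ has a single nonzero singular value $s_1$, so $\norm{\tilde{W}}=s_1=\norm{W}$ and $\norm{\tilde{W}}_p=s_1\leq\bigl(\sum_{i=1}^{k}s_i^p\bigr)^{1/p}=\norm{W}_p$, where the middle step uses only nonnegativity of the $s_i$. For the third inequality, the key point is that $W-\tilde{W}=\sum_{i=2}^{k}s_i\bu_i\bv_i^\top$ is itself already written in SVD form (the remaining singular vectors are still orthonormal), so its spectral norm equals its top singular value $s_2$ — here the decreasing ordering of the $s_i$ is exactly what is needed, and I would read $s_2:=0$ when $k\leq 1$. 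Since $s_2$ is one of the nonnegative terms in $\sum_{i=2}^{k}s_i^p$,
\[
\norm{W-\tilde{W}}=s_2\leq\Bigl(\sum_{i=2}^{k}s_i^p\Bigr)^{1/p}=\bigl(\norm{W}_p^p-s_1^p\bigr)^{1/p}=\bigl(\norm{W}_p^p-\norm{W}^p\bigr)^{1/p}.
\]

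I expect essentially no obstacle here: once the singular values are taken in decreasing order, the claim is a one-line consequence of the SVD. The only points requiring a moment's care are this ordering (so that truncating the leading term leaves a matrix of operator norm exactly $s_2$ rather than something larger) and the degenerate rank-$\leq 1$ cases, for which both sides of the last inequality vanish and the bound holds with equality.
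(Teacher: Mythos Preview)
Your proposal is correct and follows essentially the same approach as the paper: take $\tilde{W}=s_1\bu_1\bv_1^\top$ from the SVD, note that $\norm{\tilde W}=s_1=\norm{W}$ and $\norm{\tilde W}_p=s_1\le\norm{W}_p$, and bound $\norm{W-\tilde W}=s_2\le\bigl(\sum_{i\ge 2}s_i^p\bigr)^{1/p}=\bigl(\norm{W}_p^p-\norm{W}^p\bigr)^{1/p}$. Your treatment of the degenerate rank-$\le 1$ case is a small refinement the paper leaves implicit.
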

	\begin{proof}
		Let $USV^\top$ denote the singular value decomposition of $W$, where 
		$S=\text{diag}(s_1,s_2,\ldots,s_r)$, and choose $\tilde{W}=\bu_1 
		s_{1}\bv_1^\top$, 
		where $\bu_1,\bv_1,s_1$ are top singular vectors and values of $W$. The 
		first two inequalities in the lemma are easy to verify. As to the third 
		inequality, using 
		the unitarial invariance of the spectral norm, we have
		\[
		\norm{W-\tilde{W}} = \norm{USV^\top-\bu_1s_1\bv_1^\top} = 
		\norm{U\text{diag}(0,s_2,\ldots,s_r)V^\top} = s_2 ~\leq~ 
		\left(\sum_{j=2}^{r}s_j^p\right)^{1/p}~=~ 
		\left(\sum_{j=1}^{r}s_j^p-s_1^p\right)^{1/p},
		\]
		which equals $\left(\norm{W}_p^p-\norm{W}^p\right)^{1/p}$.
	\end{proof}
	
	\begin{lemma}\label{lem:perturb}
		Given network parameters $W_1^d=\{W_1,\ldots,W_d\}$, let 
		$\tilde{W}_1^d=\{W_1,\ldots,W_{s-1},\tilde{W}_s,W_{s+1},\ldots,W_d\}$ 
		be 
		the same parameters, where the parameter matrix $W_s$ of the $s$-th 
		layer 
		(for some fixed $s\in\{1,\ldots,d\}$) is changed to some other matrix 
		$\tilde{W}_s$. Then 
		\[
		\sup_{\bx\in\Xcal}\norm{N_{W_1^d}(\bx)-N_{\tilde{W}_1^d}(\bx)} 
		~\leq~ 
		B\left(\prod_{j=1}^{d}\norm{W_{j}}\right)\frac{\norm{W_s-\tilde{W}_s}}{\norm{W_s}}~.
		\]
	\end{lemma}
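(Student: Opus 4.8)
The plan is to exploit the fact that the two networks $N_{W_1^d}$ and $N_{\tilde W_1^d}$ differ only in the $s$-th layer, so they feed the \emph{same} vector into $W_s$, and the portion of the network applied after the $s$-th layer is Lipschitz with a constant I can read off from the spectral norms. First I would fix $\bx\in\Xcal$ and let $\bv$ denote the vector that gets multiplied by $W_s$ in the forward pass, i.e.\ $\bv=\sigma_{s-1}(W_{s-1}\sigma_{s-2}(\cdots\sigma_1(W_1\bx)\cdots))$, with the convention $\bv=\bx$ when $s=1$. Since layers $1,\ldots,s-1$ coincide in both networks, $\bv$ is common to both, and we may write $N_{W_1^d}(\bx)=\Phi(W_s\bv)$ and $N_{\tilde W_1^d}(\bx)=\Phi(\tilde W_s\bv)$, where $\Phi$ is the map sending the output of the $s$-th linear layer to the final network output, namely $\Phi(\by)=W_d\sigma_{d-1}(\cdots\sigma_{s+1}(W_{s+1}\sigma_s(\by))\cdots)$ for $s<d$, and $\Phi(\by)=\by$ for $s=d$.

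Next I would bound the Lipschitz constant of $\Phi$. It is a composition of the $1$-Lipschitz activations $\sigma_s,\ldots,\sigma_{d-1}$ and the linear maps $\by\mapsto W_{s+1}\by,\ldots,\by\mapsto W_d\by$, each of which has Lipschitz constant equal to its operator (spectral) norm $\norm{W_j}$; hence $\Phi$ is $\big(\prod_{j=s+1}^d\norm{W_j}\big)$-Lipschitz, with the empty product interpreted as $1$. Consequently,
\[
\norm{N_{W_1^d}(\bx)-N_{\tilde W_1^d}(\bx)}=\norm{\Phi(W_s\bv)-\Phi(\tilde W_s\bv)}\leq\Big(\prod_{j=s+1}^{d}\norm{W_j}\Big)\,\norm{(W_s-\tilde W_s)\bv}\leq\Big(\prod_{j=s+1}^{d}\norm{W_j}\Big)\norm{W_s-\tilde W_s}\,\norm{\bv}.
\]

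It remains to bound $\norm{\bv}$. Using $\sigma_j(\mathbf{0})=\mathbf{0}$ together with $1$-Lipschitzness gives $\norm{\sigma_j(\bu)}\leq\norm{\bu}$ for every $\bu$, so peeling off layers $s-1,s-2,\ldots,1$ one at a time yields $\norm{\bv}\leq\big(\prod_{j=1}^{s-1}\norm{W_j}\big)\norm{\bx}\leq B\prod_{j=1}^{s-1}\norm{W_j}$. Substituting this into the previous display and noting that $\big(\prod_{j=1}^{s-1}\norm{W_j}\big)\big(\prod_{j=s+1}^{d}\norm{W_j}\big)=\big(\prod_{j=1}^{d}\norm{W_j}\big)/\norm{W_s}$, we obtain exactly the claimed inequality for the fixed $\bx$; taking the supremum over $\bx\in\Xcal$ completes the argument.

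In terms of difficulty, this is essentially a telescoping Lipschitz-composition estimate and I do not expect any genuine obstacle. The only points requiring care are the boundary cases $s=1$ and $s=d$, handled by the empty-product conventions above, and keeping straight that $\norm{W_j}$ here must denote the spectral norm, which is precisely what makes it the Lipschitz constant of the linear layer $\by\mapsto W_j\by$.
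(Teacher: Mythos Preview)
Your argument is correct and is essentially identical to the paper's proof: both decompose the network into the pre-$s$ part (whose output $\bv$ has norm at most $B\prod_{j=1}^{s-1}\norm{W_j}$), the $s$-th linear layer where the difference $W_s-\tilde W_s$ enters, and the post-$s$ part which is $\big(\prod_{j=s+1}^d\norm{W_j}\big)$-Lipschitz, then chain the inequalities. The paper's write-up is slightly more explicit in unrolling the intermediate steps, but there is no substantive difference.
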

	\begin{proof}
		By a simple calculation, we have that the Lipschitz constant of the 
		function
		$N_{W_b^s}$ is at most $\prod_{j=b}^{s}\norm{W_j}$. 
		
		Assume for now that $2\leq s\leq d-1$. By definition, we have 
		\[
		N_{W_1^d}(\bx) = 
		N_{{W}_{s+1}^d}(\sigma_{s}(W_s\sigma_{s-1}(N_{W_1^{s-1}}(\bx))))
		\]
		and 
		\[N_{\tilde{W}_1^d}(\bx) = 
		N_{W_{s+1}^d}(\sigma_{s}(\tilde{W}_s\sigma_{s-1}(N_{W_1^{s-1}}(\bx))))~.
		\]
		The Lipschitz constant of the 
		function $N_{W_{s+1}^{d}}$ is at most $\prod_{j=s+1}^{d}\norm{W_j}$, 
		and 
		the norm 
		of $N_{W_1^{s-1}}(\bx)$ is at most $\norm{\bx}\prod_{j=1}^{s-1}\norm{W_j}$.
		Therefore, for any $\bx\in\Xcal$,
		\begin{align*}
		\norm{N_{W_1^d}(\bx)-N_{\tilde{W}_1^d}(\bx)} &~=~
		\norm{N_{W_{s+1}^d}(\sigma_{s}(W_s\sigma_{s-1}(N_{W_1^{s-1}}(\bx))))
			-
			N_{W_{s+1}^d}(\sigma_{s}(\tilde{W}_s\sigma_{s-1}(N_{W_1^{s-1}}(\bx))))\
		}\\
		&~\leq~\left(\prod_{j=s+1}^{d}\norm{W_j}\right)\cdot
		\norm{\sigma_s(W_s\sigma_{s-1}(N_{W_1^{s-1}}(\bx)))-
			\sigma_{s}(\tilde{W}_s\sigma_{s-1}(N_{W_1^{s-1}}(\bx)))}\\
		&~\leq~ \left(\prod_{j=s+1}^{d}\norm{W_j}\right)\cdot
		\norm{W_s\sigma_{s-1}(N_{W_1^{s-1}}(\bx))-
			\tilde{W}_s\sigma_{s-1}(N_{W_1^{s-1}}(\bx)))}\\
		&~\leq~\left(\prod_{j=s+1}^{d}\norm{W_j}\right)\cdot
		\norm{W_s-\tilde{W}_s}\cdot\norm{\sigma_{s-1}(N_{W_1^{s-1}}(\bx)}\\
		&~\leq~\left(\prod_{j=s+1}^{d}\norm{W_j}\right)\cdot\norm{W_s-\tilde{W}_s}
		\cdot\left(\prod_{j=1}^{s-1}\norm{W_j}\right)\cdot\norm{\bx},
		\end{align*}
		from which the result follows after a simplification. The cases $s=1$ 
		and 
		$s=d$ are handled in exactly the same manner. 
	\end{proof}
	
	\begin{lemma}\label{lem:closetorankone}
		Suppose that $N_{W_1^d}$ is such that 
		$\prod_{j=1}^{d}\norm{W_j}\geq \Gamma$ and 
		$\prod_{j=1}^{d}\norm{W_j}_{p}\leq M$. Then for 
		any 
		$r\in \{1,\ldots,d\}$,
		\[
		\min_{j\in \{1,\ldots,r\}}\frac{\norm{W_j}_p}{\norm{W_j}} ~\leq~ 
		\left(\frac{M}{\Gamma}\right)^{1/r}.
		\]
	\end{lemma}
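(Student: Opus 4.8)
The plan is to bound the \emph{product} of the ratios $\norm{W_j}_p/\norm{W_j}$ over the first $r$ layers, and then invoke the fact that the minimum of $r$ positive numbers is at most their geometric mean.

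First I would record the elementary fact that for every layer $j$ the spectral norm is dominated by the Schatten $p$-norm, i.e.\ $\norm{W_j}\leq\norm{W_j}_p$, so in particular every ratio $\norm{W_j}_p/\norm{W_j}$ is at least $1$. Next, I would write
\[
\prod_{j=1}^{r}\frac{\norm{W_j}_p}{\norm{W_j}}
~=~\frac{\prod_{j=1}^{r}\norm{W_j}_p}{\prod_{j=1}^{r}\norm{W_j}},
\]
and control the denominator from below using the hypothesis $\prod_{j=1}^{d}\norm{W_j}\geq\Gamma$: splitting the product at $r$ gives $\prod_{j=1}^{r}\norm{W_j}\geq\Gamma/\prod_{j=r+1}^{d}\norm{W_j}$. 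Substituting this in and then using $\norm{W_j}\leq\norm{W_j}_p$ for the ``tail'' indices $j\in\{r+1,\ldots,d\}$ yields
\[
\prod_{j=1}^{r}\frac{\norm{W_j}_p}{\norm{W_j}}
~\leq~\frac{\prod_{j=1}^{r}\norm{W_j}_p\cdot\prod_{j=r+1}^{d}\norm{W_j}}{\Gamma}
~\leq~\frac{\prod_{j=1}^{d}\norm{W_j}_p}{\Gamma}
~\leq~\frac{M}{\Gamma},
\]
where the last inequality is the second hypothesis.

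Finally, since the ratios are positive, $\min_{j\in\{1,\ldots,r\}}\frac{\norm{W_j}_p}{\norm{W_j}}\leq\bigl(\prod_{j=1}^{r}\frac{\norm{W_j}_p}{\norm{W_j}}\bigr)^{1/r}\leq(M/\Gamma)^{1/r}$, which is the claim. There is essentially no obstacle here: the only ingredients are the norm inequality $\norm{W_j}\leq\norm{W_j}_p$ (valid for any $p<\infty$, and even $p=\infty$ with equality), the telescoping split of the products at index $r$, and the min-vs-geometric-mean bound. The one point to be slightly careful about is the direction in which $\norm{W_j}\leq\norm{W_j}_p$ is applied — it is used to lower-bound the spectral-norm tail product that appears in the numerator after clearing the $\Gamma$ denominator, so that the bound collapses to the full-depth product $\prod_{j=1}^{d}\norm{W_j}_p\leq M$.
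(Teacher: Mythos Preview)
Your proof is correct and follows essentially the same approach as the paper: bound $\prod_{j=1}^{r}\frac{\norm{W_j}_p}{\norm{W_j}}$ by $M/\Gamma$, then use that the minimum of $r$ positive numbers is at most their geometric mean. The paper arrives at the product bound slightly more directly---writing $\frac{M}{\Gamma}\geq\prod_{j=1}^{d}\frac{\norm{W_j}_p}{\norm{W_j}}\geq\prod_{j=1}^{r}\frac{\norm{W_j}_p}{\norm{W_j}}$ since each tail ratio is at least $1$---but your telescoping split achieves the same thing.
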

	\begin{proof}
		Fixing some $r$, and using the stated assumptions as well as the fact 
		that 
		$\norm{W}_p\geq \norm{W}$ for any $p$, we have
		\[
		\frac{M}{\Gamma}~\geq~ 
		\frac{\prod_{j=1}^{d}\norm{W_j}_{p}}{\prod_{j=1}^{d}\norm{W_j}}
		~=~ \prod_{j=1}^{d}\frac{\norm{W_j}_p}{\norm{W_j}}
		~\geq~ \prod_{j=1}^{r}\frac{\norm{W_j}_p}{\norm{W_j}}
		~\geq~ 
		\left(\min_{j\in\{1,\ldots,r\}}\frac{\norm{W_j}_p}{\norm{W_j}}\right)^{r}.
		\]
		Taking the $r$-th root from both sides, the result follows.
	\end{proof}
	
	With these lemmas in hand, we can now turn to prove 
	\thmref{thm:rankonereplace}. A direct application of 
	\lemref{lem:closetorankone} implies that for any $r\in \{1,\ldots,d\}$,
	\begin{equation}
	\label{eq:jstar}
	\min_{j\in \{1,\ldots,r\}}\frac{\norm{W_j}_p}{\norm{W_j}} ~\leq~ 
	\left(\frac{M}{\Gamma}\right)^{1/r}~.
	\end{equation}	
	Let $r' \in \{1, \ldots, r\}$ be the value of $j$ for which the above minimum is obtained. Combining \lemref{lem:rankoneapprox} and 
	\lemref{lem:perturb} (where the value of $s$ in \lemref{lem:rankoneapprox} is set to $r'$), we have that there 
	indeed exists a network $N_{\tilde{W}_1^d}$ with a rank-1 matrix in layer 
	$r'$, such that
	\begin{align}
	\sup_{\bx\in\Xcal}\norm{N_{W_1^d}(\bx)-N_{\tilde{W}_1^d}(\bx)}
	&~\leq~ 
	B\left(\prod_{j=1}^{d}\norm{W_{j}}\right)\cdot
	\frac{\left(\norm{W_{r'}}^p_p-\norm{W_{r'}}^p\right)^{1/p}}{\norm{W_{r'}}}\notag\\
	&~=~B\left(\prod_{j=1}^{d}\norm{W_{j}}\right)
	\left(\frac{\norm{W_{r'}}^p_p}{\norm{W_{r'}}^p}-1\right)^{1/p}.\label{eq:bound1}
	\end{align}

	Substituting \eqref{eq:jstar} into \eqref{eq:bound1}, we get that 
	\begin{align*}
	\sup_{\bx\in\Xcal}\norm{N_{W_1^d}(\bx)-N_{\tilde{W}_1^d}(\bx)}
	&~\leq~
	B\left(\prod_{j=1}^{d}\norm{W_{j}}\right)
	\left(\left(\frac{M}{\Gamma}\right)^{p/r}-1\right)^{1/p}\\
	&~=~
	B\left(\prod_{j=1}^{d}\norm{W_{j}}\right)
	\left(\exp\left(\frac{p\log(M/\Gamma)}{r}\right)-1\right)^{1/p}~.
	\end{align*}
	Suppose for now that $r$ is such that $r\geq p\log(M/\Gamma)$. 
	Using the fact that $\exp(z)\leq 1+2z$ for any $z\in [0,1]$, it follows 
	that the above is at most
	\[
	B\left(\prod_{j=1}^{d}\norm{W_j}\right)\left(\frac{2p\log(M/\Gamma)}{r}\right)^{1/p}.
	\]
	It remains to consider the case where $r< p\log(M/\Gamma)$. However, in 
	this regime the theorem trivially holds: let $r'=r$, and let $\tilde{W}_{r}$ in the network 
	$N_{\tilde{W}_1^{r}}$ be the all-zeros matrix (which is rank zero and ensures 
	that $N_{\tilde{W}_1^{r}}(\bx)=\mathbf{0}$ for all $\bx$), and we have by 
	definition that
	\[
	\sup_{\bx\in\Xcal}\norm{N_{W_1^d}(\bx)-N_{\tilde{W}_1^d}(\bx)}=
	\sup_{\bx :\norm{\bx}\leq 
		B}\norm{N_{W_1^d}(\bx)}\leq
	B\prod_{j=1}^{d}\norm{W_j}
	<
	B\left(\prod_{j=1}^{d}\norm{W_j}\right)\left(\frac{2p\log(M/\Gamma)}{r}\right)^{1/p}.
	\]
	
	\subsection{Proof of \thmref{thm:radlipschitz}}
	
	To prove the theorem, we use a straightforward covering number argument, 
	beginning with a few definitions.
	
	Given any function class $\Fcal$, a metric $d$ on the elements of $\Fcal$, 
	and 
	$\epsilon>0$, we 
	let the covering number $\Ncal(\Fcal,d,\epsilon)$ denote the minimal number 
	$n$ 
	of functions $f_1,f_2,\ldots,f_n$ in $\Fcal$, such that for all 
	$f\in\Fcal$, 
	$\min_{i=1,\ldots,n} d(f_i,f)\leq \epsilon$. In particular, fix some set of 
	data points $\bx_1,\ldots,\bx_m$, and define the empirical $L_2$ 
	distance
	\[
	\hat{d}_m(f,f') = \sqrt{\frac{1}{m}\sum_{i=1}^{m}(f(\bx_i)-f'(\bx_i))^2}~.
	\]
	Also, given a function class $\Fcal$ and $\bx_1,\ldots,\bx_m$, we let 
	\[
	\hat{\Gcal}_m(\Fcal):=\E_{\eta}\left[\sup_{f\in\Fcal}
	\frac{1}{m}\sum_{i=1}^{m} \eta_i f(\bx_i)\right]
	\]
	denote the (empirical) Gaussian complexity of $\Fcal$, where 
	$\eta_1,\ldots,\eta_n$ are i.i.d. standard Gaussian random 
	variables. It is well known that 
	$\hat{\Rcal}_m(\Hcal)$ and $\hat{\Gcal}_m(\Hcal)$ are equivalent up to a 
	$c\sqrt{\log(m)}$ factor \citep[pg. 97]{ledoux2013probability}. By 
	Sudakov's 
	minoration theorem (see Theorem 3.18 in 
	\citet{ledoux2013probability}), we have that for all $\alpha>0$
	\begin{equation}\label{eq:covnumbound1}
	\log(\Ncal(\Hcal,\hat{d}_m,\alpha))\leq 
	c\left(\frac{\sqrt{m}\cdot\hat{\Gcal}_m(\Hcal)}{\alpha}\right)^2
	\end{equation}
	for a universal constant $c>0$. 
	
	With these definitions in hand, we turn to prove the theorem. We first note 
	that $\Fcal_{L,a}\circ\Hcal$ is equivalent to the class 
	$\{Lg(\cdot)+a:g\in\Fcal_{1,0}\circ\Hcal\}$, and therefore
	\begin{align}\label{eq:covnumbound0}
	\hat{\Rcal}_m(\Fcal_{L,a}\circ\Hcal)&= 
	\E_{\boldsymbol\epsilon}\left[\sup_{g\in 
		\Fcal_{L,a}\circ\Hcal}\frac{1}{m}\sum_{i=1}^{m}\epsilon_i 
		g(\bx_i)\right]
	~=~
	\E_{\boldsymbol\epsilon}\left[\sup_{g\in 
		\Fcal_{1,0}\circ\Hcal}\frac{1}{m}\sum_{i=1}^{m}\epsilon_i 
	(Lg(\bx_i)+a)\right]\notag\\
	&=\E_{\boldsymbol\epsilon}\left[L\cdot \sup_{g\in 
		\Fcal_{1,0}\circ\Hcal}\left(\frac{1}{m}\sum_{i=1}^{m}\epsilon_i 
	g(\bx_i)\right)+\frac{a}{m}\sum_{i=1}^{m}\epsilon_i\right]\notag\\
	&=L\cdot \E_{\boldsymbol\epsilon}\left[\sup_{g\in 
		\Fcal_{1,0}\circ\Hcal}\frac{1}{m}\sum_{i=1}^{m}\epsilon_i 
	g(\bx_i)\right]
	~=~L\cdot\hat{\Rcal}_m(\Fcal_{1,0}\circ\Hcal).
	\end{align}
	Therefore, it is enough to consider  
	$\hat{\Rcal}_m(\Fcal_{1,0}\circ\Hcal)$. To simplify notation in what 
	follows, 
	we will drop the $1,0$ subscript from $\Fcal$. 
	
	We first argue that
	\begin{equation}\label{eq:covnumbound2}
	\log(\Ncal(\Fcal,\hat{d}_m,\epsilon))~\leq~\log(\Ncal(\Fcal,\hat{d}_\infty,\epsilon)) \leq~
	c'\left(1+\frac{R}{\epsilon}\right),
	\end{equation}
	again for some numerical constant $c'>0$. 
	The first inequality follows from the fact that for any functions $f,f'$, it holds that 
	$\hat{d}_m(f,f')\leq 
	\hat{d}_{\infty}(f,f'):=\sup_{\bx}|f(\bx)-f'(\bx')|$, so it 
	is 
	enough to upper bound 
	$\Ncal(\Fcal,\hat{d}_{\infty},\epsilon)$. To do so, we first notice that 
	the range of any $f\in\Fcal$ is in $[-R,R]$. Discretize 
	$[-R,R]\times[-R,R]$ into a two-dimensional grid $\mathcal{U}_x\times 
	\mathcal{U}_y$, where
	\[
	\mathcal{U}_x := 
	\left\{-R,-R+\epsilon,-R+2\epsilon,\ldots,-R+\left\lfloor 
	2R/\epsilon\right\rfloor\epsilon,R\right\}~~~,~~~\mathcal{U}_y := 
	\{0,\pm\epsilon,\pm 2\epsilon,\ldots,\pm \lfloor 
	R/\epsilon\rfloor\epsilon,\pm 
	R\}.
	\]
	Given any $f\in\Fcal$, construct the piecewise-linear function $f'$ as 
	follows: For any input $x\in \mathcal{U}_x$, let $f'(x)$ be the point in 
	$\mathcal{U}_y$ nearest to $f(x)$ (breaking ties arbitrarily), and let the 
	rest 
	of $f'$ be constructed as a linear interpolation of these points on 
	$\mathcal{U}_x$. It is easily verified that $\sup_{x\in 
	[-R,R]}|f(x)-f'(x)|\leq 
	\epsilon$. Moreover, note that for two neighboring points $x,x'$ in 
	$\mathcal{U}_x$, the points $f'(x),f'(x')$ on $\mathcal{U}_y$ must be 
	neighboring or 
	the same. Therefore, each such function $f'$ can be parameterized by a 
	vector 
	of the form $\{-,0,+\}^{|\mathcal{U}_x|-1}$, which specifies whether 
	(starting 
	from the origin) $f'$ goes up, down, or remains the same on each of its 
	linear 
	segments. The number of such functions is at most 
	$3^{|\mathcal{U}_x|-1}\leq 
	3^{2R/\epsilon+1}$, and therefore 
	$\Ncal(\Fcal,\hat{d}_{\infty},\epsilon)\leq 
	3^{2R/\epsilon+1}$. Recalling that  
	$\hat{d}_\infty(f,f')$ majorizes $\hat{d}_{m}(f,f')$, we get 
	\eqref{eq:covnumbound2}.
	
	Next, we argue that 
	\begin{equation}\label{eq:covnumbound3}
	\Ncal(\Fcal\circ \Hcal,\hat{d}_m,\epsilon) ~\leq~ 
	\Ncal\left(\Fcal,\hat{d}_\infty,\frac{\epsilon}{2}\right)\cdot 
	\Ncal\left(\Hcal,\hat{d}_m,\frac{\epsilon}{2}\right)~.
	\end{equation}
	To see this, pick any $f\in \Fcal$ and $h\in 
	\Hcal$, and let $f',h'$ be the respective closest functions in the cover of 
	$\Fcal$ and $\Hcal$ (at scale $\epsilon/2$). By the triangle inequality 
	and 
	the easily verified fact that $f'$ is $1$-Lipschitz, we have 
	\begin{align*}
	\hat{d}_m(fh,f'h')&\leq \hat{d}_m(fh,f'h)+\hat{d}_m(f'h,f'h')
	~\leq~\hat{d}_{\infty}(fh,f'h)+\hat{d}_m(f'h,f'h')\\
	&\leq
	\sup_{x\in[-R,R]}|f(x)-f'(x)|+\sqrt{\frac{1}{m}\sum_{i=1}^{m}\left(f'h(\bx_i)-f'h'(\bx_i)\right)^2}\\
	&\leq 
	\frac{\epsilon}{2}+\sqrt{\frac{1}{m}\sum_{i=1}^{m}\left(h(\bx_i)-h'(\bx_i)\right)^2}\\
	&=\frac{\epsilon}{2}+\hat{d}_m(h,h')~\leq~ 
	\frac{\epsilon}{2}+\frac{\epsilon}{2}~=~\epsilon~.
	\end{align*}
	Therefore, we can cover $\Fcal\circ\Hcal$ (at scale $\epsilon$) by 
	taking 
	$f'(h'(\cdot))$ for all possible choices of $f',h'$ from the covers of 
	$\Fcal,\Hcal$ (at scale $\epsilon/2$), leading to 
	\eqref{eq:covnumbound3}. 
	
	Combining \eqref{eq:covnumbound1}, \eqref{eq:covnumbound2} and 
	\eqref{eq:covnumbound3}, we get that
	\[
	\log(\Ncal(\Fcal\circ \Hcal,\hat{d}_m,\epsilon))~\leq~ 
	c''\left(1+\frac{R}{\epsilon}+\left(\frac{\sqrt{m}\cdot\hat{\Gcal}_m(\Hcal)}{\epsilon}\right)^2\right)~,
	\]
	for some numerical constant $c''>0$. Finally, we use Dudley's entropy 
	integral, 
	which together with the equation above, implies the following for some 
	numerical constant $c>0$ (possibly changing from row to row):
	\begin{align*}
	\hat{\Rcal}_m(\Fcal\circ \Hcal)&\leq c\inf_{\alpha\geq 
		0}\left\{\alpha+\frac{1}{\sqrt{m}}\int_{\alpha}^{\sup_{g\in 
			\Fcal\circ\Hcal}\hat{d}_m(g,0)}\sqrt{\log(\Ncal(\Fcal\circ 
		\Hcal,\hat{d}_m,\epsilon))}d\epsilon\right\}\\
	&\leq
	c\inf_{\alpha\geq 
		0}\left\{\alpha+\frac{1}{\sqrt{m}}\int_{\alpha}^{R}\sqrt{\log(\Ncal(\Fcal\circ
		\Hcal,\hat{d}_m,\epsilon))}d\epsilon\right\}\\
	&\leq
	c\inf_{\alpha\geq 0}
	\left\{\alpha+\frac{1}{\sqrt{m}}\int_{\alpha}^{R}\sqrt{1+\frac{R}{\epsilon}
		+\left(\frac{\sqrt{m}\cdot\hat{\Gcal}_m(\Hcal)}{\epsilon}\right)^2}d\epsilon\right\}\\
	&\leq
	c\inf_{\alpha\geq 0}
	\left\{\alpha+\frac{1}{\sqrt{m}}\int_{\alpha}^{R}\left(1+\sqrt{\frac{R}{\epsilon}}
	+\left|\frac{\sqrt{m}\cdot\hat{\Gcal}_m(\Hcal)}{\epsilon}\right|\right)d\epsilon\right\}\\
	&\leq
	c\inf_{\alpha\geq 0}
	\left\{\alpha+\frac{1}{\sqrt{m}}\int_{0}^{R}\left(1+\sqrt{\frac{R}{\epsilon}}\right)d\epsilon
	+\hat{\Gcal}_m(\Hcal)\int_{\alpha}^{R}\frac{1}{\epsilon}d\epsilon\right\}\\
	&\leq 
	c\inf_{\alpha\geq 0}
	\left\{\alpha+\frac{R}{\sqrt{m}}
	+\hat{\Gcal}_m(\Hcal)\log\left(\frac{R}{\alpha}\right)\right\}~.
	\end{align*}
	Choosing in particular $\alpha=R/\sqrt{m}$, we get the upper bound
	\[
	\hat{\Rcal}_m(\Fcal\circ 
	\Hcal)~\leq~c\left(\frac{R}{\sqrt{m}}+\log(m)\cdot\hat{\Gcal}_m(\Hcal)\right)
	\]
	for some $c>0$. Plugging this into \eqref{eq:covnumbound0}, and upper 
	bounding 
	$\hat{\Gcal}_m(\Hcal)$ by $c\sqrt{\log(m)}\hat{\Rcal}_m(\Hcal)$ (see 
	\citep{ledoux2013probability}), the result follows. 
	
	\subsection{Proof of \thmref{thm:depthreduc}}
	
	We first need the following lemma, which bounds the empirical Rademacher complexity of the union of a collection of bounded function classes.
	\begin{lemma}
	\label{lem:radunion}
	 Fix $\bx_1, \ldots, \bx_m$. Suppose $\Hcal_1, \ldots, \Hcal_r$ are classes of functions uniformly bounded by some $A \in \reals_+$ on $\bx_1, \ldots, \bx_m$. Then
	 $$
	 \hat{\Rcal}_m(\Hcal_1 \cup \cdots \cup \Hcal_r) \leq \max_{1 \leq i \leq r} \hat{\Rcal}_m(\Hcal_i) +\Ocal\left(A \sqrt{\frac{ \log r}{m}}\right).
	 $$
	\end{lemma}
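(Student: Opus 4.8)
The plan is to rewrite the Rademacher complexity of the union as the expected maximum of $r$ (correlated) random variables, and then split that expectation into the maximum of the individual Rademacher complexities plus a deviation term that is controlled by a standard sub-Gaussian maximal inequality — exactly the kind of device already used in the proof of \thmref{thm:sqrtl}.

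Concretely, for each $i\in\{1,\ldots,r\}$ define the random variable (a function of $\bepsilon=(\epsilon_1,\ldots,\epsilon_m)$)
$$ Z_i \;=\; \sup_{h\in\Hcal_i}\frac1m\sum_{k=1}^m \epsilon_k h(\bx_k),$$
so that $\E_{\bepsilon}[Z_i]=\hat\Rcal_m(\Hcal_i)$, and, since the functions in $\Hcal_i$ are bounded by $A$ on $\bx_1,\ldots,\bx_m$, we also have $|Z_i|\le A$. Because $\sup_{h\in\Hcal_1\cup\cdots\cup\Hcal_r}=\max_{1\le i\le r}\sup_{h\in\Hcal_i}$, it holds that $\hat\Rcal_m(\Hcal_1\cup\cdots\cup\Hcal_r)=\E_{\bepsilon}[\max_{1\le i\le r} Z_i]$. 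Using the pointwise bound $\max_i Z_i\le \max_i(Z_i-\E Z_i)+\max_i\E Z_i$ and taking expectations gives
$$\hat\Rcal_m\big(\textstyle\bigcup_i\Hcal_i\big)\;\le\; \max_{1\le i\le r}\hat\Rcal_m(\Hcal_i)\;+\;\E_{\bepsilon}\Big[\max_{1\le i\le r}\big(Z_i-\E Z_i\big)\Big].$$

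Next I would show each $Z_i$ satisfies a bounded-difference condition in the $\epsilon_k$'s: flipping the sign of a single $\epsilon_k$ changes $\frac1m\sum_{k'}\epsilon_{k'}h(\bx_{k'})$ by at most $\frac{2A}{m}$ for every fixed $h\in\Hcal_i$, hence changes the supremum $Z_i$ by at most $\frac{2A}{m}$ as well. By the proof of Theorem~6.2 in \citep{boucheron2013concentration}, this implies $Z_i$ is sub-Gaussian with variance factor $v=\tfrac14\sum_{k=1}^m(2A/m)^2=A^2/m$, i.e. $\E_{\bepsilon}\exp\big(\lambda(Z_i-\E Z_i)\big)\le \exp\big(\lambda^2 A^2/(2m)\big)$ for all $\lambda>0$. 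Then the usual maximal inequality for sub-Gaussian variables (which needs no independence) applies: for any $\lambda>0$, by Jensen and a union over $i$ inside the exponential,
$$\exp\!\Big(\lambda\,\E\big[\max_i(Z_i-\E Z_i)\big]\Big)\;\le\; \E\max_i e^{\lambda(Z_i-\E Z_i)}\;\le\; \sum_{i=1}^r \E e^{\lambda(Z_i-\E Z_i)}\;\le\; r\,e^{\lambda^2 A^2/(2m)},$$
so $\E[\max_i(Z_i-\E Z_i)]\le \frac{\log r}{\lambda}+\frac{\lambda A^2}{2m}$, and choosing $\lambda=\sqrt{2m\log r}/A$ yields $\E[\max_i(Z_i-\E Z_i)]\le A\sqrt{2\log r/m}=\Ocal\big(A\sqrt{\log r/m}\big)$. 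Substituting into the displayed bound above proves the lemma.

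This argument is essentially routine; the only step that requires a little care is the sub-Gaussian estimate for the supremum random variables $Z_i$, namely checking that the bounded-difference property passes through the supremum and then invoking the concentration result — but this is precisely the mechanism already exploited in the proof of \thmref{thm:sqrtl}, so I do not anticipate any real obstacle.
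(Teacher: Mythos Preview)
Your proposal is correct and follows essentially the same approach as the paper: both establish a bounded-difference (hence sub-Gaussian) property for the per-class suprema via Theorem~6.2 of \citep{boucheron2013concentration}, and then control the maximum over the $r$ classes. The only cosmetic difference is that you bound $\E[\max_i(Z_i-\E Z_i)]$ directly via the MGF maximal inequality, whereas the paper obtains a McDiarmid tail bound, applies a union bound, and integrates the tail; these are equivalent standard routes to the same $\Ocal(A\sqrt{\log r/m})$ term.
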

	\begin{proof}
	The inequality is trivial if $r = 1$, so we may assume without loss of generality that $r \geq 2$. Fix some $j, 1 \leq j \leq r$. Define $\phi : \{\pm 1\}^m \to \reals$ by $\phi(\epsilon_1, \ldots, \epsilon_m) = \frac 1m \sup_{h \in \Hcal_j} \left| \sum_{i=1}^m \epsilon_i h(\bx_i) \right|$. Note that for all $\epsilon_1, \ldots, \epsilon_m$, $1 \leq i \leq m$,
	$$
	\left| \phi(\ep_1, \ldots, \ep_{i-1}, 1, \ep_{i+1}, \ldots, \ep_m) - \phi(\ep_1, \ldots, \ep_{i-1}, -1, \ep_{i+1}, \ldots, \ep_m) \right| \leq 2A/m,
	$$
	so $\phi$ satisfies a bounded differences assumption with variance factor $A^2/m$. By McDiarmid's inequality (Theorem 6.2, \cite{boucheron2013concentration}) it follows that for $t > 0$,
	$$
	\p_{\ep_1, \ldots, \ep_m} \left[ \frac 1m \sup_{h \in \Hcal_j} \left| \sum_{i=1}^m \ep_i h(\bx_i) \right| - \hat{\Rcal}(\Hcal_j) > t\right] \leq \exp(-t^2m/(2A^2)).
	$$
	(The same inequality, up to a constant factor, also follows directly from Theorem 4.7 in \cite{ledoux2013probability}.) A union bound then shows that for $t > 0$,
	\begin{eqnarray}
	&&\p_{\ep_1, \ldots, \ep_m} \left[ \frac 1m \sup_{h \in \bigcup_j \Hcal_j} \left| \sum_{i=1}^m \ep_i h(\bx_i) \right| - \max_{1 \leq j \leq r} \hat{\Rcal}(\Hcal_j) > t + \sqrt{\frac{ 2 A^2 \log r}{m}} \right] \nonumber\\
	&\leq& r \cdot \exp\left( -\left(t + \sqrt{\frac{2 A^2 \log r}{m}}\right)^2\cdot \frac{m}{2A^2}\right)\nonumber\\
	& \leq & r \cdot \exp(-\log r) \cdot \exp(-t^2m/(2A^2)) \nonumber\\
	& = & \exp(-t^2m/(2A^2))\nonumber.
	\end{eqnarray}
	Therefore,
	\begin{eqnarray}
	\hat{\Rcal}\left( \bigcup_{j=1}^r \Hcal_j \right) & = & \E_{\ep_1, \ldots, \ep_m} \left[ \frac 1m \sup_{h \in \bigcup_{j=1}^r \Hcal_j}\left| \sum_{i=1}^m \ep_i h(\bx_i) \right| \right]\nonumber\\
	& \leq & \max_{1 \leq j \leq r} \hat{\Rcal}(\Hcal_j) + \sqrt{\frac{2A^2 \log r}{m}} + \int_0^\infty \exp(-t^2m/(2A^2)) dt\nonumber\\
	&=& \max_{1 \leq j \leq r} \hat{\Rcal}(\Hcal_j) + \sqrt{\frac{2A^2 \log r}{m}} + \frac 12 \cdot \sqrt{\frac{2A^2\pi}{m}}\nonumber\\
	& \leq &  \max_{1 \leq j \leq r} \hat{\Rcal}(\Hcal_j) +2\sqrt 2 A \cdot \frac{\sqrt{\log r}}{\sqrt m} \nonumber.
	\end{eqnarray}

	\end{proof}
	
	We next continue with the proof of \thmref{thm:depthreduc}. It is enough to prove the bound for 
	any fixed $r\in\{1,\ldots,d\}$, and then take the infimum over any such 
	$r$. 
	
	Given $\Hcal$ and $r$, construct a new hypothesis class 
	$\tilde{\Hcal}$ by replacing each 
	network $h\in\Hcal$ by the network $\tilde{h}$ as defined in 
	\thmref{thm:rankonereplace} (namely, where the parameter matrix in the 
	$r'$-th 
	layer is replaced by a rank-1 matrix, for some $r' \in \{1, \ldots, r\}$). We will use the notation 
	$\tilde{h}_h$ 
	to clarify the dependence of $\tilde{h}$ on $h$. According to that theorem, 
	as 
	well as the definition of Rademacher complexity, we have
	\begin{align}
	\hat{\Rcal}_m(\ell\circ\Hcal)&=
	\E_{\boldsymbol{\epsilon}}\left[\sup_{h\in\Hcal}\frac{1}{m}\sum_{i=1}^{m}
	\epsilon_i \ell(h(\bx_i))\right]\notag\\
	&=
	\E_{\boldsymbol{\epsilon}}\left[\sup_{h\in\Hcal}\left(\frac{1}{m}\sum_{i=1}^{m}
	\epsilon_i 
	\ell(\tilde{h}_h(\bx_i))+\frac{1}{m}\sum_{i=1}^{m}\epsilon_i\left(\ell(h(\bx_i))-
	\ell(\tilde{h}_h(\bx_i))\right)\right)\right]\notag\\
	&\leq
	\E_{\boldsymbol{\epsilon}}\left[\sup_{h\in\Hcal}\left(\frac{1}{m}\sum_{i=1}^{m}
	\epsilon_i 
	\ell(\tilde{h}_h(\bx_i))+\sup_{\bx\in\Xcal}|\ell(h(\bx))-\ell(\tilde{h}_h(\bx))|\right)
	\right]\notag\\
	&\leq
	\E_{\boldsymbol{\epsilon}}\left[\sup_{\tilde{h}\in\tilde{\Hcal}}\left(\frac{1}{m}\sum_{i=1}^{m}
	\epsilon_i 
	\ell(\tilde{h}(\bx_i))+\frac{1}{\gamma}\cdot\sup_{\bx\in\Xcal}|h(\bx)-\tilde{h}_h(\bx)|\right)
	\right]\notag\\
	&\leq
	\hat{\Rcal}_m(\ell\circ\tilde{\Hcal})
	+\frac{B\left(\prod_{j=1}^{d}M(j)\right)}{\gamma}
	\left(\frac{2p\log\left(\frac{1}{\Gamma}
		\prod_{j=1}^{d}M_p(j)\right)}{r}\right)^{1/p}~.
	\label{eq:depthreduc1}
	\end{align}
	
	We now reach the crucial observation which lies at the heart of the proof. 
	Consider any network $N_{\tilde{W}_1^d}$ in $\tilde{\Hcal}$, and let 
	$s\bu\bv^\top$ be the singular value decomposition of its (rank-1) parameter matrix 
	$\tilde{W}_{r'}$ in layer $r'$ (where $r' \in \{1, \ldots, r\}$ as in \thmref{thm:rankonereplace}; $s,\bu,\bv$ are leading singular 
	value 
	and vectors of ${W}_{r'}$, by construction). By definition, we have that the 
	composition of $N_{\tilde{W}_1^d}$ with any loss $\ell_j$ equals 
	\[
	\bx\mapsto \ell_j(W_d\sigma_{d-1}(W_{d-1}\sigma_{d-2}(\ldots 
	\sigma_{r'}(s\bu\bv^\top\sigma_{r'-1}(\ldots\sigma_1(W_1\bx)))).
	\]
	This function is equivalent to the composition of the function
	\[
	\bx\mapsto 
	s\bv^\top 
	\sigma_{r'-1}(\ldots\sigma_1(W_1\bx))
	\]
	with the \emph{univariate} function 
	\[
	x\mapsto 
	\ell_j(W_d\sigma_{d-1}(W_{d-1}\sigma_{d-2}(\ldots\sigma_{r'}(\bu x))))~.
	\]
	Note that 
	since $\norm{s\bv^\top}=s=\norm{\tilde{W}_r}$ and 
	$\norm{s\bv^\top}_p=s\leq\norm{W_r}_p$,\footnote{In this equation, by an abuse of notation, $\norm{s\bv^\top}_p$ refers to the $p$-Schatten norm of the {\it matrix} $s\bv$, equivalently, the $p$-Schatten norm of the matrix $s\bv\bv^\top$.} the former function 
	is contained in 
	$\Hcal_{r'}$ as defined in the theorem; whereas the latter function has 
	Lipschitz 
	constant at most $\frac{1}{\gamma}\prod_{j=r'+1}^{d}\norm{W_j}\leq 
	\frac{1}{\gamma}\prod_{j=r'+1}^{d}M(j)$, and 
	maps the input $0$ to the same fixed output $a$ for 
	all 
	$j$. Therefore, we obtain that $\ell\circ\tilde{\Hcal}$ is contained in the following union:
	\[
	\bigcup_{r' \in \{1, \ldots, r\}} \Fcal_{\frac{1}{\gamma}\prod_{j=r'+1}^{d}M(j),a}\circ  \Hcal_{r'}.
	\]
	Notice that for each $r' \in \{1, \ldots, r\}$, 
	all functions of $\Hcal_{r'}$ have output is bounded in $\pm 
	B\prod_{j=1}^{r'}M(j)$). Recall also that
	$\Fcal_{\frac{1}{\gamma}\prod_{j=r'+1}^{d}M(j),a}$ is the class consisting of real-valued
	$\prod_{j=r'+1}^{d}M(j)$-Lipschitz functions $f$ such that $f(0) = a$. As a 
	result, we can apply 
	\thmref{thm:radlipschitz} and obtain that for $r' \in \{1, \ldots, r\}$,
	\begin{align*}
	&\hat{\Rcal}_m\left(\Fcal_{\frac{1}{\gamma}\prod_{j=r'+1}^{d}M(j),a}\circ\Hcal_{r'}\right)\\
	&\leq
	\frac{c}{\gamma}\left(\prod_{j=r'+1}^{d}M(j)\right)\left(\frac{B}{\sqrt{m}}\prod_{j=1}^{r'}M(j)
	+\log^{3/2}(m)\cdot\hat{\Rcal}_m(\Hcal_{r'})\right)\\
	&=
	\frac{c}{\gamma}\left(\prod_{j=1}^{d}M(j)\right)\left(\frac{B}{\sqrt{m}}
	+\frac{\log^{3/2}(m)\cdot\hat{\Rcal}_m(\Hcal_{r'})}
	{\prod_{j=1}^{r'}M(j)}\right)~.
	\end{align*}
	Notice that all functions in $\bigcup_{r' \in \{1, \ldots, r\}}\Fcal_{\frac{1}{\gamma}\prod_{j=r'+1}^{d}M(j),a}\circ\Hcal_{r'}$ have output bounded in $\pm A$, where 
	\[ 
	A := \frac{B}{\gamma}\prod_{j=1}^d M(j) + |a|.
	\]
	By Lemma \ref{lem:radunion}, for an appropriate constant $c'$,
	\[
	\hat{\Rcal}(\ell \circ \tilde{\Hcal}) \leq \frac{c}{\gamma}\left(\prod_{j=1}^{d}M(j)\right)\left(\frac{B}{\sqrt{m}}
	+\log^{3/2}(m)\cdot \max_{r' \in \{1, \ldots, r\}}\frac{\hat{\Rcal}_m(\Hcal_{r'})}
	{\prod_{j=1}^{r'}M(j)}\right) +c' \left(\frac{B}{\gamma} \prod_{j=1}^d M(j) + |a|\right) \cdot \sqrt{\frac{\log r}{m}}.
	\]
	
	Plugging this back into \eqref{eq:depthreduc1} and simplifying a bit (also 
	noting that $p^{1/p}$ can be upper bounded by a universal constant), we get 
	that $\hat{\Rcal}_m(\ell\circ\Hcal)$ is upper bounded by
	\[
	\frac{c''B\prod_{j=1}^{d}M(j)}{\gamma}
	\left(\log^{3/2}(m)\cdot \max_{r' \in \{1, \ldots, r\}}\frac{\hat{\Rcal}_m(\Hcal_{r'})}{B\prod_{j=1}^{r'}M(j)}+
	\left(\frac{\log\left(\frac{1}{\Gamma}\prod_{j=1}^{d}M_p(j)\right)}{r}\right)^{1/p}+
	\frac{1+\sqrt{\log r}}{\sqrt{m}}\right) + |a| \sqrt{\frac{\log r}{m}}.
	\]
	for an appropriate constant $c''$. As mentioned at the beginning of the 
	proof, 
	this upper bound holds for any 
	fixed $r\in\{1,\ldots,d\}$, from which the result follows using the assumption that $|a| \leq \frac{B\prod_{j=1}^d M(j)}{\gamma}$.

	\subsection{Proof of \lemref{lem:ralphabeta}}
	We will show that for any $\alpha,\beta,b,c,n$ as stated in the lemma, 
	there 
	always exists a choice of $r\in\{1,\ldots,d\}$ such that
	\[
	\min\left\{\frac{cr^\alpha}{n}+\frac{b}{r^\beta}~,~\frac{d^\alpha}{n}\right\}~\leq~
	3\cdot\frac{b^{\frac{\alpha}{\alpha+\beta}}}{(n/c)^{\frac{\beta}{\alpha+\beta}}}~.
	\]
	Since the left hand side is also trivially at most $\frac{d^\alpha}{n}$, 
	the result follows. We prove this inequality by a case analysis:
	\begin{itemize}
		\item If $(bn/c)^{\frac{1}{\alpha+\beta}}\in [1,d]$, pick 
		$r=\left\lfloor
		(bn/c)^{\frac{1}{\alpha+\beta}}\right\rfloor\in \{1,2,\ldots,d\}$, in 
		which case
		\[
		\frac{cr^\alpha}{n}+\frac{b}{r^\beta}~\leq~
		\frac{c\left((bn/c)^{\frac{1}{\alpha+\beta}}\right)^\alpha}{n}+\frac{
			b}{\left(\frac{1}{2}(bn/c)^{\frac{1}{\alpha+\beta}}\right)^\beta}
		~=~
		\frac{(b)^{\frac{\alpha}{\alpha+\beta}}}{(n/c)^{\frac{\beta}{\alpha+\beta}}}
		+2^\beta\frac{b^{\frac{\alpha}{\alpha+\beta}}}{(n/c)
			^{\frac{\beta}{\alpha+\beta}}}
		~\leq~3\frac{b^{\frac{\alpha}{\alpha+\beta}}}{(n/c)^{\frac{\beta}{\alpha+\beta}}}~.
		\]
		\item If $(bn/c)^{\frac{1}{\alpha+\beta}}>d$, it follows that
		\[
		\min\left\{\frac{cr^\alpha}{n}+\frac{b}{r^\beta}~,~\frac{d^\alpha}{n}\right\}
		~\leq~\frac{d^\alpha}{n}~<~c\frac{(bn/c)^{\frac{\alpha}{\alpha+\beta}}}{n}
		~=~\frac{b^{\frac{\alpha}{\alpha+\beta}}}{(n/c)^{\frac{\beta}{\alpha+\beta}}}~.
		\]
	\end{itemize}

	%
	%

	\subsection{Proof of Corollary \ref{cor:sqrtmrate}}
	
	A direct application of \thmref{thm:sqrtl}, as well as the fact that the 
	loss $\ell$ is $1/\gamma$ Lipschitz, implies that
	\begin{equation}\label{eq:sqrtmrate1}
	\hat{\Rcal}_m(\ell\circ\Hcal)~\leq~ \Ocal\left(
	\frac{B \prod_{j=1}^d 
		M_F(j)}{\gamma}\sqrt{\frac{d}{m}}
	\right).
	\end{equation}
	Next, we plug \eqref{eq:radd23} into \thmref{thm:depthreduc}. 
	We use $p=2$, let each $\Wcal_j$ be the space of all matrices, and 
	choose $M(j)=M_F(j)$ for all $j$, noting that 
	$\frac{\norm{W_j}}{M(j)}\leq \frac{\norm{W_j}_F}{M_F(j)}\leq 1$. Since $\sqrt r \geq \sqrt{r'}$ for all $r' \leq r$, it follows that 
	\begin{equation}\label{eq:sqrtmrate2}
	\hat{\Rcal}_m(\ell\circ\Hcal)~\leq~\Ocal\left(\frac{B \prod_{j=1}^d 
		M_F(j)}{\gamma} \left( \min_{r \in \{1 ,\ldots, d\}} \left\{ \frac{ 
		\log^{3/2}(m) \sqrt r 
	}{\sqrt{m} } + \sqrt{\frac{\log\left(\frac 1\Gamma \prod_{j=1}^d 
			M_F(j)\right)}{r}} \right\}\right) \right)~.
	\end{equation}
	We remark that the $\Ocal \left(\frac{1+\sqrt{\log r}}{\sqrt m}\right)$ term from \thmref{thm:depthreduc} is absorbed into the $\Ocal\left( \frac{\log^{3/2}( m) \sqrt{r}}{\sqrt m} \right)$ term in the minimization over $r$ in \eqref{eq:sqrtmrate2}. 
	Upper bounding $\hat{R}_m(\ell\circ\Hcal)$ by the minimum of 
	\eqref{eq:sqrtmrate1} and \eqref{eq:sqrtmrate2}, and using 
	\lemref{lem:ralphabeta} with $\alpha=\beta=\frac{1}{2}$, 
	$b=\sqrt{\bar{\log}\left(\frac 1\Gamma \prod_{j=1}^d M_F(j)\right)}$, 
	$c=\bar{\log}^{3/2}(m)$, and $n=\sqrt{m}$, the result follows.
	
	\subsection{Proof of \corollaryref{cor:sqrtmrate2}}
	Consider the class
	\[
	\Hcal_r =\left\{N_{W_1^r}~:~\begin{matrix}N_{W_1^r}\text{ maps to 
		$\reals$}\\
	\forall j\in\{1\ldots 
	r-1\},~~\frac{\norm{W_j^T}_{2,1}}{\norm{W_j}}\leq L\\
	\forall j\in\{1\ldots 
	r\},~~\max\left\{\frac{\norm{W_j}}{M(j)},\frac{\norm{W_j}_p}{M_p(j
		)}\right\}\leq 1
	\end{matrix}\right\}~.
	\]
	
	Since $N_{W_1^r}$ in the definition of $\Hcal_r$ maps to $\reals$, $W_r$ is 
	a vector, meaning that $\| W_r^T\|_{2,1} = \| W_r \|_2 = \| W_r\|$. 
	Therefore, we can use \thmref{thm:bartlett} to bound $\hat 
	\Rcal_m(\Hcal_r)$; in particular, for $r \geq 1$,
	\[
	\hat \Rcal_m (\Hcal_r) \leq \Ocal \left( \frac{B \log(h) \log(m) L 
		r^{3/2}\prod_{j=1}^r M(j)}{\sqrt m} \right).
	\]
	
	Since $r^{3/2} \geq (r')^{3/2}$ for all $r' \leq r$, it therefore follows from \thmref{thm:depthreduc} that 
	$\hat{\Rcal}_m(\ell\circ\Hcal)$ is at most 
	\[
	\Ocal\left(\frac{BL \log(h) \log(m) \prod_{j=1}^{d}M(j)}{\gamma}
	\cdot
	\min_{r\in\{1,\ldots,d\}}\left\{\frac{cr^{3/2}}{n} +
	\frac{b}{r^{1/p}}\right\}\right)~,
	\]
	where $c=\bar{\log}^{3/2}(m)$, $n=\sqrt{m}$, and
	$b = \bar{\log}\left( \frac{1}{\Gamma} \prod_{j=1}^d M_p(j) \right)^{1/p}$ 
	(note that we use here $\bar{\log}$ instead of $\log$ so the conditions of 
	\lemref{lem:ralphabeta}, to be used shortly, will be satisfied). 
	As in \corollaryref{cor:sqrtmrate}, the $\Ocal\left(\frac{1 + \sqrt{\log 
	r}}{\sqrt m}\right)$ term from \thmref{thm:depthreduc} is absorbed into the 
	$\Ocal\left( \frac{cr^{3/2}}{n} \right)$ term in the minimization over $r$ 
	in the above equation.
	
	On the other hand, a direct application of \thmref{thm:bartlett} also 
	implies 
	that $\hat{\Rcal}_m(\ell\circ\Hcal)$ is at most 
	\[
	\Ocal \left( \frac{B \prod_{j=1}^d M(j)}{\gamma} \cdot \frac{\log(h) 
	\log(m) 
		Ld^{3/2}}{\sqrt m}  \right).
	\]
	Combining the above two bounds, and applying Lemma \ref{lem:ralphabeta}, we 
	get 
	that $\hat{\Rcal}_m(\ell\circ\Hcal)$ is at most
	\[
	\Ocal \left( \frac{BL \log(h) \log(m) \prod_{j=1}^d M(j)}{\gamma}
	\cdot \min \left\{ \frac{\bar\log \left( \frac{1}{\Gamma} \prod_{j=1}^d M_p(j) 
		\right)^{\frac{1}{\frac{2}{3}+p}} \left( 
		\bar\log^{3/2}(m)\right)^{\frac{1}{1+\frac{3}{2}p}}}{m^{\frac{1}{2+3p}}},\frac{d^{3/2}}{\sqrt
		m} \right\} \right)~.
	\]
	

	\subsection{Proof of \thmref{thm:lowerbound}}
	
	By definition of the Rademacher complexity, it is enough to lower bound the 
	complexity of $\ell\circ \Hcal'$ where $\Hcal'$ is some subset of $\Hcal$. 
	In particular, consider the class $\Hcal'$ of neural networks over 
	$\reals^h$ of the form
	\[
	\bx\mapsto M_p(d)\cdot M_p(d-1)\cdots M_p(2)\cdot \sigma(W\bx),
	\]
	where $W=\text{diag}(\bw)$ is an $h\times h$ diagonal matrix satisfying 
	$\norm{\bw}_p\leq M_p(1)$ (here, $\norm{\cdot}_p$ refers to the vector 
	$p$-norm), and $\sigma(\bz)=\max_j z_j$. Furthermore, suppose that 
	$\ell(z)=\frac{1}{\gamma}z$. Finally, we will choose 
	$\bx_1,\ldots,\bx_m$ in $\reals^h$ as $\bx_i=B\be_{(i~\text{mod}~h)}$ for 
	all $i$, where $\be_t$ is the $t$-th standard basis vector.. Letting 
	$A_k=\{i\in \{1,\ldots,m\}:i~\text{mod}~h=k\}$, it holds that
	\begin{align*}
	\hat{\Rcal}_m(\ell\circ\Hcal')~&=~
	\E_{\boldsymbol{\epsilon}}\sup_{\bw:\norm{\bw}_p\leq M_p(1)}
	\frac{1}{m}\sum_{i=1}^{m}\epsilon_i \ell\left( 
	\prod_{j=2}^{d}M_p(j)\sigma(\text{diag}(\bw)\bx_i)\right)\\
	&=~ \frac{B\prod_{j=2}^{d}M_p(j)}{\gamma m}\cdot 
	\E_{\boldsymbol{\epsilon}}\sup_{\bw:\norm{\bw}_p\leq M_p(1)} 
	\sum_{k=1}^{h}\max\{0,w_k\}\cdot\sum_{i\in A_k}\epsilon_i\\
	&=~ \frac{B\prod_{j=1}^{d}M_p(j)}{\gamma m}\cdot 
	\E_{\boldsymbol{\epsilon}}\sup_{\bw:\norm{\bw}_p\leq 1} 
	\sum_{k=1}^{h}\max\{0,w_k\}\cdot\sum_{i\in A_k}\epsilon_i.
	\end{align*}
	In particular, by choosing $w_k = h^{-1/p}\cdot\text{sign}\left(\sum_{i\in 
		A_k}\epsilon_i\right)$ for all $k$, we can lower bound the above by
	\begin{align*}
	\frac{B\prod_{j=1}^{d}M_p(j)}{\gamma m\cdot h^{1/p}}\cdot 
	\E_{\boldsymbol{\epsilon}}\left[ \sum_{k=1}^{h}\max\left\{0,\sum_{i\in 
		A_k}\epsilon_i\right\}\right]~=~
	\Omega\left(
	\frac{B\prod_{j=1}^{d}M_p(j)}{\gamma m\cdot 
		h^{1/p}}\cdot\sum_{k=1}^{h}\sqrt{|A_k|}\right)~,
	\end{align*}
	and since $|A_k|\geq \lfloor m/h\rfloor$ by its definition, we get a lower 
	bound of
	\begin{equation}\label{eq:lowbound1}
	\Omega\left(
	\frac{B\prod_{j=1}^{d}M_p(j)}{\gamma m\cdot h^{1/p}}\cdot h 
	\sqrt{\frac{m}{h}}\right)
	~=~
	\Omega\left(
	\frac{B\prod_{j=1}^{d}M_p(j)\cdot 
		h^{\frac{1}{2}-\frac{1}{p}}}{\gamma\sqrt{m}}\right)~.
	\end{equation}
	An alternative bound (which is better when $p<2$) can be obtained by 
	considering the class $\Hcal'$ of real-valued neural networks over 
	$\reals$, of the form
	\[
	x\mapsto M_p(d)\cdot M_p(d-1)\cdots M_p(2)\cdot wx,
	\]
	where $|w|\leq M_p(1)$. Furthermore, supposing again that $\ell(z)=\frac{1}{\gamma}z$, and that $\bx_i=B$ for all $i$, it holds that
	\begin{align*}
	\hat{\Rcal}_m(\ell\circ \Hcal') ~&=~ \E_{\boldsymbol{\epsilon}} 
	\sup_{w:|w|\leq M_p(1)} \frac{1}{\gamma m}\sum_{i=1}^{m}\epsilon_i 
	\prod_{j=2}^{d}M_p(j)Bw\\
	&=~ \frac{B\prod_{j=1}^{d}M_p(j)}{\gamma m}\cdot 
	\E_{\boldsymbol{\epsilon}}\left|\sum_{i=1}^{m}\epsilon_i\right|
	~=~\Omega\left(\frac{B\prod_{j=1}^{d}M_p(j)}{\sqrt{m}}
	\right)~.
	\end{align*}
	Taking the best of this lower bound and the lower bound in 
	\eqref{eq:lowbound1}, the result follows.
	
	\section{Acknowledgements}
	We thank Ziwei Ji and Matus Telgarsky for pointing that an additional union 
	bound is required in the proof of \thmref{thm:depthreduc}, resulting in an 
	extra logarithmic factor, as well as Pritish Kamath for pointing out a bug in the proof of Theorem \ref{thm:lowerbound}, resulting in a slight change to the construction. Noah Golowich is grateful for research funding from Harvard University, including the Herchel Smith and PRISE fellowships.
	
	%

	\bibliographystyle{plainnat}
	\bibliography{bib}

\end{document}